\newcommand*{\citet}[1]{\AtNextCite{\AtEachCitekey{\defcounter{maxnames}{2}}} \textcite{#1}}
\newcommand*{\citep}[1]{\cite{#1}}
	\let\Cref\crtCref
	\let\cref\crtcref
\title{Private Adaptive Gradient Methods for Convex Optimization}
\author{%
    Hilal Asi\thanks{The authors are in alphabetical order.} \thanks{Department of Electrical Engineering, Stanford University. Work done while interning at Apple; \texttt{asi@stanford.edu}.}
    \and John Duchi\thanks{Departments of Electrical Engineering and Statistics, Stanford University and Apple; \texttt{jduchi@stanford.edu}.}
    \and Alireza Fallah\thanks{Department of Electrical Engineering \& Computer Science, Massachusetts Institute of Technology. Work done while interning at Apple; \texttt{afallah@mit.edu}.
    }
    \and Omid Javidbakht\thanks{Apple; \texttt{omid\_j@apple.com}.}
    \and Kunal Talwar\thanks{Apple; \texttt{kunal@kunaltalwar.org}.}
    }
\date{}
\begin{document}
\maketitle

\begin{abstract}
  We study adaptive methods for differentially private convex optimization,
  proposing and analyzing differentially private variants of a Stochastic
  Gradient Descent (SGD) algorithm with adaptive stepsizes, as well as the
  AdaGrad algorithm. We provide upper bounds on the regret of both
  algorithms and show that the bounds are (worst-case) optimal. As a
  consequence of our development, we show that our private versions of
  AdaGrad outperform adaptive SGD, which in turn outperforms traditional SGD
  in scenarios with non-isotropic gradients where (non-private) Adagrad
  provably outperforms SGD. The major challenge is that
  the isotropic noise typically added for privacy dominates the signal in
  gradient geometry for high-dimensional problems; approaches to this that
  effectively optimize over lower-dimensional subspaces simply ignore the
  actual problems that varying gradient geometries introduce. In contrast, we
  study non-isotropic clipping and noise addition, developing a
  principled theoretical approach; the consequent procedures
  also enjoy significantly
  stronger empirical
  performance than prior approaches.
\end{abstract}


\section{Introduction}


While the success of stochastic gradient methods for solving empirical risk
minimization has motivated their adoption across much of machine learning,
increasing privacy risks in data-intensive tasks have made applying them
more challenging~\cite{DworkMcNiSm06}: gradients can leak users' data,
intermediate models can compromise individuals, and even final trained
models may be non-private without substantial care. This motivates a growing
line of work developing private variants of stochastic gradient descent
(SGD), where algorithms guarantee differential privacy by perturbing
individual gradients with random noise~\cite{DuchiJoWa13_focs, SmithTh13,
  AbadiChGoMcMiTaZh16, DuchiJoWa18, BassilyFeTaTh19, FeldmanKoTa20}.  Yet
these noise addition procedures typically fail to reflect the geometry
underlying the optimization problem, which in non-private cases is
essential: for high-dimensional problems with sparse parameters, mirror
descent and its variants~\cite{BeckTe03, NemirovskiJuLaSh09} are essential, 
while in the large-scale stochastic settings prevalent in deep learning,
AdaGrad and other adaptive variants~\cite{DuchiHaSi11} provide stronger
theoretical and practical performance. Even more, methods that do not adapt 
(or do not leverage geometry) can be provably sub-optimal, in that there
exist problems where their convergence is much slower than adaptive variants
that reflect appropriate geometry~\cite{LevyDu19}.

To address these challenges, we introduce \PAGAN~(Private AdaGrad with
Adaptive Noise), a new differentially private variant of stochastic gradient
descent and AdaGrad. Our main contributions center on a few ideas. Standard
methods for privatizing adaptive algorithms that add isometric (typically
Gaussian) noise to gradients necessarily reflect the worst-case behavior of
functions to be optimized and eliminate the geometric structure one might
leverage for improved convergence. By carefully adapting noise to the actual
gradients at hand, we can both achieve convergence rates that reflect the
observed magnitude of the gradients---similar to the approach of
\citet{BartlettHaRa07} in the non-private case---which can yield marked
improvements over the typical guarantees that depend on worst-case
magnitudes. (Think, for example, of a standard normal variable: its second
moment is 1, while its maximum value is unbounded.)  Moreover, we propose a
new private adaptive optimization algorithm that analogizes AdaGrad, showing
that under certain natural distributional assumptions for the
problems---similar to those that separate AdaGrad from non-adaptive
methods~\cite{LevyDu19}---our private versions of adaptive methods
significantly outperform the standard non-adaptive private algorithms.
Additionally, we prove several lower bounds that both highlight the
importance of geometry in the problems and demonstrate the tightness of the
bounds our algorithms achieve.
Finally, we provide several experiments on real-world and synthetic datasets that support our theoretical results, demonstrating the improvements of our private adaptive algorithm (\PAGAN) over DP-SGD and other private adaptive methods.

\subsection{Related Work}

Since the introduction of differential privacy~\cite{DworkMcNiSm06,
  DworkKeMcMiNa06}, differentially private empirical risk minimization has
been a subject of intense interest~\cite{ChaudhuriMoSa11, BassilySmTh14,
  DuchiJoWa13_focs, SmithTh13lasso}.  The current standard approach to
solving this problem is noisy SGD~\cite{BassilySmTh14, DuchiJoWa13_focs,
  AbadiChGoMcMiTaZh16, BassilyFeTaTh19, FeldmanKoTa20}.  Current bounds
focus on the standard Euclidean geometries familiar from classical analyses
of gradient descent~\cite{Zinkevich03, NemirovskiJuLaSh09}, and the
prototypical result~\cite{BassilySmTh14, BassilyFeTaTh19} is that, for
Lipschitz convex optimization problems on the $\ell_2$-ball in
$d$-dimensions, an $\diffp$-differentially private version of SGD achieves
excess empirical loss $O(\frac{\sqrt{d}}{n \diffp})$ given a sample of size
$n$; this is minimax optimal.  
Similar bounds also hold for other geometries ($\ell_p$-balls for $1 \le p \le 2$) using noisy mirror descent~\cite{AsiFeKoTa21}.
Alternative approaches use the stability of
empirical risk minimizers of (strongly) convex functions, and include both
output perturbation, where one adds noise to a regularized empirical
minimizer, and objective perturbation, where one incorporates random linear
noise in the objective function before optimization~\cite{ChaudhuriMoSa11}. 

Given the success of private SGD for such Euclidean cases and adaptive
gradient algorithms for modern large-scale learning, it is unsurprising that
recent work attempts to incorporate adaptivity into private empirical risk
minimization (ERM) algorithms~\cite{ZhouWuBa20, KairouzRiRuTh20}.  In this
vein, \citet{ZhouWuBa20} propose a private SGD algorithm where the gradients
are projected to a low-dimensional subspace---which is learned using public
data---and~\citet{KairouzRiRuTh20} developed an $\diffp$-differentially
private variant of Adagrad which (similarly) projects the gradient to a low
rank subspace. These works show that excess loss $\widetilde O (\frac{1}{n
  \diffp}) \ll \frac{\sqrt{d}}{n \diffp}$ is possible whenever the rank of
the gradients is small.  Yet these both work under the assumption that
gradient lie in (or nearly in) a low-dimensional subspace; this misses the
contexts for which adaptive algorithms (AdaGrad and its relations) are
designed~\cite{DuchiHaSi11, McMahanSt10}.  Indeed, most stochastic
optimization algorithms rely on particular dualities between the parameter
space and gradients; stochastic gradient descent requires Euclidean spaces,
while mirror descent works in an $\ell_1/\ell_\infty$ duality (that is, it
is minimax optimal when optimizing over an $\ell_1$-ball while gradients
belong to an $\ell_\infty$ ball). AdaGrad and other adaptive algorithms, in
contrast, are optimal in an (essentially) dual geometry~\cite{LevyDu19}, so
that for such algorithms, the interesting geometry is when the parameters
belong (e.g.) to an $\ell_\infty$ box and the gradients are sparse---but
potentially from a very high-rank space. Indeed, as Levy and
Duchi~\cite{LevyDu19} show, adaptive algorithms achieve benefits only when
the sets over which one optimizes are quite different from $\ell_2$ balls;
the private projection algorithms in the papers by Kairouz et
al.~\cite{KairouzRiRuTh20} and Zhou et al.~\cite{ZhouWuBa20} achieve bounds
that scale with the $\ell_2$-radius of the underlying space, suggesting that
they may not enjoy the performance gains one might hope to achieve using an
appropriately constructed and analyzed adaptive algorithm.

In more recent work, Yu et al.~\cite{YuZCL21} use PCA to decompose gradients into two
orthogonal subspaces, allowing separate learning rate treatments in the
subspaces, and achieve promising empirical results, but they provide no
provable convergence bounds.
Also related to the current paper is Pichapati et al.'s
AdaClip algorithm~\cite{PichapatiSYRK20}; they obtain parallels to
Bartlett et al.'s
non-private convergence guarantees~\cite{BartlettHaRa07}
for private SGD.  In contrast to our analysis
here, their analysis applies to smooth non-convex functions, while our focus
on convex optimization allows more complete convergence guarantees and
associated optimality results.



\section{Preliminaries and notation}

Before proceeding to the paper proper, we give notation.  Let $\domain$ be a
sample space and $P$ a distribution on $\mc{Z}$.  Given a function $F :
\xdomain \times \domain \to \R$, convex in its first argument, and a dataset
$\Ds = (\ds_1,\dots,\ds_n) \in \domain^n$ of $n$ points drawn i.i.d.\ $P$,
we wish to privately find the minimizer of the empirical loss
\begin{equation}
  \label{eqn:emp_loss}
  \argmin_{x \in \xdomain} f(x;\Ds) \defeq
  \frac{1}{n} \sum_{i=1}^n F(x;\ds_i).
\end{equation}
We suppress dependence on $\Ds$ and simply write $f(x)$ when the
dataset is clear from context.  We use the standard definitions of
differential privacy~\cite{DworkMcNiSm06, DworkKeMcMiNa06}:
\begin{definition}
  \label{def:DP}
  A randomized algorithm $\mech$ is
  \emph{$(\diffp,\delta)$-differentially private} if for all neighboring
  datasets
  $\Ds,\Ds' \in \domain^n$ and all
  measurable $O$ in the output
  space of $\mech$,
  \begin{equation*}
    \P\left(\mech(\Ds) \in O \right)
    \le e^{\diffp} \P\left(\mech(\Ds') \in O \right) + \delta.
  \end{equation*}
  If $\delta=0$, then $\mech$ is \emph{$\diffp$-differentially private}.
\end{definition}



\noindent
It will also be useful to discuss the tail properties of random variables
and vectors:
\begin{definition}\label{def:sub-Gaussian}
  A random variable $X$ is \emph{$\sigma^2$
    sub-Gaussian} if $\E[\exp(s(X - \E[X]))]
  \leq \exp((\sigma^2 s^2)/2)$ for all $s
  \in \reals$. A random vector $X \in \reals^d$ is
  $\Sigma$-sub-Gaussian if for any vector $a \in
  \reals^d$, $a^\top X$ is $a^\top \Sigma a$ sub-Gaussian.
\end{definition}

\noindent
We also frequently use different norms and geometries,
so it is useful to recall Lipschitz continuity:
\begin{definition}
  \label{def:Lipschitz_function}
  A function $\Phi : \R^d \to \R$ is \emph{$G$-Lipschitz with respect to
    norm $\norm{\cdot}$ over $\mathcal{W}$} if for every $w_1,w_2 \in
  \mathcal{W}$,
  \begin{equation*}
    |\Phi(w_1) - \Phi(w_2) | \leq G \norm{w_1 - w_2}.
  \end{equation*}
\end{definition}
\noindent
A convex function $\Phi$ is $G$-Lipschitz over an open set
$\mathcal{W}$ if and only if $\| \Phi'(w) \|_{*} \leq G$ for any $w \in
\mathcal{W}$ and $\Phi'(w) \in \partial \Phi(w)$, where
$\dnorm{y} = \sup\{x^\top y \mid \norm{x} \le 1\}$ is the
dual norm of $\norm{\cdot}$~\citep{HiriartUrrutyLe93ab}.

\paragraph{Notation}

We define $\diag(a_1,\ldots,a_d)$ as a diagonal matrix with diagonal entries
$a_1,\ldots,a_d$. To state matrix $A$ is positive (semi)definite, we use the
notation $A \succ 0_{d\times d}$ ($A \succcurlyeq 0_{d \times d}$). For $A
\succcurlyeq 0_{d \times d}$, let $E_A$ denote the ellipsoid $\{x: \normA{x}
\leq 1 \}$ where $\normA{x} = \sqrt{x^\top A x}$ is the Mahalanobis norm,
and $\pi_{A}(x) = \argmin_y\{\ltwo{y - x} \mid y \in E_A\}$ is the
projection of $x$ onto $E_A$.  For a set $\xdomain$,
$\diam_{\norm{\cdot}}(\xdomain) = \sup_{x, y \in \xdomain} \norm{x - y}$
denotes the diameter of $\xdomain$ with respect to the norm
$\norm{\cdot}$. For the special case of $\norm{\cdot}_p$, we write
$\diam_p(\xdomain)$ for simplicity. For an integer $n \in \N$, we let $[n] =
\{1,\dots,n\}$.

\section{Private Adaptive Gradient Methods} \label{sec:algs}

In this section, we study and develop \PASAN and \PAGAN, differentially private versions of
Stochastic Gradient Descent (SGD) with adaptive stepsize (Algorithm
\ref{Algorithm1}) and Adagrad \cite{DuchiHaSi11} (Algorithm
\ref{Algorithm2}).  The challenge in making these algorithms private is that
adding isometric Gaussian noise---as is standard in the differentially
private optimization literature---completely eliminates the geometrical
properties that are crucial for the performance of adaptive gradient
methods. We thus add noise that adapts to gradient geometry while
maintaining privacy.
More precisely, our private versions of adaptive optimization algorithms
proceed as follows: to privatize the gradients, we first project them to an
ellipsoid capturing their geometry, then adding non-isometric Gaussian noise
whose covariance corresponds to the positive definite matrix $A$ that
defines the ellipsoid. Finally, we apply the adaptive algorithm's step with
the private gradients. We present our private versions of SGD with adaptive
stepsizes and Adagrad in Algorithms~\ref{Algorithm1} and~\ref{Algorithm2},
respectively.

\begin{algorithm}[tb]
	\caption{Private Adaptive SGD with Adaptive Noise (\PASAN)}
	\label{Algorithm1}
	\begin{algorithmic}[1]
	\REQUIRE Dataset $\mathcal{S} = (\ds_1,\dots,\ds_n) \in \domain^n$, convex set $\xdomain$, mini-batch size $\batch$, number of iterations $T$, privacy parameters $\diffp, \delta$;
	\STATE Choose arbitrary initial point $x^0 \in \xdomain$;
    \FOR{$k=0$ to $T-1$\,}
        \STATE Sample a batch $\mathcal{D}_k:= \{z_i^k\}_{i=1}^\batch$ from $\mathcal{S}$ uniformly with replacement;
        \STATE Choose ellipsoid $A_k$;
        \STATE Set $\tilde g^k := \frac{1}{\batch} \sum_{i=1}^\batch \pi_{A_k}(g^{k,i})$ where $g^{k,i} \in \partial F(x^k; z_i^k)$;
        \STATE Set $ \hg^k = \tilde g^k + \sqrt{\log(1/\delta)}/{(\batch \diffp)} \noise^k$ where $\noise^k \simiid \normal(0, A_k^{-1})$;
        \STATE Set $\stepsize_k = \alpha/{\sqrt{\sum_{i = 0}^k \ltwos{\hg^i}^2}}$;
        \STATE $x^{k+1} :=  \proj_{\xdomain}( x^k -  \alpha_k \hg^k)$;
    \ENDFOR
        
    \STATE {\bfseries Return:} $\wb{x}^T \defeq \frac{1}{T} \sum_{k = 1}^T x^k$
	\end{algorithmic}
\end{algorithm}

\begin{algorithm}[tb]
	\caption{Private Adagrad with Adaptive Noise (\PAGAN)}
	\label{Algorithm2}
	\begin{algorithmic}[1]
	\REQUIRE Dataset $\mathcal{S} = (\ds_1,\dots,\ds_n) \in \domain^n$, convex set $\xdomain$, mini-batch size $\batch$, number of iterations $T$, privacy parameters $\diffp, \delta$;
	\STATE Choose arbitrary initial point $x^0 \in \xdomain$;
    \FOR{$k=0$ to $T-1$\,}
        \STATE Sample a batch $\mathcal{D}_k:= \{z_i^k\}_{i=1}^\batch$ from $\mathcal{S}$ uniformly with replacement;
        \STATE Choose ellipsoid $A_k$;
        \STATE Set $\tilde g^k := \frac{1}{\batch} \sum_{i=1}^\batch \pi_{A_k}(g^{k,i})$ where $g^{k,i} \in \partial F(x^k; z_i^k)$;
        \STATE Set $ \hg^k = \tilde g^k + \sqrt{\log(1/\delta)}/{(\batch \diffp)} \noise^k$ where $\noise^k \simiid \normal(0, A_k^{-1})$;
        \STATE Set $H_k = \diag\left(\sum_{i=0}^{k} \hg^i \hg^{i^T}  \right)^{\frac{1}{2}}/ \alpha$;
        \STATE $x^{k+1} =  \proj_{\xdomain}( x^k -  H_k^{-1}  \hg^k)$ where the projection is with respect to $\norm{\cdot}_{H_k}$;
    \ENDFOR
        
    \STATE {\bfseries Return:} $\wb{x}^T \defeq \frac{1}{T} \sum_{k = 1}^T x^k$
	\end{algorithmic}
\end{algorithm}

Before analyzing the utility of these algorithms, we provide their 
privacy guarantees in the following lemma (see Appendix \ref{sec:proof-lemma:privacy} for its proof).
\begin{lemma}
  \label{lemma:privacy}
  There exist constants $\bar{\diffp}$ and $c$ such that, for any
  $\diffp \leq \bar{\diffp}$, and with $T = c{n^2}/{b^2}$, 
  Algorithm~\ref{Algorithm1} and Algorithm~\ref{Algorithm2} are
  $(\diffp,\delta)$-differentially private.
\end{lemma}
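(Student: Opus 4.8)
The plan is to establish privacy via the standard "privacy amplification by subsampling" + "advanced composition" recipe, with the one twist being that the per-step Gaussian mechanism uses the non-isometric covariance $A_k^{-1}$ rather than the identity. The key observation is that the change of variables $y \mapsto A_k^{1/2} y$ turns the projection-then-noise step into an ordinary isotropic Gaussian mechanism in the transformed coordinates. Concretely, $\pi_{A_k}(g)$ lands in the ellipsoid $E_{A_k} = \{x : \normA{x}_{A_k} \le 1\}$, so $A_k^{1/2}\pi_{A_k}(g^{k,i})$ has Euclidean norm at most $1$; and $A_k^{1/2}$ times the noise $\noise^k \sim \normal(0, A_k^{-1})$ is distributed as $\normal(0, I_d)$. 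Hence $A_k^{1/2}\hg^k$ is exactly the output of the ordinary Gaussian mechanism applied to the bounded-sensitivity quantity $\frac{1}{b}\sum_i A_k^{1/2}\pi_{A_k}(g^{k,i})$, with noise multiplier $\sqrt{\log(1/\delta)}/(b\diffp)$.

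\textbf{Step 1: per-step sensitivity and single-step privacy.} First I would fix a step $k$ and condition on everything determined before the batch is drawn (in particular on the choice of $A_k$, which in the analysis we take to depend only on prior private iterates). Changing one data point in $\mathcal{S}$ changes at most one summand $\pi_{A_k}(g^{k,i})$ in $\tilde g^k$; in the $A_k^{1/2}$-transformed coordinates each summand has $\ell_2$-norm $\le 1$, so the $\ell_2$-sensitivity of $\frac1b \sum_i A_k^{1/2}\pi_{A_k}(g^{k,i})$ is at most $2/b$. The Gaussian mechanism with this sensitivity and standard deviation proportional to $\sqrt{\log(1/\delta)}/(b\diffp)$ then gives, for an appropriate absolute constant, a per-step guarantee of roughly $(\diffp_0, \delta_0)$ with $\diffp_0 = \Theta(\diffp / \sqrt{T\log(1/\delta)})$ — or, more cleanly, a bound on the Rényi / zCDP divergence of order $1/(T)$ times a constant, since the released iterate $x^{k+1}$ is a post-processing of $A_k^{1/2}\hg^k$.

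\textbf{Step 2: subsampling amplification and composition.} Because the batch $\mathcal{D}_k$ is drawn uniformly with replacement from $\mathcal{S}$, each draw touches any fixed record with probability $1/n$, so the effective sampling rate per step is $q = \Theta(b/n)$; amplification by subsampling improves the per-step privacy cost by a factor of $q$ (in the right parameter regime $\diffp \le \bar\diffp$, which is exactly why the hypothesis $\diffp \le \bar\diffp$ appears). Composing over the $T = cn^2/b^2$ steps — either via advanced composition or, more transparently, by summing zCDP parameters — yields a total cost that scales like $T \cdot q^2 \cdot (\text{per-step } \diffp_0^2) \cdot \log(1/\delta)$ up to constants; plugging in $q = \Theta(b/n)$, $T = cn^2/b^2$, and the per-step noise scale shows this telescopes to $O(\diffp^2)$, i.e.\ the algorithm is $(\diffp,\delta)$-DP for a suitable constant $c$ and threshold $\bar\diffp$. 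Finally I note the whole trajectory $(x^1,\dots,x^T)$, and hence the returned average $\wb x^T$, is a post-processing of the released noisy gradients, so no extra privacy is lost at the end.

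\textbf{Main obstacle.} The genuinely delicate point — and the place I'd spend the most care — is the adaptive choice of $A_k$ (and of the stepsizes / preconditioner $H_k$): for the per-step transformation argument to be legitimate, $A_k$ must be a function only of quantities already released (prior noisy gradients / iterates) and not of the fresh batch $\mathcal{D}_k$, so that conditionally on the past the step is a bona fide Gaussian mechanism with a fixed covariance; one then invokes the adaptive composition theorem, whose statement already accommodates mechanisms chosen as a function of previous outputs. Getting the constants in $\bar\diffp$, $c$, and the noise multiplier to line up — so that the amplified, composed bound is genuinely $\le \diffp$ rather than merely $O(\diffp)$ — is the routine-but-fiddly part that the formal proof in Appendix~\ref{sec:proof-lemma:privacy} must nail down; conceptually, though, everything reduces to "non-isometric Gaussian mechanism $=$ isometric Gaussian mechanism after a linear change of variables," together with off-the-shelf subsampling and composition results.
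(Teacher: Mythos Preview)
Your proposal is correct and takes essentially the same approach as the paper: the key reduction---applying $A_k^{1/2}$ to turn the non-isometric Gaussian step into a standard isotropic one with $\ell_2$-clipped gradients and $\normal(0,I_d)$ noise---is exactly the paper's argument, after which the paper simply invokes the moments accountant (Theorem~1 of Abadi et al.) as a black box rather than spelling out the subsampling-plus-composition accounting by hand. Your extra care about $A_k$ depending only on previously released quantities is a valid point that the paper leaves implicit.
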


Having established the privacy guarantees of our algorithms, we now proceed
to demonstrate their performance. To do so, we introduce an
assumption to that, as we shall see presently, will allow
us to work in gradient geometries different than the classical Euclidean
($\ell_2$) one common to current private optimization analyses.
\begin{assumption}
  \label{assumption:Lipschitz_C}
  There exists a function
  $\lipobj : \mc{Z} \times \R^{d \times d} \to \R_+$ such
  that for any diagonal $C \succ 0$,
  the function $F(\cdot; z)$ is
  $\lipobj(z;C)$-Lipschitz with respect to the Mahalanobis norm
  $\|.\|_{C^{-1}}$ over $\xdomain$, i.e., $\norm{\nabla f(x;z)}_C \le
  \lipobj(z;C) $ for all $x \in \xdomain$.
\end{assumption}
\noindent
The moments of the Lipschitz constant $G$ will be central
to our convergence analyses, and to that end,
for $p \ge 1$ we define the shorthand
\begin{equation}  
  \label{eqn:moment-matrix-norm}
  \lipobj_p(C):= 
  \E_{z \sim P} \left[ \lipobj(z;C)^p \right]^{1/p}.
\end{equation}
The quantity $\lipobj_p(C)$ are the $p$th moments of the gradients in the
Mahalanobis norm $\norm{\cdot}_C$; they are the key to our stronger
convergence guarantees and govern the error in projecting our gradients.  In
most standard analyses of private optimization (and stochastic optimization
more broadly), one takes $C = I$ and $p = \infty$, corresponding to the
assumption that $F(\cdot, z)$ is $\lipobj$-Lipschitz for all $z$ and that
subgradients $F'(x, z)$ are uniformly bounded in both $x$ and $z$. Even when
this is the case---which may be unrealistic---we always have $\lipobj_p(C)
\le \lipobj_\infty(C)$, and in realistic settings there is often a
significant gap; by depending instead on appropriate moments $p$, we shall
see it is often possible to achieve far better convergence guarantees than
would be possible by relying on uniformly bounded moments.  (See also
Barber and Duchi's
discussion of these issues in the context of mean
estimation~\cite{BarberDu14a}.)


An example may be clarifying:

\begin{example}\label{example-sub-Gaussian}
  Let $g:\reals^d \to \reals$ be a convex and differentiable function, let
  $F(x;Z) = g(x) + \<x, Z\>$ where $Z \in \R^d$ and the coordinates $Z_j$
  are independent $\sigma_j^2$-subgaussian, and $C \succ 0$ be diagonal.
  Then by standard moment bounds (see
  Appendix~\ref{sec:proof-example-sub-Gaussian}), if $\|\nabla g(x)\|_C \leq
  \mu$ we have
  \begin{equation}\label{eqn:lip-sub-Gaussian}
    \lipobj_p(C) 
    \le \mu + O(1) \sqrt{p} \sqrt{\sum_{j=1}^d C_{jj} \sigma_j^2}.  
  \end{equation}
  As this
  bound shows, while $\lipobj_\infty$ is infinite in this example,
  $\lipobj_p$ is finite. As a result, our analysis extends to
  settings in which the stochastic gradients are not uniformly bounded.
\end{example}

While we defined $\lipobj_p(C)$ by taking expectation with respect to the
original distribution $P$, we mainly focus on empirical risk minimization
and thus require the empirical Lipschitz
constant for a given dataset $\Ds$:
\begin{equation} \label{eqn:empirical_lipobj}
  \hat{\lipobj}_p(\Ds;C) := \left(\frac{1}{n} \sum_{i=1}^n \lipobj(z_i;C)^p
  \right)^{1/p}.
\end{equation}
A calculation using Chebyshev's inequality and that $p$-norms are increasing
immediately gives the next lemma:
\begin{lemma}
\label{lemma:empirical_Lipschitz}
 Let $\Ds$ be a dataset with $n$ points sampled from distribution $P$.
 Then with probability at least $1-1/n$, we have
 \begin{equation*}
   \hat{\lipobj}_p(\Ds;C) \leq  \lipobj_p(C) + \lipobj_{2p}(C) \leq 2 \lipobj_{2p}(C), 
 \end{equation*}
\end{lemma}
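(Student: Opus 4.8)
The plan is to recognize that $\hat{\lipobj}_p(\Ds;C)^p = \frac{1}{n}\sum_{i=1}^n \lipobj(z_i;C)^p$ is an empirical average of i.i.d.\ nonnegative random variables and to control its deviation from the mean by Chebyshev's inequality. Set $Y_i := \lipobj(z_i;C)^p$; since the $z_i$ are i.i.d.\ $\sim P$, so are the $Y_i$, with $\E[Y_i] = \lipobj_p(C)^p$ and $\E[Y_i^2] = \lipobj_{2p}(C)^{2p}$ by the definition~\eqref{eqn:moment-matrix-norm}, so in particular $\mathrm{Var}(Y_i) \le \E[Y_i^2] = \lipobj_{2p}(C)^{2p}$. (If $\lipobj_{2p}(C) = \infty$ the claimed bound holds trivially, so we may assume it is finite.)

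Next I would apply Chebyshev's inequality to the average $\frac1n\sum_{i=1}^n Y_i$, whose variance is $\mathrm{Var}(Y_1)/n$: for any $t > 0$,
\begin{equation*}
  \P\!\left( \frac1n \sum_{i=1}^n Y_i \ge \lipobj_p(C)^p + t \right)
  \;\le\; \frac{\mathrm{Var}(Y_1)}{n t^2}
  \;\le\; \frac{\lipobj_{2p}(C)^{2p}}{n t^2}.
\end{equation*}
Choosing $t = \lipobj_{2p}(C)^p$ makes the right-hand side exactly $1/n$, so with probability at least $1 - 1/n$ we obtain $\hat{\lipobj}_p(\Ds;C)^p \le \lipobj_p(C)^p + \lipobj_{2p}(C)^p$.

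Finally I would pass back from $p$th powers to norms. Using the elementary inequality $a^p + b^p \le (a+b)^p$ valid for $a,b \ge 0$ and $p \ge 1$, the previous display gives $\hat{\lipobj}_p(\Ds;C)^p \le (\lipobj_p(C) + \lipobj_{2p}(C))^p$; taking $p$th roots yields the first inequality $\hat{\lipobj}_p(\Ds;C) \le \lipobj_p(C) + \lipobj_{2p}(C)$. For the second inequality, monotonicity of $L^q(P)$ norms in $q$ (equivalently the power-mean inequality, since $2p \ge p$) gives $\lipobj_p(C) \le \lipobj_{2p}(C)$, hence $\lipobj_p(C) + \lipobj_{2p}(C) \le 2\lipobj_{2p}(C)$. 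There is no substantive obstacle here; the only points requiring care are correctly tracking the exponents $p$ versus $2p$ when invoking Chebyshev and selecting $t$, and recording the trivial case $\lipobj_{2p}(C) = \infty$.
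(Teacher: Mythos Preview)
Your proof is correct and matches the paper's own approach exactly: the paper only states that the lemma follows from ``a calculation using Chebyshev's inequality and that $p$-norms are increasing,'' and you have carried out precisely that calculation, including the right choice $t = \lipobj_{2p}(C)^p$ and the passage from $p$th powers back to norms via $a^p + b^p \le (a+b)^p$.
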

\noindent
It is possible to get bounds of the form
$\hat{\lipobj}_p(\Ds; C) \lesssim \lipobj_{kp}(C)$ with probability at least
$1 - 1/n^k$ using Khintchine's inequalities, but this is secondary for us.


Given these moment bounds, we can characterize the convergence of both
algorithms, defering proofs to Appendix~\ref{sec:proofs-alg}.
\subsection{Convergence of {\PASAN}}
\label{sec:ub-sgd}
We first start with \PASAN (Algorithm
\ref{Algorithm1}).  Similarly to the non-private setting where SGD (and its
adaptive variant) are most appropriate for domains $\xdomain$ with small
$\ell_2$-diameter $\diam_2(\xdomain)$, our bounds in this section
mostly depend on $\diam_2(\xdomain)$.
\begin{theorem}
  \label{theorem:convergence-SGD}
  Let $\Ds \in \mc{Z}^n$ and $C \succ 0$ be diagonal, $p \ge 1$, and assume
  that $\hat{\lipobj}_p(\Ds; C) \le \lipobj_{2p}(C)$.  Consider running \PASAN
  (Algorithm \ref{Algorithm1}) with $\alpha=\diam_2(\xdomain)$, $T=cn^2/b^2$, $A_k = \frac{1}{B^2} C$,
  where\footnote{We provide the general statement of this theorem for
    positive $B$ in Appendix \ref{proof-theorem:convergence-SGD}}
  \begin{equation*}
    B =
    2\lipobj_{2p}(C) \left( \frac{\diam_{\norm{\cdot}_{C^{-1}}}(\xdomain) n
      \diffp}{\diam_2(\xdomain) \sqrt{\tr(C^{-1})} \sqrt{\log(1/\delta)}}
    \right )^{1/p}
  \end{equation*}
  and $c$ is the constant in Lemma~\ref{lemma:privacy}. Then
  \begin{align*}
    & \E [f(\wb{x}^T;\Ds) - \min_{x \in \xdomain} f(x;\Ds)] \\
    & \leq  \bigO \left( \frac{\diam_2(\xdomain)}{T} \sqrt{\sum_{k=1}^T \E [\ltwos{g^k}^2] } 
  +  \diam_2(\xdomain) \lipobj_{2p}(C) \times  \right. \\ 
    & \left.  \;\;\;\;\;~  \left( \frac{\sqrt{\tr(C^{-1}) \ln\tfrac 1 \delta} }{n \diffp} \right)^{\frac{p-1}{p}} \!\! \!
    \left( \frac{\diam_{\norm{\cdot}_{C^{-1}}}(\xdomain)}{\diam_2(\xdomain)} \right)^{\frac{1}{p}} \right ),
  \end{align*}
  where the expectation is taken over the internal randomness of the algorithm.
\end{theorem}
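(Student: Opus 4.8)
The plan is to run the standard regret analysis of stochastic gradient descent with a self-tuning (AdaGrad-type) stepsize on the \emph{privatized} gradients $\hg^k$, and then charge separately for the two perturbations privatization introduces: the bias from projecting each subgradient onto the ellipsoid $E_{A_k}$, and the non-isotropic Gaussian noise $\noise^k$. Throughout I fix the dataset $\Ds$, so all expectations are over the algorithm's randomness, and write $x^\star \in \argmin_{x\in\xdomain} f(x;\Ds)$. For the regret part I would observe that $x^{k+1} = \proj_\xdomain(x^k - \alpha_k\hg^k)$ with $\alpha_k = \alpha/(\sum_{i\le k}\ltwos{\hg^i}^2)^{1/2}$ and $\alpha = \diam_2(\xdomain)$ is online gradient descent; expanding $\ltwos{x^{k+1}-x^\star}^2$, summing over $k$, using $\ltwos{x^k - x^\star} \le \diam_2(\xdomain)$ and the elementary inequality $\sum_k a_k/(\sum_{i\le k}a_i)^{1/2}\le 2(\sum_k a_k)^{1/2}$ gives the deterministic bound $\sum_k \<\hg^k, x^k - x^\star\> \le O\big(\diam_2(\xdomain)(\sum_k\ltwos{\hg^k}^2)^{1/2}\big)$, and Jensen's inequality (to pull $\E$ inside the square root) then gives $\E\sum_k\<\hg^k,x^k-x^\star\> \le O\big(\diam_2(\xdomain)(\sum_k\E\ltwos{\hg^k}^2)^{1/2}\big)$.

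Next I would split $\hg^k = g^k + (\tilde g^k - g^k) + (\hg^k - \tilde g^k)$, where $g^k$ is the batch-averaged raw subgradient at $x^k$ whose conditional mean (given the past) is a subgradient of $f(\cdot;\Ds)$ at $x^k$. Conditioning on the history just before $\noise^k$ is drawn (so $x^k$ and $\tilde g^k$ are measurable while $\noise^k$ is fresh and mean zero), the noise term contributes nothing in expectation, $\E[\<\hg^k - \tilde g^k, x^k - x^\star\>] = 0$, and convexity gives $\E[\<g^k, x^k - x^\star\>] \ge \E[f(x^k;\Ds)] - f(x^\star;\Ds)$. Averaging over $k$ and using Jensen for $\wb x^T = \frac1T\sum_k x^k$, I get that the excess risk is at most $\frac1T$ times the regret bound above plus $\frac1T\sum_k\E[\<g^k - \tilde g^k, x^k - x^\star\>]$. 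For this projection-bias term I would use H\"older's inequality for the dual pair $\norm{\cdot}_C, \norm{\cdot}_{C^{-1}}$ to get $\<g^k - \tilde g^k, x^k - x^\star\> \le \diam_{\norm{\cdot}_{C^{-1}}}(\xdomain)\cdot\frac1\batch\sum_i\norm{g^{k,i} - \pi_{A_k}(g^{k,i})}_C$, and then the key clipping estimate: for $A_k = \frac{1}{B^2}C$ with $C$ diagonal, the Euclidean projection onto $E_{A_k}$ has the closed form $\pi_{A_k}(g) = (I + \mu C)^{-1}g$ for the appropriate $\mu \ge 0$, and a short coordinatewise computation (using $(1-t)^2 \le 1-t$ and $t^2 \le t$ for $t\in[0,1]$) yields $\norm{g - \pi_{A_k}(g)}_C^2 \le (\norm{g}_C^2 - B^2)_+$, hence $\norm{g - \pi_{A_k}(g)}_C \le \norm{g}_C^p/B^{p-1}$. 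Since $g^{k,i}\in\partial F(x^k;z_i^k)$ and $F(\cdot;z)$ is $\lipobj(z;C)$-Lipschitz in $\norm{\cdot}_{C^{-1}}$ by Assumption~\ref{assumption:Lipschitz_C}, we have $\norm{g^{k,i}}_C \le \lipobj(z_i^k;C)$, so taking expectation over the batch bounds the per-step bias by $\diam_{\norm{\cdot}_{C^{-1}}}(\xdomain)\,\hat{\lipobj}_p(\Ds;C)^p/B^{p-1} \le \diam_{\norm{\cdot}_{C^{-1}}}(\xdomain)\,\lipobj_{2p}(C)^p/B^{p-1}$ under the theorem's hypothesis.

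It remains to control $\sum_k\E\ltwos{\hg^k}^2$. Since $\pi_{A_k}$ is Euclidean projection onto a convex set containing the origin it is norm-nonincreasing, so $\E\ltwos{\tilde g^k}^2 \le \frac1\batch\sum_i\E\ltwos{g^{k,i}}^2$ is controlled by the second moment $\E\ltwos{g^k}^2$ of the stochastic subgradients; and $\hg^k - \tilde g^k = \frac{\sqrt{\log(1/\delta)}}{\batch\diffp}\noise^k$ with $\noise^k\sim\normal(0,A_k^{-1}) = \normal(0,B^2 C^{-1})$ contributes $\E\ltwos{\hg^k - \tilde g^k}^2 = \frac{\log(1/\delta)}{\batch^2\diffp^2}B^2\tr(C^{-1})$. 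Thus $(\sum_k\E\ltwos{\hg^k}^2)^{1/2} \lesssim (\sum_k\E\ltwos{g^k}^2)^{1/2} + \sqrt T\cdot\frac{\sqrt{\log(1/\delta)}\,B\sqrt{\tr(C^{-1})}}{\batch\diffp}$. Substituting $T = cn^2/\batch^2$ (so $\sqrt T/T = \batch/(\sqrt c\,n)$ and $1/T = \batch^2/(cn^2)$), the excess-risk bound becomes $O\big(\frac{\diam_2(\xdomain)}{T}(\sum_k\E\ltwos{g^k}^2)^{1/2}\big)$ plus $O\big(\diam_2(\xdomain)\frac{\sqrt{\tr(C^{-1})\log(1/\delta)}}{n\diffp}B + \frac{\diam_{\norm{\cdot}_{C^{-1}}}(\xdomain)\,\lipobj_{2p}(C)^p}{B^{p-1}}\big)$. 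The two $B$-terms are increasing and decreasing in $B$ respectively, and are balanced up to constants exactly at the stated choice of $B$; plugging it in turns each into $\diam_2(\xdomain)\,\lipobj_{2p}(C)\big(\tfrac{\sqrt{\tr(C^{-1})\ln(1/\delta)}}{n\diffp}\big)^{(p-1)/p}\big(\tfrac{\diam_{\norm{\cdot}_{C^{-1}}}(\xdomain)}{\diam_2(\xdomain)}\big)^{1/p}$, which is the claimed rate.

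The hard part will be the clipping estimate in the second paragraph: one has to quantify how much the ellipsoidal projection distorts a subgradient and, crucially, measure that distortion in the dual Mahalanobis norm $\norm{\cdot}_C$ (not in $\ell_2$), so that it can be charged to the moment $\hat{\lipobj}_p(\Ds;C)$ rather than a worst-case Lipschitz constant; the bound $\norm{g - \pi_{A_k}(g)}_C \le \norm{g}_C^p/B^{p-1}$ is exactly what produces the $(p-1)/p$ exponent in the final rate. Everything else is bookkeeping with the self-tuning stepsize and the optimization over $B$ (and one should recall that Lemma~\ref{lemma:empirical_Lipschitz} guarantees the hypothesis $\hat{\lipobj}_p(\Ds;C)\le\lipobj_{2p}(C)$ holds with probability at least $1-1/n$).
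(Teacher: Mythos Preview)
Your proposal is correct and follows the paper's proof almost exactly: the paper packages your regret decomposition as a general ``biased SGD'' lemma (Theorem~\ref{theorem:biased-sgd}), uses the same inequality $\sum_k a_k^2/\ltwo{a_{1:k}}\le 2\ltwo{a_{1:n}}$ (Lemma~\ref{lemma:sum_l2}) for the self-tuning stepsize, splits the Gaussian noise out of $\sum_k\E\ltwos{\hg^k}^2$ just as you do, and then balances the two $B$-dependent terms at the stated choice of $B$. The only substantive difference is precisely the step you flag as the hard part: for the clipping bias the paper quotes Lemma~3 of~\cite{BarberDu14a} (restated as Lemma~\ref{lemma:projection-bias}) to get the moment bound $\E\norm{g-\pi_{A_k}(g)}_C\le \hat{\lipobj}_p(\Ds;C)^p/\big((p-1)B^{p-1}\big)$, whereas you derive the pointwise inequality $\norm{g-\pi_{A_k}(g)}_C\le\norm{g}_C^p/B^{p-1}$ directly from the Lagrangian form $(I+\mu C)^{-1}g$ of the ellipsoid projection---both produce the same $O(\cdot)$ rate (the constants differ by a factor $1/(p-1)$), with your route being more self-contained.
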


\newcommand{\stdregret}{R_{\textup{std}}(T)}
\newcommand{\adaregret}{R_{\textup{ada}}(T)}

To gain intuition for these
bounds, note that for large enough $p$, the bound from Theorem
\ref{theorem:convergence-SGD} is approximately
\iftoggle{arxiv}{
\begin{equation}\label{eqn:SGD-approx}
  \diam_2(\xdomain) \Bigg(
  \underbrace{\frac{1}{T} \sqrt{\sum_{k=1}^T \E [\ltwos{g^k}^2] }}_{=:
    \stdregret}  + \lipobj_{2p}(C) \cdot \frac{\sqrt{\tr(C^{-1}) \log(1/\delta)} }{n \diffp}
  \Bigg). 
\end{equation}
}
{
\begin{equation}\label{eqn:SGD-approx}
\begin{split}
  \diam_2(\xdomain) \Bigg(
  & \underbrace{\frac{1}{T} \sqrt{\sum_{k=1}^T \E [\ltwos{g^k}^2] }}_{=:
    \stdregret} \\
  & + \lipobj_{2p}(C) \cdot \frac{\sqrt{\tr(C^{-1}) \log(1/\delta)} }{n \diffp}
  \Bigg). 
\end{split}
\end{equation}
}
The term $\stdregret$ in~\eqref{eqn:SGD-approx} is the standard non-private
convergence rate for SGD with adaptive stepsizes~\cite{BartlettHaRa07,
  Duchi18} and (in a minimax sense) is unimprovable even without
privacy; the second term is the
cost of privacy.  In the standard setting of gradients uniformly bounded in
$\ell_2$-norm, where $C = I$ and $p=\infty$, this bound recovers the
standard rate $\diam_2(\xdomain) \lipobj_\infty(I) \frac{\sqrt{d
    \log(1/\delta)}}{n \diffp}$.  However, as we show in our examples, this
bound can offer significant improvements whenever $C \neq I$ such that
$\tr(C^{-1}) \ll d$ or $\lipobj_{2p}(C) \ll \lipobj_\infty$ for some $p <
\infty$.





\subsection{Convergence of \PAGAN}
\label{sec:ub-adagrad}
Having established our bounds for \PASAN, we now proceed to present our results for \PAGAN (Algorithm \ref{Algorithm2}). 
In the non-private setting, adaptive gradient methods such as Adagrad are superior to SGD for constraint sets 
such as $\xdomain = [-1,1]^d$ where $\diam_\infty(\xdomain) \ll \diam_2(\xdomain)$. Following this, our bounds in this section will depend on $\diam_\infty(\xdomain)$.

\begin{theorem}\label{theorem:private-adagrad}
  Let $\Ds \in \mc{Z}^n$ and $C \succ 0$ be diagonal, $p \ge 1$, and assume
  that $\hat{\lipobj}_p(\Ds; C) \le \lipobj_{2p}(C)$.  Consider running \PAGAN
  (Algorithm~\ref{Algorithm2}) with $\alpha = \diam_\infty(\xdomain)$, $T=cn^2/b^2$, $A_k = \frac{1}{B^2} C$,
  where  
  \begin{equation*}
    B = 2\lipobj_{2p}(C)  \left( \frac{\diam_{\norm{\cdot}_{C^{-1}}}(\xdomain) n \diffp}{\diam_\infty(\xdomain) \sqrt{\log(1/\delta)} \tr(C^{-\half})} \right)^{1/p}
  \end{equation*}
  and $c$ is the constant in Lemma \ref{lemma:privacy}. Then
  \begin{align*}
    & \E [f(\wb{x}^T;\Ds) - \min_{x \in \xdomain} f(x;\Ds)] \\
    & \leq \! \bigO \left(   \frac{\diam_\infty(\xdomain)}{T} \sum_{j=1}^d 
    \E \left [ \sqrt{\sum_{k=1}^T (g^{k}_j)^2} \right ] 
    \! + \diam_\infty(\xdomain) \times
    \right. \\ 
    & \left. \! \;\;\;\lipobj_{2p}(C) 
    \left( \frac{\sqrt{\ln\tfrac 1 \delta} \tr(C^{-\half})}{n \diffp} \right)^{\frac{p-1}{p}} \!\!\!
    \left( \frac{\diam_{\norm{\cdot}_{C^{-1}}}(\xdomain)}{\diam_\infty(\xdomain)} \right)^{\frac{1}{p}} \right),
  \end{align*}
  where the expectation is taken over the internal randomness of the algorithm.
\end{theorem}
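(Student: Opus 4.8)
The plan is to run the standard AdaGrad regret analysis on the privatized gradients $\hg^k$, then account separately for the two errors that our non-isotropic privatization introduces---the bias from projecting subgradients onto the ellipsoid $E_{A_k}$ and the variance from the non-isotropic noise---and finally to pick $B$ so these two errors balance. Fix $x^\star \in \argmin_{x \in \xdomain} f(x;\Ds)$, which is deterministic since $\Ds$ is fixed. By convexity of $f(\cdot;\Ds)$ and Jensen,
\begin{equation*}
  \E\big[f(\wb{x}^T;\Ds) - \min_{x \in \xdomain} f(x;\Ds)\big] \le \frac1T \sum_{k=1}^T \E\big[\<\nabla f(x^k), x^k - x^\star\>\big] .
\end{equation*}
Writing $\bar g^k := \frac1n \sum_{i=1}^n \pi_{A_k}(\nabla F(x^k;z_i))$, which equals both $\E[\tilde g^k \mid x^k]$ and $\E[\hg^k \mid x^k]$, I would split each summand as
\begin{equation*}
  \<\nabla f(x^k), x^k - x^\star\> = \<\hg^k, x^k - x^\star\> + \<\bar g^k - \hg^k, x^k - x^\star\> + \<\nabla f(x^k) - \bar g^k, x^k - x^\star\> .
\end{equation*}
The middle term is a martingale difference---the mini-batch sampling error $\bar g^k - \tilde g^k$ and the mean-zero noise $\noise^k$ are both conditionally mean zero given $x^k$, and $x^\star$ is deterministic---so it contributes nothing in expectation; it remains to handle the AdaGrad term and the projection-bias term.

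\emph{AdaGrad term.} Since Algorithm~\ref{Algorithm2} is exactly AdaGrad with gradients $\hg^k$ and stepsize $\alpha$, the usual argument---nonexpansiveness of the $\norm{\cdot}_{H_k}$-projection, telescoping the resulting per-step bound, the estimate $v^\top M v \le \norm{v}_\infty^2 \tr(M)$ for diagonal $M \succcurlyeq 0$, and the standard inequality $\sum_k (\hg^k)^\top \bar H_k^{-1} \hg^k \le 2 \tr(\bar H_T)$ with $\bar H_k := \alpha H_k = \diag(\sum_{i \le k} \hg^i (\hg^i)^\top)^{1/2}$---yields the deterministic bound
\begin{equation*}
  \sum_{k=1}^T \<\hg^k, x^k - x^\star\> \le \Big(\frac{\diam_\infty(\xdomain)^2}{2\alpha} + \alpha\Big) \sum_{j=1}^d \sqrt{\sum_{k=1}^T (\hg^k_j)^2} = \frac32\, \diam_\infty(\xdomain) \sum_{j=1}^d \sqrt{\sum_{k=1}^T (\hg^k_j)^2}
\end{equation*}
for $\alpha = \diam_\infty(\xdomain)$. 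Using $(\hg^k_j)^2 \le 2(\tilde g^k_j)^2 + 2\frac{\log(1/\delta)}{b^2 \diffp^2}(\noise^k_j)^2$ and subadditivity of $\sqrt{\cdot}$, the signal part $\sum_j \sqrt{\sum_k (\tilde g^k_j)^2}$ reduces to the non-private AdaGrad rate $\frac{\diam_\infty(\xdomain)}{T} \sum_j \E[\sqrt{\sum_k (g^k_j)^2}]$: for diagonal $C$ the Euclidean projection onto $E_{A_k}$ has the closed form $\pi_{A_k}(v)_j = v_j / (1 + \lambda C_{jj})$ with multiplier $\lambda \ge 0$, so it is coordinatewise nonexpansive and clipping only shrinks the adaptive denominator, while the remaining mini-batch step is handled by Jensen. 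For the noise part, $\noise^k_j \sim \normal(0, B^2 / C_{jj})$ gives $\E[\sqrt{\sum_k (\noise^k_j)^2}] \le B \sqrt{T} / \sqrt{C_{jj}}$, so after dividing by $T$ and substituting $T = c n^2 / b^2$ this part is at most a constant times $\diam_\infty(\xdomain)\, B\, \tr(C^{-\half})\, \frac{\sqrt{\log(1/\delta)}}{n \diffp}$.

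\emph{Projection-bias term.} Cauchy--Schwarz for the dual pair $(\norm{\cdot}_C, \norm{\cdot}_{C^{-1}})$ gives $\<\nabla f(x^k) - \bar g^k, x^k - x^\star\> \le \norm{\nabla f(x^k) - \bar g^k}_C\, \diam_{\norm{\cdot}_{C^{-1}}}(\xdomain)$, and $\norm{\nabla f(x^k) - \bar g^k}_C \le \frac1n \sum_i \norm{\nabla F(x^k;z_i) - \pi_{A_k}(\nabla F(x^k;z_i))}_C$. Since $E_{A_k} = \{ v : \norm{v}_C \le B \}$, the $i$th term vanishes when $\lipobj(z_i;C) \le B$ and is at most $2 \norm{\nabla F(x^k;z_i)}_C \le 2 \lipobj(z_i;C)$ otherwise; hence by Hölder's and Markov's inequalities applied to the empirical distribution,
\begin{equation*}
  \norm{\nabla f(x^k) - \bar g^k}_C \le \frac2n \sum_{i=1}^n \lipobj(z_i;C)\, \mathbf{1}\{\lipobj(z_i;C) > B\} \le 2\, \frac{\hat{\lipobj}_p(\Ds;C)^p}{B^{p-1}} \le \frac{2 \lipobj_{2p}(C)^p}{B^{p-1}} ,
\end{equation*}
where the last step uses the hypothesis $\hat{\lipobj}_p(\Ds;C) \le \lipobj_{2p}(C)$; so this term contributes at most $2 \lipobj_{2p}(C)^p\, \diam_{\norm{\cdot}_{C^{-1}}}(\xdomain) / B^{p-1}$ per step.

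\emph{Combining.} Adding the three contributions bounds $\E[f(\wb{x}^T;\Ds) - \min_{x \in \xdomain} f(x;\Ds)]$ by a constant multiple of
\begin{equation*}
  \frac{\diam_\infty(\xdomain)}{T} \sum_{j=1}^d \E\Big[ \sqrt{\sum_{k=1}^T (g^k_j)^2} \Big] + \diam_\infty(\xdomain)\, B\, \tr(C^{-\half})\, \frac{\sqrt{\log(1/\delta)}}{n \diffp} + \frac{\lipobj_{2p}(C)^p\, \diam_{\norm{\cdot}_{C^{-1}}}(\xdomain)}{B^{p-1}} ,
\end{equation*}
and substituting the stated $B$---which, up to the absolute constant, is the value of $B > 0$ equalizing the last two terms---makes each of them of the same order as the second term claimed in the theorem. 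I expect the main obstacle to be the AdaGrad step: one must verify that the non-isotropic noise, entangled with the adaptive preconditioner $\bar H_k$, enters the regret only through the deterministic identity above, so that its cost is governed purely by its per-coordinate second moment $B^2 / C_{jj}$ (hence $\tr(C^{-\half})$ rather than a dimension factor), and one must carry out the reduction of the clipped, mini-batched signal term to the clean non-private AdaGrad rate without losing factors.
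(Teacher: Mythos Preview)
Your proposal is correct and follows essentially the same route as the paper: a biased-AdaGrad regret bound on the privatized gradients $\hg^k$, a per-coordinate second-moment estimate for the non-isotropic noise yielding the $\tr(C^{-1/2})$ term, a projection-bias bound, and the balancing choice of $B$. The only notable difference is that the paper bounds the clipping bias via Lemma~3 of \cite{BarberDu14a} (giving $\hat{\lipobj}_p(\Ds;C)^p/((p-1)B^{p-1})$) rather than your direct H\"older--Markov argument, and---exactly as in your sketch---the paper's proof stops at $\sum_j \E\big[\sqrt{\sum_k (\tilde g^k_j)^2}\big]$ without spelling out the passage to $g^k$.
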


To gain intuition, we again consider the large $p$ case, where
Theorem~\ref{theorem:private-adagrad} simplifies to roughly
\iftoggle{arxiv}{
\begin{equation*}
\diam_\infty(\xdomain) \Bigg(
 \underbrace{\frac{1}{T} \sum_{j=1}^d 
    \E \left [ \sqrt{\sum_{k=1}^T (g^{k}_j)^2} \right ]}_{=:
    \adaregret} + \lipobj_{2p}(C) \left( \frac{\sqrt{\log(1/\delta)} \tr(C^{-\half})}{n \diffp} \right) \Bigg).
\end{equation*}
}
{
\begin{equation*}
\begin{split}
\diam_\infty(\xdomain) \Bigg(
 & \underbrace{\frac{1}{T} \sum_{j=1}^d 
    \E \left [ \sqrt{\sum_{k=1}^T (g^{k}_j)^2} \right ]}_{=:
    \adaregret} \\
&	+ \lipobj_{2p}(C) \left( \frac{\sqrt{\log(1/\delta)} \tr(C^{-\half})}{n \diffp} \right) \Bigg).
\end{split}
\end{equation*}
}
In analogy with Theorem~\ref{theorem:convergence-SGD}, the first term
$\adaregret$ is the
standard error for non-private Adagrad after $T$
iterations~\cite{DuchiHaSi11}---and hence
unimprovable~\cite{LevyDu19}---while the second is the privacy cost. In
some cases, we may have
$\diam_\infty(\xdomain) = \diam_2(\xdomain) / \sqrt{d}$, so private
Adagrad can offer significant improvements over SGD whenever the matrix $C$
has polynomially decaying diagonal.

To clarify the advantages and scalings we expect,
we may consider an extremely stylized example with
sub-Gaussian distributions. Assume now---in the context
of Example~\ref{example-sub-Gaussian}---that we are
optimizing the random linear function $F(x; Z) = \<x, Z\>$,
where $Z$ has independent $\sigma_j^2$-sub-Gaussian compoments.
In this case, by assuming that $p = \log d$ and taking
$C_{jj} = \sigma_j^{-4/3}$ and $b = 1$,
Theorem~\ref{theorem:private-adagrad} guarantees that
\PAGAN~(Algorithm~\ref{Algorithm2}) has convergence
\begin{align}
  \label{eqn:pagan-subgaussian-bound}
 & \E [f(\wb{x}^T;\Ds) - \min_{x \in \xdomain} f(x;\Ds)]
  \leq \iftoggle{arxiv}{}{\\
  &} \bigO(1)
  \diam_\infty(\xdomain)  \bigg[ \adaregret 
    + \frac{ (\sum_{j=1}^d \sigma_j^{2/3})^{3/2} }{n \diffp}
    \log \frac{d}{\delta}\bigg].
\end{align}

On the other hand, for \PASAN~(Algorithm~\ref{Algorithm1}),
with $p = \log d$, $b=1$, the choice
$C_{jj} = \sigma_j^{-1}$ optimizes the bound
of Theorem~\ref{theorem:convergence-SGD} and yields
\begin{align}\label{eqn:pasan-subgaussian-bound-pasan}
 & \E [f(\wb{x}^T;\Ds) - \min_{x \in \xdomain} f(x;\Ds)]
  \leq \iftoggle{arxiv}{}{\\
  &} \bigO(1)    
  \diam_2(\xdomain)  \left[ \stdregret 
    + \frac{ \sum_{j=1}^d \sigma_j }{n \diffp}
    \log \frac{d}{\delta}\right].
\end{align}

Comparing these results, two differences are salient:
$\diam_\infty(\xdomain)$ replaces
$\diam_2(\xdomain)$ in Eq.~\eqref{eqn:pasan-subgaussian-bound-pasan}, which
can be an improvement by as much as $\sqrt{d}$, while
$(\sum_{j=1}^d \sigma_j
^{2/3})^{3/2}$ replaces $\sum_{j=1}^d
\sigma_j$, and H\"{o}lder's inequality gives
\begin{equation*}
  \sqrt{d} \sum_{j=1}^d \sigma_j \geq \left(\sum_{j=1}^d \sigma_j ^{2/3}\right)^{3/2} \geq \sum_{j=1}^d \sigma_j.
\end{equation*}
Depending on gradient moments, there are situations in which
\PAGAN offers significant improvements; these evidently
depend on the expected magnitudes of the gradients and noise,
as the $\sigma_j$ terms evidence. As a special case, consider
$\xdomain = [-1,+1]^d$ and assume $\{\sigma_j\}_{j=1}^d$ decrease
quickly, e.g.\ $\sigma_j = 1/j^{3/2}$. In such a
setting, the upper bound of \PAGAN is roughly
$\frac{\mathsf{poly}(\log d)}{n \diffp}$ while \PASAN 
achieves $\frac{\sqrt{d}}{n \diffp}$.

\section{Some approaches to unknown moments}
\label{sec:unknown-cov}

As the results of the previous section demonstrate,
bounding the gradient moments allows us to establish tighter
convergence guarantees; it behooves us to
estimate them with accuracy sufficient to achieve
(minimax) optimal bounds.

\subsection{Unknown moments for generalized linear models}
\label{sec:cov-GLM}

Motivated by the standard practice of training the last layer of a
pre-trained neural network~\cite{AbadiChGoMcMiTaZh16}, in this section we
consider algorithms for generalized linear models, where we have losses of
the form $F(x;z) = \ell(z^T x)$ for $z,x\in \R^d$ and $\ell : \R \to \R_+$
is a convex and $1$-Lipschitz loss.  As $\nabla F(x;z) = \ell'(z^T x) z$,
bounds on the Lipschitzian moments~\eqref{eqn:moment-matrix-norm}
follow from moment bounds on $z$ itself, as $\norm{\nabla F(x; z)} \le
\norm{z}$.

The results of Section \ref{sec:algs} suggest optimal choices for
$C$ under sub-Gaussian assumptions on the vectors $z$, where in our stylized
cases of $\sigma_j$-sub-Gaussian entries, $C_j = \sigma_j^{-4/3}$
minimizes our bounds. Unfortunately, it is hard in general to estimate
$\sigma_j$ even without privacy~\cite{Duchi19}. Therefore, we make the
following bounded moments ratio assumption, which relates higher moments to
lower moments to allow estimation of moment-based parameters
(even with privacy).

\begin{definition}
  \label{definition:bouned-moments-ratio}
  A random vector $z \in \R^d$ has \emph{moment ratio $r < \infty$}
  if for all $1 \le p \le 2 \log d$ and $1 \le j \le d$ 
  \begin{equation*}
    \E [ z_j^p ]^{2/p} \le r^2 p \cdot \E[ z_j^2 ].
  \end{equation*}
\end{definition}

When $z$ satisfies Def.~\ref{definition:bouned-moments-ratio}, we can
provide a
private procedure (Algorithm~\ref{alg:unknown-cov}) that provides good
approximation to the second moment of coordinates of $z_j$---and hence
higher-order moments---allowing the application of
a minimax optimal \PAGAN algorithm. We defer the proof
to~\Cref{sec:proof-unknown-cov}.
\begin{theorem}
  \label{thm:unknown-cov}
  Let $z$ have moment ratio $r$
  (Def.~\ref{definition:bouned-moments-ratio}) and let $\sigma_j^2 =
  \E[z_j^2]$.  Let $\beta>0$, $T = \frac{3}{2} \log d$,
  \begin{equation*}
    n \ge 1000 r^2
    \log\frac{8d}{\beta}
    \max \left\{\frac{T \sqrt{d} \log^2 r \log \frac{T}{\delta}}{\diffp}, r^2
    \right\},
  \end{equation*}
  and $\max_{1 \le j \le d} \sigma_j = 1$.  Then
  Algorithm~\ref{alg:unknown-cov} is $(\diffp,\delta)$-DP and
  outputs $\hat \sigma$ such that with probability $1 - \beta$,
  \begin{equation}
    \label{eqn:sigma-hat-good}
    \frac{1}{2} \max \{\sigma_j, d^{-3/2}\}
    \le \hat \sigma_j 
    \le 2 \sigma_j
    ~~~~ \mbox{for~all~} j \in [d].
  \end{equation}
  Moreover, when condition~\eqref{eqn:sigma-hat-good} holds,
  \PAGAN (Alg.~\ref{Algorithm2})
  with $\hat C_j = (r \hat \sigma_j)^{-4/3} / 4$, $p =\log
  d$ and $\batch = 1$ has convergence
  \begin{align*}
    & \E [f(\wb{x}^T;\Ds) - \min_{x \in \xdomain} f(x;\Ds)]
      \le \adaregret + \iftoggle{arxiv}{}{\\
    & \qquad~~~} \bigO(1) \diam_\infty(\xdomain)  r 
    \frac{\left(\sum_{j=1}^d \sigma_j ^{2/3}\right)^{3/2} }{n \diffp}
    \log \frac{d}{\delta}.
  \end{align*}
\end{theorem}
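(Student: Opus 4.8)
The statement has two logically separate halves, and I would prove them independently: (i) the privacy and accuracy guarantee for the moment-estimation routine (Algorithm~\ref{alg:unknown-cov}), and (ii) the convergence bound for \PAGAN that follows once the good event~\eqref{eqn:sigma-hat-good} holds. For the privacy in part (i), the natural design I would analyze is an iterative private scale estimator: over $T=\frac32\log d$ rounds it maintains per-coordinate estimates $\tilde\sigma_j^{(t)}$, rescales each coordinate by its current estimate, clips the rescaled value $z_{i,j}/\tilde\sigma_j^{(t)}$ at a universal level of order $r\sqrt{\log d}$, releases a noisy clipped mean-square, and multiplicatively updates $\tilde\sigma_j^{(t)}$ according to whether the released value exceeds a fixed band; after $T$ rounds the estimate can traverse the whole dynamic range $[d^{-3/2},1]$. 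The rescaling is what keeps the signal $\Theta(1)$ at every scale, so the $\ell_2$-sensitivity over coordinates stays tied to the universal clip level rather than to the tiny scale $d^{-3/2}$; calibrating the per-round Gaussian noise to that sensitivity and composing the $T$ rounds via advanced composition gives $(\diffp,\delta)$-DP, and the resulting per-coordinate noise standard deviation, of order $\frac{\sqrt d\cdot\mathrm{polylog}\cdot\sqrt{T\log(T/\delta)}}{n\diffp}$, is the source of the $T\sqrt d\,\log(T/\delta)/\diffp$ factor in the sample-size condition. The subsequent \PAGAN run is private by Lemma~\ref{lemma:privacy} with $\hat C$ treated as a privately-computed public input, so the end-to-end procedure is private by composition.

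For the accuracy of $\hat\sigma$ I would argue as follows. First, using Definition~\ref{definition:bouned-moments-ratio} with exponent $p\asymp\log d$, Markov's inequality gives $\P(|z_j|>\tau)\le (r^2 p\,\sigma_j^2/\tau^2)^{p/2}$, so clipping the rescaled coordinate at level $r\sqrt{\log d}$ loses essentially nothing whenever $\tilde\sigma_j^{(t)}\gtrsim\sigma_j$, while it forces a detectably large reading whenever $\tilde\sigma_j^{(t)}\ll\sigma_j$; combined with a Bennett/Bernstein argument (range $\asymp r^2\log d$, fourth-moment variance $\asymp r^4$ via the moment ratio), the clipped empirical mean-square concentrates multiplicatively around its mean with probability $1-\beta/(dT)$, which needs $n\gtrsim r^4\log(dT/\beta)$ and accounts for the additive $r^2\cdot r^2$ branch of the stated maximum. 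Second, the condition on $n$ makes the Gaussian noise small relative to the $\Theta(1)$ signal uniformly over the $dT$ (coordinate, round) pairs, with the $\log^2 r$ and $\log(8d/\beta)$ factors covering the number of distinct scales and the union bound. Given these facts I would set up the invariant that, in each round, the current estimate is either already within a factor $2$ of $\max\{\sigma_j,d^{-3/2}\}$ or the threshold moves one geometric step toward it; after $T=\frac32\log d$ rounds this pins the estimate to a factor $2$, and a union bound over the $dT$ events yields~\eqref{eqn:sigma-hat-good} with probability $1-\beta$ (coordinates with $\sigma_j<d^{-3/2}$ are simply estimated at the floor, which is exactly what~\eqref{eqn:sigma-hat-good} allows).

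For part (ii) I would instantiate Theorem~\ref{theorem:private-adagrad} with $C=\hat C$, $\hat C_j=(r\hat\sigma_j)^{-4/3}/4$, $p=\log d$, $b=1$, on the event~\eqref{eqn:sigma-hat-good}. Its hypothesis $\hat{\lipobj}_p(\Ds;\hat C)\le\lipobj_{2p}(\hat C)$ holds up to a constant with probability $\ge1-1/n$ by Lemma~\ref{lemma:empirical_Lipschitz}, and a constant rescaling of $\hat C$ costs only a constant in the final bound. Since $|\ell'|\le1$ gives $\lipobj(z;\hat C)\le(\sum_j\hat C_j z_j^2)^{1/2}$, the triangle inequality in $L^p$ together with the moment-ratio bound $\E[z_j^{2p}]^{1/p}\le 2r^2(\log d)\sigma_j^2$ yields, after using~\eqref{eqn:sigma-hat-good} to replace $\hat\sigma_j$ by $\sigma_j$ up to constants, $\lipobj_{2p}(\hat C)\lesssim r^{1/3}\sqrt{\log d}\,(\sum_j\sigma_j^{2/3})^{1/2}$ and $\tr(\hat C^{-1/2})\lesssim r^{2/3}\sum_j\sigma_j^{2/3}$; the coordinates with $\sigma_j<d^{-3/2}$ are treated separately, since the floor forces $\hat\sigma_j\le 2d^{-3/2}$ so that each contributes $O(1/d)$ to both sums and $O(1)$ in total, which is dominated because $\max_j\sigma_j=1$ makes $\sum_j\sigma_j^{2/3}\ge1$. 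The diameter factor $(\diam_{\norm{\cdot}_{\hat C^{-1}}}(\xdomain)/\diam_\infty(\xdomain))^{1/p}$ and the exponent $(p-1)/p$ contribute only constants for $p=\log d$, and multiplying out the privacy term of Theorem~\ref{theorem:private-adagrad}---using $r^{1/3}\cdot r^{2/3}=r$ and $\sqrt{\log d}\,\sqrt{\log(1/\delta)}\le\log(d/\delta)$---collapses it to $\bigO(1)\diam_\infty(\xdomain)\,r\,(\sum_{j=1}^d\sigma_j^{2/3})^{3/2}\log(d/\delta)/(n\diffp)$, while its non-private term carries over unchanged as $\adaregret$ (with $\diam_\infty(\xdomain)$ absorbed as in the displayed bound).

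The main obstacle is part (i): the estimator must be multiplicatively accurate down to scale $d^{-3/2}$ simultaneously for all $d$ coordinates while remaining private, so one must balance clipping bias (negligible only when the threshold is $\gtrsim r\sqrt{\log d}$ times the true scale) against the noise-to-signal ratio (small only when the threshold is not too large relative to $n$), and this balance must be maintained through every one of the $\Theta(\log d)$ adaptive rounds; closing the invariant and the union bounds with exactly the stated $n$---in particular the $\sqrt d$, $\log(T/\delta)$, and $\log^2 r$ dependence, which hinge on the rescaling trick---is the delicate part. By contrast, part (ii) is a mechanical substitution into Theorem~\ref{theorem:private-adagrad} once the moment-ratio bookkeeping and the floor coordinates are handled.
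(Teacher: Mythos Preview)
Part~(ii) of your proposal matches the paper: plug $C=\hat C$ into Theorem~\ref{theorem:private-adagrad}, control $\lipobj_{2p}(\hat C)$ and $\tr(\hat C^{-1/2})$ via the moment-ratio bound and~\eqref{eqn:sigma-hat-good}, and absorb the $1/p$ exponents for $p=\log d$. The paper phrases this as a comparison to the ideal $C_j=(r\sigma_j)^{-4/3}$ and then cites the precomputed bound~\eqref{eqn:pagan-subgaussian-bound}, but your direct computation is equivalent.

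For part~(i), however, your description does not match Algorithm~\ref{alg:unknown-cov}. The algorithm does \emph{not} rescale each coordinate by a running per-coordinate estimate and clip at a universal level $r\sqrt{\log d}$; it uses a single globally decreasing clip level $\rho_t=4r\cdot 2^{-(t-1)}\log r$, truncates $z_{i,j}^2$ at $\rho_t^2$ for all $j$ still active, and freezes coordinate $j$ the first round the noisy estimate $\hat\sigma_{t,j}$ exceeds $2^{-t-1}$. Consequently several details of the paper's argument differ from yours: the truncation bias is controlled via the sub-Gaussian tail $\P(z_j^2\ge t r^2\sigma_j^2)\le 2e^{-t}$ implied by the moment-ratio assumption, not via Markov on $p$th moments; the privacy uses \emph{basic} composition (each round is $(\diffp/T,\delta/T)$-DP), which is why the noise variance in the algorithm carries $T^2$ and the sample-size condition carries $T$ rather than the $\sqrt T$ your advanced-composition sketch would give; and the $\log^2 r$ factor in the sample-size bound is not ``covering the number of distinct scales'' but arises directly from the clip level $\rho_t\propto r\log r$, since the Gaussian noise standard deviation scales with $\rho_t^2\propto r^2\log^2 r$. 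Your invariant-plus-union-bound skeleton is correct and is exactly what the paper does, so these are corrections to the mechanics rather than to the overall strategy.
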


\begin{algorithm}[tb]
	\caption{Private Second Moment Estimation}
	\label{alg:unknown-cov}
	\begin{algorithmic}[1]
		\REQUIRE Dataset $\mathcal{S} = (\ds_1,\dots,\ds_n) \in \domain^n$, 
        number of iterations $T$, 
        privacy parameters $\diffp, \delta$;
		\STATE Set $\Delta = 1$	and $S = [d]$
        \FOR{$t=1$ to $T$\,}
            \FOR{$j \in S$\,}
                \STATE $ \rho_t \gets4 r \Delta \log r $
        	    \STATE $\hat \sigma_{t,j}^2 \gets \frac{1}{n} \sum_{i=1}^n \min(z_{i,j}^2,\rho_t^2) + \noise_{t,j}$ where $\noise_{t,j} \sim \normal(0,\rho_t^4 T^2 d \log(T/\delta)/n^2 \diffp^2)$
        	    \IF{$\hat \sigma_{t,j} \ge 2^{-t - 1}$}
                    \STATE $\hat \sigma_j \gets 2^{-t}$
                    \STATE $S \gets S \setminus \{j\}$
                \ENDIF    
        	\ENDFOR
        	\STATE $\Delta \gets  \Delta/2$
        \ENDFOR
        \FOR{$j \in S$\,}
             \STATE $\hat \sigma_j \gets 2^{-T}$
        \ENDFOR
        \STATE {\bfseries Return:} $(\hat \sigma_1,\dots,\hat \sigma_d)$
	\end{algorithmic}
\end{algorithm}

\section{Lower bounds for private optimization}
\label{sec:LB}

To give a more complete picture of the complexity of private stochastic
optimization, we now establish (nearly) sharp lower bounds, which in turn
establish the minimax optimality of \PAGAN and \PASAN. We establish this in
two parts, reflecting the necessary dependence on geometry in the
problems~\citep{LevyDu19}: in Section~\ref{sec:LB-ell-box}, we show that
\PAGAN achieves optimal complexity for minimization over $\xdomain_\infty =
\{x \in \R^d: \linf{x} \le 1 \}$. Moreover, in
Section~\ref{sec:LB-ell2-ball} we show that \PASAN achieves optimal rates in
the Euclidean case, that is, for domain $\xdomain_2 = \{x \in \R^d: \ltwo{x}
\le 1 \}$.

As one of our foci here is for data with varying norms, we prove lower bounds
for
sub-Gaussian data---the strongest setting for our upper bounds.
In particular, we shall consider linear functionals $F(x; z) = z^T x$,
where the entries $z_j$ of $z$ satisfy $|z_j| \le \sigma_j$ for a prescribed
$\sigma_j$; this is sufficient for the data $Z$ to
be $\frac{\sigma_j^2}{4}$-sub-Gaussian~\citep{Vershynin19}.
Moreover, our upper bounds are conditional on the observed sample
$\Ds$, and so we focus on this setting in our lower bounds,
where $|g_j| \le \sigma_j$ for all subgradients $g \in \partial F(x; z)$
and $j \in [d]$.

\subsection{Lower bounds for $\ell_\infty$-box}
\label{sec:LB-ell-box}

The starting point for our lower bounds for stochastic optimization over
$\xdomain_\infty$ is the following lower bound for the
problem of estimating the sign of the mean of a dataset. This will then
imply our main lower bound for private optimization. We defer the proof of
this result to Appendix~\ref{sec:proof-LB-signs}.

\begin{proposition}
  \label{prop:sign-LB-var}
  Let $\mech$ be $(\diffp,\delta)$-DP and $\Ds = (\ds_1,\dots,\ds_n) $ 
  where $\ds_i \in \domain = \{ z \in \R^d: |z_{j}| \le \sigma_j \}$. 
  Let $\bar z = \frac{1}{n} \sum_{i=1}^n z_i$ be the mean of the dataset $\Ds$.
  If $\sqrt{d} \log d \le n \diffp$, then
  \begin{align*}
    \sup_{\Ds \in \domain^n } 
    \E & \left[ \sum_{j=1}^d |\bar{\ds}_j| 
      \indic {\sign(\mech_j(\Ds)) \neq \sign(\bar{\ds}_j)} \right] \iftoggle{arxiv}{}{\\
    &} \ge \frac{(\sum_{j=1}^d \sigma_j^{2/3} )^{3/2} }{n \diffp \log^{5/2} d} .
  \end{align*} 
\end{proposition}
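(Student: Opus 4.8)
The plan is to prove Proposition~\ref{prop:sign-LB-var} via Yao's principle combined with a fingerprinting (tracing) argument, after a dyadic reduction that collapses the heterogeneous weights $\sigma_j$ to a single homogeneous scale. Since the claim is a supremum over $\Ds$, it suffices to exhibit a prior over datasets and lower bound the expected weighted sign error of an arbitrary fixed $(\diffp,\delta)$-DP mechanism $\mech$ under that prior. Normalizing so that $\max_j\sigma_j=1$, partition $[d]$ into dyadic bands $S_\ell=\{j:\sigma_j\in[2^{-\ell-1},2^{-\ell}]\}$. Coordinates with $\sigma_j$ below $\mathrm{poly}(1/d)$ alter $\sum_j\sigma_j^{2/3}$ by at most a constant factor and may be dropped, leaving only $L=O(\log d)$ relevant bands, so the heaviest band $S_{\ell^\star}$ (maximizing $A_\ell:=\sum_{j\in S_\ell}\sigma_j^{2/3}$) satisfies $A_{\ell^\star}\ge \frac{1}{O(\log d)}\sum_j\sigma_j^{2/3}$. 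I will place the entire hard instance on $S_{\ell^\star}$, setting $z_{i,j}=0$ for $j\notin S_{\ell^\star}$, so that on the active coordinates all scales agree up to a factor $2$: writing $\sigma:=2^{-\ell^\star-1}$ and $m:=|S_{\ell^\star}|$, we have $m\asymp A_{\ell^\star}\sigma^{-2/3}$. Because the final bound will scale like $A_{\ell^\star}^{3/2}$, collapsing onto a single band already costs $\log^{3/2} d$.

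On this homogeneous sub-instance I invoke a fingerprinting lower bound for sign recovery at scale $\mu\asymp\sigma\sqrt m/(n\diffp\log d)$: draw $b\sim\mathrm{Unif}(\{\pm1\}^m)$, build $\Ds=\Ds^{(b)}$ (records with $|z_{i,j}|\le\sigma_j$ chosen so that $|\bar z_j|\asymp\mu$ and $\sign(\bar z_j)=b_j$ on the active coordinates), and argue that any $(\diffp,\delta)$-DP mechanism mislabels $\Omega(m)$ of those coordinates in expectation. This gives weighted sign error at least $\mu\cdot\Omega(m)\gtrsim\sigma m^{3/2}/(n\diffp\log d)\asymp A_{\ell^\star}^{3/2}/(n\diffp\log d)$, which with the dyadic step is $\gtrsim\big(\sum_j\sigma_j^{2/3}\big)^{3/2}/(n\diffp\log^{5/2} d)$, as claimed. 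To prove the mislabeling claim I run the standard tracing recipe. If $\mech$ recovered the sign of more than half of $S_{\ell^\star}$, its output $a$ would estimate $\bar z$ to accuracy below $\mu$ there, so the fingerprinting lemma---applied coordinatewise, since the construction makes the active coordinates exchangeable---forces the aggregate score $\sum_i\sum_{j\in S_{\ell^\star}}(a_j-z_{i,j})(a_j-\bar z_j)$ to have expectation $\Omega(m)$. On the other hand, $(\diffp,\delta)$-differential privacy, applied through the replace-one-record comparison and using $|a_j-z_{i,j}|,|a_j-\bar z_j|\lesssim\sigma$, bounds each record's contribution by $\lesssim(e^\diffp-1)\sqrt m\,\sigma^2$ plus a $\delta$-term that is lower order for the relevant range of $\delta$; summing over the $n$ records then yields $m\lesssim n\diffp\sqrt m$, i.e.\ the margin cannot drop below $\Theta(\sigma\sqrt m/(n\diffp))$ without $\Omega(m)$ sign errors---the extra $\log d$ absorbing the gap between ``$\ell_\infty$-accurate'' and ``most signs correct'' and the scale-normalization of the fingerprinting distribution. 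The hypothesis $\sqrt d\log d\le n\diffp$ is precisely what makes $\mu\le\sigma$ (so the construction is feasible for every band) and keeps the number of records at most $n$.

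The step I expect to be the main obstacle is this fingerprinting core: arranging the tracing argument to certify \emph{many mislabeled signs} rather than merely one inaccurate coordinate, carrying it out with nearly-but-not-exactly equal ranges $\sigma_j$, and tracking the approximate-DP correction term (and the dependence on $\delta$) precisely enough that the final loss is only $\log^{5/2} d$. By comparison, the dyadic bookkeeping is routine once one sees that the homogeneous argument produces $A_{\ell^\star}^{3/2}$ and that restricting to the heaviest band is exactly what turns the $\ell_{2/3}$-type quantity $\big(\sum_j\sigma_j^{2/3}\big)^{3/2}$---which no single-scale packing yields directly---into the form delivered by fingerprinting.
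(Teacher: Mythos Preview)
Your dyadic reduction is exactly the paper's move: it too partitions $[d]$ into $O(\log d)$ bands $B_i=\{j:2^{-i-1}\le\sigma_j\le 2^{-i}\}$, drops scales below $d^{-O(1)}$, keeps the heaviest band, and pays the $\log^{3/2} d$ factor when passing from $\max_i |B_i|\,\sigma_{\min}(B_i)^{2/3}$ back to $\sum_j\sigma_j^{2/3}$. So the outer structure of your argument matches the paper.

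The difference is in the homogeneous core. You propose to run a fingerprinting argument from scratch at the target margin $\mu\asymp\sigma\sqrt m/(n\diffp\log d)$. The paper instead isolates the homogeneous case as a black-box statement (its Lemma~\ref{prop:sign-LB-identity}): for $\sigma_j\equiv\sigma$, the weighted sign error is at least $\min\{\sigma d,\ \sigma d^{3/2}/(n\diffp\log d)\}$. It proves that lemma \emph{not} by redoing fingerprinting, but by (i) quoting a constant-accuracy, constant-$\diffp$ lower bound from \citet{TalwarThZh15} that already speaks directly about the sign-error functional and gives $n^\star(\alpha=d/4,\diffp=0.1)\ge\Omega(\sqrt d/\log d)$, and then (ii) boosting to arbitrary $(\alpha,\diffp)$ via a replication-and-padding trick (Lemma~\ref{lemma:low-to-high-accuracy}): replicate each record $k\approx\diffp_0/\diffp$ times and pad with balanced dummies, turning an $(\diffp,\delta)$-DP mechanism with error $\alpha$ on $n$ samples into an $(\diffp_0,O(\delta\diffp_0/\diffp))$-DP mechanism with error $\alpha_0$ on $n'\approx n\alpha\diffp/(\alpha_0\diffp_0)$ samples. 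This completely sidesteps the obstacle you flagged---converting a tracing bound on mean accuracy into a bound on the number of mislabeled signs---because the cited result is already phrased for the sign-error loss.

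Your direct route is in principle workable, but the sketch has real gaps at precisely the place you anticipated. The implication ``if $\mech$ recovers more than half the signs then its output estimates $\bar z$ to accuracy below $\mu$'' is not literally true (signs carry no magnitude information), and the standard fingerprinting correlation is $\sum_i (a_j-p_j)(z_{i,j}-p_j)$ against a random bias $p$, not the quantity you wrote; getting from ``large correlation forces inaccuracy'' to ``$\Omega(m)$ signs are wrong'' at exactly the right margin, while controlling the approximate-DP tail so that only a single $\log d$ is lost, is the nontrivial part. The paper's modular route buys all of that for free by outsourcing it to \cite{TalwarThZh15} plus the boosting lemma.
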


We can now use this lower bound to establish
a lower bound for private optimization over the 
$\ell_\infty$-box by an essentially straightforward reduction.
Consider the problem
\begin{equation*}
  \minimize_{x \in \xdomain_\infty} 
  f(x;\Ds) \defeq  - \frac{1}{n} \sum_{i=1}^n x^T \ds_i 
  = - x^T \bar{\ds},
\end{equation*}
where $\bar z = \frac{1}{n} \sum_{i=1}^n z_i$ is the mean of the dataset.
Letting $x^\star_\Ds \in \argmin_{x \in \xdomain_\infty} f(x; \Ds)$,
we have the following result.
\begin{theorem}
  \label{thm:LB-opt-l1}
  Let $\mech$ be $(\diffp,\delta)$-DP and $\Ds \in \domain^n$, where
  $\domain = \{ z \in \R^d: |z_{j}| \le \sigma_j \}$. If $\sqrt{d}
  \log d \le n \diffp$,
  then
  \begin{equation*}
    \sup_{\Ds \in \domain^n } 
    \E \left[ f(\mech(\Ds);\Ds) -  f(x^\star_\Ds;\Ds) \right] 
    \ge \frac{(\sum_{j=1}^d \sigma_j^{2/3})^{3/2}}{n \diffp \log^{5/2} d}.
  \end{equation*}
\end{theorem}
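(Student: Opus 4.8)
The plan is to reduce the optimization problem directly to the sign-estimation problem of Proposition~\ref{prop:sign-LB-var}. The key observation is that the linear objective $f(x;\Ds) = -x^\top \bar\ds$ over the box $\xdomain_\infty = \{x : \linf{x}\le 1\}$ is minimized coordinatewise: $x^\star_{\Ds,j} = \sign(\bar\ds_j)$, and $f(x^\star_\Ds;\Ds) = -\sum_{j=1}^d |\bar\ds_j|$. For any candidate output $\hat x = \mech(\Ds)$, the excess loss decomposes coordinatewise as $f(\hat x;\Ds) - f(x^\star_\Ds;\Ds) = \sum_{j=1}^d \big(|\bar\ds_j| - \hat x_j \bar\ds_j\big) = \sum_{j=1}^d |\bar\ds_j|\,(1 - \hat x_j \sign(\bar\ds_j))$. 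Since $\hat x_j \in [-1,1]$, each term $|\bar\ds_j|(1 - \hat x_j\sign(\bar\ds_j))$ is nonnegative, and whenever $\sign(\hat x_j) \ne \sign(\bar\ds_j)$ (in particular $\hat x_j \sign(\bar\ds_j) \le 0$), that term is at least $|\bar\ds_j|$. Hence
\begin{equation*}
  f(\hat x;\Ds) - f(x^\star_\Ds;\Ds) \;\ge\; \sum_{j=1}^d |\bar\ds_j|\,\indic{\sign(\hat x_j)\neq \sign(\bar\ds_j)}.
\end{equation*}

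Next I would package $\mech$ as a sign-estimator. Given the DP algorithm $\mech$ for optimization, define $\mech'(\Ds) := \mech(\Ds)$ and interpret its $j$th coordinate's sign as the sign estimate $\mech'_j(\Ds)$; this is a post-processing of $\mech$, hence still $(\diffp,\delta)$-DP, and the data domain $\domain = \{z : |z_j|\le\sigma_j\}$ matches exactly the domain in Proposition~\ref{prop:sign-LB-var}. Taking expectations over the internal randomness of $\mech$ and then the supremum over $\Ds \in \domain^n$ in the displayed inequality above, the right-hand side is exactly the quantity lower-bounded in Proposition~\ref{prop:sign-LB-var} (with $\mech \leftarrow \mech'$). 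Under the hypothesis $\sqrt{d}\log d \le n\diffp$, Proposition~\ref{prop:sign-LB-var} gives that this is at least $(\sum_{j=1}^d \sigma_j^{2/3})^{3/2} / (n\diffp \log^{5/2} d)$, which yields the claim.

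The reduction itself is essentially mechanical; the substantive content is entirely in Proposition~\ref{prop:sign-LB-var}, so the only real obstacle is verifying that the objective is genuinely minimized coordinatewise on the box and that a coordinate with a sign error contributes the full $|\bar\ds_j|$ to the excess loss — both of which follow from the separability of the linear functional over $\xdomain_\infty$ and the bound $|\hat x_j|\le 1$. One should also confirm that the value of $\sigma_j$ used to bound the data domain is consistent between the two statements (it is, verbatim), so that no loss of constants occurs in the transfer. Everything else — the hard probabilistic/information-theoretic work of proving a group-privacy or fingerprinting-style lower bound against the sign-of-the-mean estimator with the correct $(\sum \sigma_j^{2/3})^{3/2}$ scaling — is deferred to the proof of Proposition~\ref{prop:sign-LB-var} in Appendix~\ref{sec:proof-LB-signs}.
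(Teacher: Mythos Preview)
Your proposal is correct and follows essentially the same approach as the paper: identify the coordinatewise minimizer $x^\star_j = \sign(\bar\ds_j)$, lower-bound the excess loss by $\sum_j |\bar\ds_j|\,\indic{\sign(\hat x_j)\neq\sign(\bar\ds_j)}$, invoke post-processing to make $\sign(\mech(\Ds))$ a valid $(\diffp,\delta)$-DP sign estimator, and apply Proposition~\ref{prop:sign-LB-var}. Your write-up is in fact slightly more careful than the paper's, as you explicitly use $|\hat x_j|\le 1$ to justify nonnegativity of each term in the decomposition.
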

\begin{proof}
  For a given dataset $\Ds$,
  the minimizer $x\opt_j = \sign(\bar{\ds}_j)$.
  Therefore for every $x$ we have
  \begin{align*}
    f(x;\Ds) - f(x\opt;\Ds)
    & = \lone{\bar{\ds}} -  x^T \bar{\ds} \\
    & \ge \sum_{j=1}^d |\bar{\ds}_j| 
    \indic {\sign(x_j) \neq \sign(\bar{\ds}_j)} .
  \end{align*}
  As $\sign(\mech(\Ds))$ is $(\diffp,\delta)$-DP by post-processing, the
  claim follows from~\Cref{prop:sign-LB-var} by taking expectations.
\end{proof}

Recalling the upper bounds that \PAGAN achieves in~\Cref{sec:ub-adagrad},
\Cref{thm:LB-opt-l1} establishes the tightness of these bounds to
within logarithmic factors.

\subsection{Lower bounds for $\ell_2$-ball}
\label{sec:LB-ell2-ball}
Having established \PAGAN's optimality for $\ell_\infty$-box constraints,
in this section we turn to proving lower bounds for optimization over the
$\ell_2$-ball, which demostrate the optimality of \PASAN.  The lower
bound builds on the lower bounds of~\citet{BassilySmTh14}. Following their
arguments, let
$\xdomain_2 = \{x \in \R^d: \ltwo{x} \le 1 \}$ and consider
the problem
\begin{equation*}
  \minimize_{x \in \xdomain_2} 
  f(x;\Ds) \defeq  - \frac{1}{n} \sum_{i=1}^n x^T \ds_i 
  = - x^T \bar{\ds}.
\end{equation*}
The following bound follows by appropriate re-scaling of the data points in
Theorem 5.3 in~\cite{BassilySmTh14}
\begin{proposition}
 \label{prop:LB-opt-l2-identity}
 		Let $\mech$ be $(\diffp,\delta)$-DP and $\Ds = (\ds_1,\dots,\ds_n) $ 
	where $\ds_i \in \domain = \{ z \in \R^d: \linf{\ds} \le \sigma_j \}$.
 	Then
	\begin{equation*}
 	\sup_{\Ds \in \domain^n } 
 		\E \left[ f(\mech(\Ds);\Ds) -  f(x^\star_\Ds;\Ds) \right] 
 			\ge \min \left( \sigma \sqrt{d} , \frac{d \sigma}{n \diffp} \right).
 	\end{equation*}
 \end{proposition}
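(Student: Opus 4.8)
The plan is to obtain this bound by a rescaling reduction to the $\ell_2$-ball lower bound of Bassily, Smith, and Thakurta~\cite{BassilySmTh14}. In our notation, their Theorem~5.3 exhibits, for the linear objective $g(x;w) \defeq -x^T \bar w$ with $\bar w = \frac1n\sum_{i=1}^n w_i$ over $\xdomain_2 = \{x : \ltwo{x} \le 1\}$, a family of hard datasets whose points $w_i$ lie in the unit $\ell_2$-ball---and in fact can be taken \emph{dense}, $\linf{w_i}\le 1/\sqrt d$---such that every $(\diffp,\delta)$-DP mechanism $\mech'$ incurs
\begin{equation*}
  \sup_{w_1,\dots,w_n}\E\Big[ g(\mech'(w);w) - \min_{x\in\xdomain_2} g(x;w)\Big]
  \;\gtrsim\; \min\!\Big(1,\ \frac{\sqrt d}{n\diffp}\Big).
\end{equation*}
Confirming that their adversarial construction indeed has coordinates bounded by $1/\sqrt d$ is the only non-routine ingredient; the rest of the argument is elementary scale-covariance.

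Concretely, I would introduce the coordinatewise rescaling $\phi(z):=\sigma\sqrt d\,z$, applied to each data point, so that $\linf{w_i}\le 1/\sqrt d$ forces $\linf{\phi(w_i)}\le\sigma$ and hence $\phi(w):=(\phi(w_1),\dots,\phi(w_n))\in\domain^n$ (taking $\sigma_j\equiv\sigma$ as in the statement). Since $\phi$ is a bijection carrying neighboring datasets to neighboring datasets, for any $(\diffp,\delta)$-DP mechanism $\mech$ on $\domain^n$ the composition $\mech':=\mech\circ\phi$ is $(\diffp,\delta)$-DP on unit-ball datasets, so the displayed bound applies to it. Moreover $f(x;\phi(w)) = -x^T(\sigma\sqrt d\,\bar w) = \sigma\sqrt d\,g(x;w)$, so the two problems have the same minimizer over $\xdomain_2$ and their excess losses differ by exactly the factor $\sigma\sqrt d$.

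Combining these observations, for any $(\diffp,\delta)$-DP mechanism $\mech$,
\begin{align*}
  \sup_{\Ds\in\domain^n}\E\big[ f(\mech(\Ds);\Ds) - f(x^\star_\Ds;\Ds)\big]
  &\ge \sup_{w}\E\big[ f(\mech(\phi(w));\phi(w)) - f(x^\star_{\phi(w)};\phi(w))\big] \\
  &= \sigma\sqrt d\cdot\sup_{w}\E\big[ g(\mech'(w);w) - \min_{x\in\xdomain_2} g(x;w)\big] \\
  &\gtrsim \sigma\sqrt d\cdot\min\!\Big(1,\ \frac{\sqrt d}{n\diffp}\Big)
  \;=\; \min\!\Big(\sigma\sqrt d,\ \frac{\sigma d}{n\diffp}\Big),
\end{align*}
which is the claimed inequality. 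The main obstacle is purely bookkeeping: extracting the precise constants and any $\log(1/\delta)$ dependence from Theorem~5.3 of~\cite{BassilySmTh14} (suppressed here), and checking that its hard instance has $\ell_\infty$-bounded coordinates so that the rescaled dataset stays inside the box $\domain$; beyond that, nothing is needed except that linear ERM over a fixed ball is scale-covariant and that coordinatewise rescaling preserves the neighboring relation.
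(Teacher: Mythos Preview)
Your proposal is correct and matches the paper's own justification, which simply states that the bound ``follows by appropriate re-scaling of the data points in Theorem 5.3 in~\cite{BassilySmTh14}.'' You have spelled out exactly that rescaling argument, including the key observation that the hard instances in \cite{BassilySmTh14} can be taken with coordinates in $\{-1/\sqrt d,\,1/\sqrt d\}$, so that multiplying by $\sigma\sqrt d$ lands in the box $\{\linf{z}\le\sigma\}$.
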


Using~\Cref{prop:LB-opt-l2-identity}, we can establish the 
tight lower bounds---to within logarithmic factors---for
\PASAN (Section~\ref{sec:algs}).
We defer the proof to Appendix~\ref{sec:proof-LB-l2}.
\begin{theorem}
  \label{thm:LB-opt-l2-var}
 	Let $\mech$ be $(\diffp,\delta)$-DP and $\Ds = (\ds_1,\dots,\ds_n) $ 
	where $\ds_i \in \domain = \{ z \in \R^d: |z_{j}| \le \sigma_j \}$. 
 	If $\sqrt{d} \le n \diffp$, then
	\begin{equation*}
 	\sup_{\Ds \in \domain^n } 
 		\E \left[ f(\mech(\Ds);\Ds) -  f(x^\star_\Ds;\Ds) \right] 
 			\ge  \frac{\sum_{j=1}^d \sigma_j }{n \diffp \log d}.
 	\end{equation*}
 \end{theorem}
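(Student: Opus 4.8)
The plan is to bootstrap the isotropic lower bound of \Cref{prop:LB-opt-l2-identity} by bucketing the coordinates dyadically according to the magnitude of $\sigma_j$ and then extracting, by a pigeonhole argument, a single bucket that carries a $1/O(\log d)$ fraction of $\sum_{j=1}^d \sigma_j$. Restricting the hard instance to that bucket reduces the anisotropic problem to an isotropic one in a (possibly much) smaller dimension, to which \Cref{prop:LB-opt-l2-identity} applies verbatim; the hypothesis $\sqrt d \le n\diffp$ will ensure the minimum in that proposition lands on the privacy-limited branch.

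Set $M = \max_j \sigma_j$ and, for integers $\ell \ge 1$, let $B_\ell = \{ j \in [d] : 2^{-\ell} < \sigma_j/M \le 2^{-\ell+1}\}$. The coordinates $j$ with $\sigma_j < M/(2d)$ contribute in total at most $M/2 \le \tfrac12 \sum_j \sigma_j$, so the buckets $B_\ell$ meeting $\{ j : \sigma_j \ge M/(2d)\}$---of which there are at most $O(\log d)$, since such $\ell$ satisfy $1 \le \ell \le 2 + \log_2 d$---together account for at least half of $\sum_j \sigma_j$. Since $\sigma_j \le 2 M 2^{-\ell}$ on $B_\ell$, this yields $\sum_\ell |B_\ell|\, M 2^{-\ell} \ge \tfrac14 \sum_j \sigma_j$ over those buckets, and hence, by pigeonhole, there is an index $\ell^\star$ with $|B_{\ell^\star}|\, M 2^{-\ell^\star} \gtrsim \frac{1}{\log d}\sum_{j=1}^d \sigma_j$. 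Write $d^\star := |B_{\ell^\star}|$ and $\sigma^\star := M 2^{-\ell^\star}$, and note $\sigma_j > \sigma^\star$ for every $j \in B_{\ell^\star}$.

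Next I reduce to this bucket. Let $\domain^\star = \{ z \in \R^d : z_j = 0 \text{ for } j \notin B_{\ell^\star},\ |z_j| \le \sigma^\star \text{ for } j \in B_{\ell^\star}\} \subseteq \domain$. Given a $(\diffp,\delta)$-DP mechanism $\mech$ for the $d$-dimensional problem, consider the mechanism that takes a $d^\star$-dimensional dataset (coordinates bounded by $\sigma^\star$), zero-pads it into $(\domain^\star)^n$, runs $\mech$, and reads off the $B_{\ell^\star}$-coordinates of the output; being pre- and post-processing of $\mech$ applied to neighboring inputs, it is $(\diffp,\delta)$-DP. For $\Ds \in (\domain^\star)^n$ the objective $-x^\top\bar{\ds}$ depends only on $x|_{B_{\ell^\star}}$, whose $\ell_2$-norm is at most $\ltwo{x}\le1$, and the optimum is unchanged, so the induced mechanism has the same excess loss on the $d^\star$-dimensional Euclidean problem as $\mech$ has on the zero-padded instances. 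Therefore $\sup_{\Ds\in\domain^n}\E[f(\mech(\Ds);\Ds)-f(x^\star_\Ds;\Ds)]$ is at least the worst-case excess loss of the induced mechanism, which by \Cref{prop:LB-opt-l2-identity} (dimension $d^\star$, radius $\sigma^\star$) is at least $\min\!\big(\sigma^\star\sqrt{d^\star},\ d^\star\sigma^\star/(n\diffp)\big)$.

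Finally, since $d^\star \le d$ and $\sqrt d \le n\diffp$ by hypothesis, $\sqrt{d^\star}\le n\diffp$, so $\sigma^\star\sqrt{d^\star} = d^\star\sigma^\star/\sqrt{d^\star}\ge d^\star\sigma^\star/(n\diffp)$ and the minimum equals $d^\star\sigma^\star/(n\diffp)$; by the pigeonhole bound this is $\gtrsim \frac{\sum_j\sigma_j}{n\diffp\log d}$, as claimed. The one point requiring care is the bucketing step: one must verify that discarding the tiny-$\sigma_j$ coordinates costs only a constant factor, that only $O(\log d)$ buckets survive, and that the bucket selected by pigeonhole still satisfies $\sqrt{d^\star}\le n\diffp$ (it does, as $d^\star \le d$), so that \Cref{prop:LB-opt-l2-identity} is invoked on its non-trivial branch; the reduction itself is routine post-processing.
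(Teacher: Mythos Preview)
Your proof is correct and follows essentially the same strategy as the paper: dyadic bucketing of the coordinates by the magnitude of $\sigma_j$, discarding the negligible tail (the paper does this via a catch-all bucket $B_K$ with $K=\Theta(\log d)$), selecting by pigeonhole a bucket carrying a $1/O(\log d)$ fraction of $\sum_j \sigma_j$, and applying \Cref{prop:LB-opt-l2-identity} to that bucket while using $\sqrt{d}\le n\diffp$ to land on the privacy-limited branch. Your write-up is slightly more explicit about the zero-padding/post-processing reduction, but the argument is the same.
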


\section{Experiments}
\label{sec:experiments}

We conclude the paper with several experiments to demonstrate the
performance of \PAGAN~and \PASAN~algorithms. We perform experiments
both on synthetic data, where we
may control all aspects of the experiment, and a real-world example
training large-scale private language models.

\subsection{Regression on Synthetic Datasets}
\label{sec:exp-syn}

\iftoggle{arxiv}{
\begin{figure}[t]
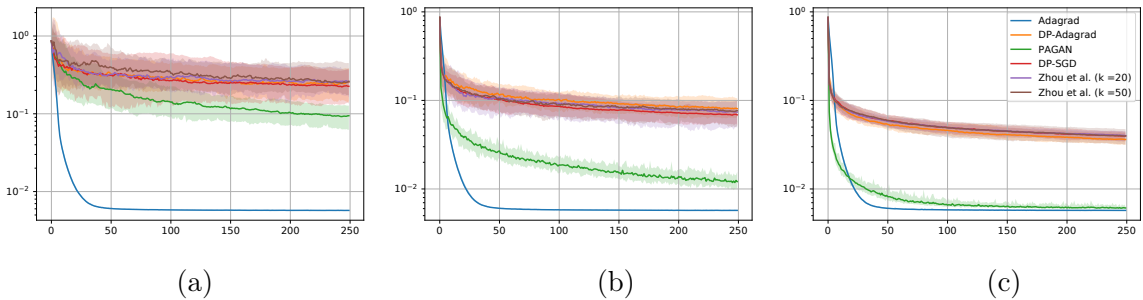

  \begin{center}
    \begin{tabular}{ccc}
      \begin{overpic}[width=.34\columnwidth]{
      {plots/final-eps=.1}.pdf}
      \end{overpic} &
      \hspace{-1cm}
      \begin{overpic}[width=.34\columnwidth]{
    {plots/final-eps=1}.pdf}
      \end{overpic} &
      \hspace{-1cm}
      \begin{overpic}[width=.34\columnwidth]{
	  {plots/final-eps=4}.pdf}
      \end{overpic}
      \\
      (a) & (b)  & (c)
    \end{tabular}
    \caption{\label{fig:abs-reg} Sample loss as a function of the iterate for
      various optimization methods for synthetic
      absolute regression problem~\eqref{eqn:abs-regression}
      with varying privacy parameters $\diffp$. (a)
      $\diffp = 0.1$. (b) $\diffp = 1$. (c) $\diffp = 4$. }
  \end{center}
\end{figure}
}
{
\begin{figure*}[t]
  \begin{center}
    \begin{tabular}{ccc}
      \begin{overpic}[width=.75\columnwidth]{
      {plots/final-eps=.1}.pdf}
      \end{overpic} &
      \hspace{-1cm}
      \begin{overpic}[width=.75\columnwidth]{
	  {plots/final-eps=1}.pdf}

      \end{overpic} &
      \hspace{-1cm}
      \begin{overpic}[width=.75\columnwidth]{
      {plots/final-eps=4}.pdf}
      \end{overpic}
      \\
      (a) & (b)  & (c)
    \end{tabular}
    \caption{\label{fig:abs-reg} Sample loss as a function of the iterate for
      various optimization methods for synthetic
      absolute regression problem~\eqref{eqn:abs-regression}
      with varying privacy parameters $\diffp$. (a)
      $\diffp = 0.1$. (b) $\diffp = 1$. (c) $\diffp = 4$. }
  \end{center}
\end{figure*}
}

If our \PAGAN~algorithm indeed captures the aspects of AdaGrad and other
adaptive methods, we expect it to outperform other private stochastic
optimization methods at the least in those scenarios where AdaGrad improves
upon stochastic gradient methods---as basic sanity check.  To that end, in
our first collection of experiments, we compare \PAGAN against standard
implementations of private AdaGrad and SGD methods. We also compare our
method against Projected DP-SGD (PDP-SGD)~\cite{ZhouWuBa20}, which projects
the noisy gradients into the (low-dimensional) subspace of the top $k$
eigenvectors of the second moment of gradients.

We consider an absolute regression problem with data $(a_i, b_i)
\in \R^d \times \R$ and loss $F(x;a_i,b_i) = |\<a_i,x\> -
b_i|$. Given $n$ datapoints $(a_1,b_1),\dots,(a_n,b_n)$, we
wish to solve
\begin{equation}
  \label{eqn:abs-regression}
  \mbox{minimize} ~~
  f(x) = \frac{1}{n} \sum_{i=1}^n |\<a_i,x\> - b_i|.   
\end{equation}
We construct the data by drawing
an optimal $x\opt \sim \uniform\{-1, 1\}^d$,
sampling $a_i \simiid \normal(0,
\diag(\sigma)^2)$ for a vector $\sigma \in \R_+^d$,
and setting $b_i = \<a_i,x\opt\> + \noise_i$ for noise $\noise_i
\simiid \laplace(0,\tau)$, where $\tau \ge 0$.

We compare several algorithms in this experiment: non-private AdaGrad; the
naive implementations of private SGD (\PASAN, Alg.~\ref{Algorithm1}) and
AdaGrad (\PAGAN, Alg.~\ref{Algorithm2}), with $A_k = I$; \PAGAN~with the
optimal diagonal matrix scaling $A_k$ we derive in~\Cref{sec:ub-adagrad};
and Zhou et al.'s PDP-SGD with ranks $k = 20$ and $k = 50$. In our
experiments, we use the parameters $n = 5000$, $d=100$, $\sigma_j =
j^{-3/2}$, $\tau = 0.01$, and the batch size for all methods is $b = 70$.
As optimization methods are sensitive
to stepsize choice even non-privately~\cite{AsiDu19siopt}, we
run each method with different values of initial stepsize in $\{ 0.005,
0.01, 0.05, 0.1, 0.15, 0.2, 0.4, 0.5, 1.0 \}$ to find the best stepsize value. Then we run each method $T=30$ times and report the median of the loss as a function of the iterate with 95\% confidence intervals.

\Cref{fig:abs-reg} demonstrates the results of this experiment.  Each plot
shows the loss of the methods against iteration count in various privacy
regimes. In the high-privacy setting (\Cref{fig:abs-reg}(a)), the
performance of all private methods is worse than the non-private algorithms,
though \PAGAN~(Alg.~\ref{Algorithm2}) seems to be outperforming other
algorithms. As we increase the privacy parameter---reducing privacy
preserved---we see that \PAGAN~quickly starts to enjoy faster convergence,
resembling non-private AdaGrad. 
different for non-private methods).  In contrast, the standard
implementation of private AdaGrad---even in the moderate privacy regime with
$\diffp=4$---appears to obtain the slower convergence of SGD rather than the
adaptive methods. This is consistent with the predictions our theory makes:
the isometric Gaussian noise addition that standard private stochastic
gradient methods (e.g.\ \PASAN~and variants) employ eliminates the geometric
properties of gradients (e.g., sparsity) that adaptive methods can---indeed,
must~\citep{LevyDu19}---leverage for improved convergence.

\subsection{Training Private Language Models on WikiText-2}
\label{sec:exp-lm}

Following our simulation results, we study the performance of \PAGAN and \PASAN for fitting a next word prediction model. Here, we train a variant of
a recurrent neural network with Long-Short-Term-Memory
(LSTM)~\cite{HochreiterJu97} on the WikiText-2
dataset~\cite{MerityXiBrSo17}, which is split into train, validation, and
test sets. We further split the train set to 59,674 data points, where each
data point has $35$ tokens. The input data to the model consists of a
one-hot-vector $x \in \{0, 1\}^{d}$, where $d = 8,\!000$. The first
7,999 coordinates correspond to the most frequent tokens in the
training set, and the model reserves the last coordinate for unknown/unseen
tokens. We train a full network, which consists of a fully connected
embedding layer $x \mapsto Wx$ mapping to 120 dimensions, where $W \in
\R^{120 \times 8000}$; two layers of LSTM units with 120 hidden units,
which then output a vector $h \in \R^{120}$; followed by a fully
connected layer $h \mapsto \Theta h$ expanding the representation
via $\Theta \in \R^{8000 \times 120}$; and then into a softmax
layer to emit next-word probabilities via a logistic regression model.
The entire model contains $2,\!160,\!320$ trainable parameters. 

We use Abadi et al.'s moments accountant analysis~\cite{AbadiChGoMcMiTaZh16} to
track the privacy losses of each of the methods.  In each experiment, for
\PAGAN and \PASAN we use gradients trained for one epoch on a held-out dataset (a subset of the WikiText 103 dataset~\cite{MerityXiBrSo17} which does not intersect with WikiText-2) to estimate moment
bounds and gradient norms, as in Section~\ref{sec:unknown-cov}; these
choices---while not private---reflect the common practice that we may have
access to public data that provides a reasonable proxy for the actual
moments on our data. Moreover, our convergence guarantees in
Section~\ref{sec:algs} are robust in the typical sense of stochastic
gradient methods~\citep{NemirovskiJuLaSh09}, in that mis-specifying the
moments by a multiplicative constant factor yields only constant factor
degradation in convergence rate guarantees, so we view this as an acceptable
tradeoff in practice.
It is worth noting that we ignore the model trained over the public data and use that one epoch solely for estimating the second moment of gradients.



In our experiments, we evaluate the performance of the trained models with
validation- and test-set perplexity. While we
propose adaptive algorithms, we still require
hyperparameter tuning, and thus perform a hyper-parameter search
over three algorithm-specific constants:
a multiplier $\alpha \in \{0.1, 0.2, 0.4, 0.8, 1.0, 10.0, 50.0 \}$ for step-size,
mini-batch size $b = 250$, and projection
threshold $B \in \{0.05, 0.1, 0.5, 1.0\}$. Each run of these algorithms takes $<$ 4 hours on 
a standard workstation without any accelerators.
We trained the LSTM model above with
\PAGAN and \PASAN and compare its performance with DP-SGD \cite{AbadiChGoMcMiTaZh16}.
We also include completely
non-private SGD and AdaGrad for reference. 
{ 
  We do not include PDP-SGD \cite{ZhouWuBa20} in this experiment as, for our
  $N=2.1 \cdot 10^6$ parameter model, computing the low-rank subspace for
  gradient projection that PDP-SGD requires is quite challenging.  Indeed,
  computing the gradient covariance matrix Zhou et al.~\cite{ZhouWuBa20}
  recommend is certainly infeasible. While power iteration or Oja's method
  can make computing a $k$-dimensional projection matrix feasible, the
  additional memory footprint of this $kN$-sized matrix (compared to the
  original model size $N$) can be prohibitive, restricting us to smaller
  models or very small $k$.
  For such small values ($k=50$),
  our experiments show that PDP-SGD achieves significantly worse error than the algorithms we consider hence we do not include it in the plots.
  On the other hand, our (diagonal)
  approach, like diagonal AdaGrad, only requires an additional memory of
  size $N$.  }

For
each of the privacy levels
$\diffp \in \{0.5, 1, 3\}$ we consider, we
present the performance of each algorithm in terms of best validation set
and test-set perplexity in Figure~\ref{fig:perplexing} and Table \ref{Table1}. 

\iftoggle{arxiv}{
\begin{figure}[t]
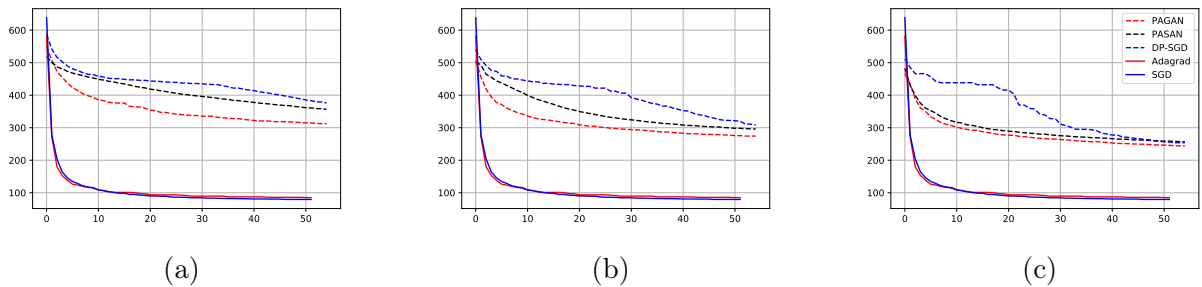

  \begin{center}
    \begin{tabular}{ccc}
      \begin{overpic}[width=.32\columnwidth]{
      {plots/LSTM-exp-eps=.5-arxiv}.pdf}
      \end{overpic} &
      \begin{overpic}[width=.32\columnwidth]{
      {plots/LSTM-exp-eps=1-arxiv}.pdf}
      \end{overpic} &
      \begin{overpic}[width=.32\columnwidth]{
    {plots/LSTM-exp-eps=3-arxiv}.pdf}
      \end{overpic} 
      \\
      (a) & (b)  & (c)
    \end{tabular}
    \caption{\label{fig:perplexing} Minimum validation perplexity versus training rounds for seven epochs for \PAGAN, \PASAN, and the standard differentially private stochastic gradient method (DP{-}SGD) \cite{AbadiChGoMcMiTaZh16},
    varying privacy levels (a) $\diffp = .5$, (b) $\diffp = 1$ and (b) $\diffp = 3$}
  \end{center}
\end{figure}
}
{
\begin{figure*}[t]
  \begin{center}
    \begin{tabular}{ccc}
    \begin{overpic}[width=.7\columnwidth]{
      {plots/LSTM-exp-eps=.5}.pdf}
      \end{overpic} &
      \hspace{-.3cm}
      \begin{overpic}[width=.7\columnwidth]{
      {plots/LSTM-exp-eps=1}.pdf}
      \end{overpic} &
      \hspace{-.3cm}
      \begin{overpic}[width=.7\columnwidth]{
	  {plots/LSTM-exp-eps=3}.pdf}

      \end{overpic} 
      \\
      (a) & (b) & (c) 
    \end{tabular}
    \caption{\label{fig:perplexing} Minimum validation perplexity versus training rounds for seven epochs for \PAGAN, \PASAN, and the standard differentially private stochastic gradient method (DP{-}SGD) \cite{AbadiChGoMcMiTaZh16},
    varying privacy levels (a) $\diffp = .5$, (b) $\diffp = 1$ and (b) $\diffp = 3$}
  \end{center}
\end{figure*}
}

\newcommand{\bftab}{\fontseries{b}\selectfont}

\begin{table}[t]
\caption{Test perplexity error of different methods. For reference, non-private SGD and AdaGrad (without clipping) achieve 75.45 and 79.74, respectively. \label{Table1}}
\vspace{0.4cm}
\begin{center}
\begin{tabular}{|l|lll|}
\hline
Algorithm &  $\diffp=3$ & $\diffp=1$  & $\diffp=.5$ \\
\hline

DP-SGD \cite{AbadiChGoMcMiTaZh16}    & 238.44       & 285.11 & 350.23   \\ 

\PASAN            &  238.87     &  274.63 & 332.52     \\  

\PAGAN            & \bftab 224.82     & \bftab 253.41     & \bftab 291.41  \\ 
\hline
\end{tabular}
\end{center}
\end{table}

We highlight a few messages present in Figure~\ref{fig:perplexing}.  First,
\PAGAN consistently outperforms the non-adaptive methods---though all allow
the same hyperparameter tuning---at all privacy levels, excepting the
non-private $\diffp =
+\infty$, where \PAGAN without clipping is just AdaGrad
and its performance is comparable to the non-private stochastic gradient
method.  Certainly, there remain non-negligible gaps between the performance
of the private methods and non-private methods, but we hope that this
is a step at least toward effective large-scale private optimization and
modeling.

\section{Acknowledgement}
The authors like to thank Vitaly Feldman and Jalaj Upadhyay for helpful discussions in the process of preparing this paper and Daniel Levy for comments on an earlier draft.
Part of this work was done while HA and AF were interning at Apple.

\printbibliography
\newpage

\newpage
\appendix

\noindent
{\Large{\textbf{Appendix}}}

\section{Convergence of SGD and AdaGrad with biased gradients estimates}\label{sec:Biased-SGD-Adagrad}
For the sake of our analysis, we find it helpful to first study the convergence of SGD and AdaGrad when the stochastic estimates of the subgradients may be biased and noisy (Algorithms \ref{Algorithm3} and \ref{Algorithm4}.)

\begin{algorithm}
    \caption{Biased SGD}
    \label{Algorithm3}
    \begin{algorithmic}[1]
    \REQUIRE Dataset $\mathcal{S} = (\ds_1,\dots,\ds_n) \in \domain^n$, convex set $\xdomain$, mini-batch size $\batch$, number of iterations $T$.
    \STATE Choose arbitrary initial point $x^0 \in \xdomain$;
    \FOR{$k = 0;\ k \leq T-1;\ k = k + 1$}
        \STATE Sample a batch $\mathcal{D}_k:= \{z_i^k\}_{i=1}^\batch$ from $\mathcal{S}$ uniformly with replacement;
        \STATE Set $g^k := \frac{1}{\batch} \sum_{i=1}^\batch g^{k,i}$ where $g^{k,i} \in \partial F(x^k; z_i^k)$;
        \STATE Set $ \tilde g^k$ be the biased estimate of $g^k$;
        \STATE Set $ \hg^k := \tilde g^k + \noise^k $ where $\noise^k$ is a zero-mean random variable, independent from previous information;
        \STATE $x^{k+1} :=  \proj_{\xdomain}( x^k -  \alpha_k \hg^k)$;
    \ENDFOR 
    \STATE {\bfseries Return:} $\wb{x}^T \defeq \frac{1}{T} \sum_{k = 1}^T x^k$.
    \end{algorithmic}
\end{algorithm}
\begin{algorithm}
    \caption{Biased Adagrad}
    \label{Algorithm4}
    \begin{algorithmic}[1]
    \REQUIRE Dataset $\mathcal{S} = (\ds_1,\dots,\ds_n) \in \domain^n$, convex set $\xdomain$, mini-batch size $\batch$, number of iterations $T$.
    \STATE Choose arbitrary initial point $x^0 \in \xdomain$;
    \FOR{$k = 0;\ k \leq T-1;\ k = k + 1$}
        \STATE Sample a batch $\mathcal{D}_k:= \{z_i^k\}_{i=1}^\batch$ from $\mathcal{S}$ uniformly with replacement;
        \STATE Set $ \tilde g^k$ be the biased estimate of $g^k$;
        \STATE Set $ \hg^k := \tilde g^k + \noise^k $ where $\noise^k$ is a zero-mean random variable, independent from previous information;
        \STATE Set $H_k = \diag\left(\sum_{i=1}^{k} \hg^i \hg^{i^T}  \right)^{\frac{1}{2}}/ \diam_\infty(\xdomain)$;
        \STATE $x^{k+1} =  \proj_{\xdomain}( x^k -  H_k^{-1}  \hg^k)$ where the projection is with respect to $\norm{\cdot}_{H_k}$;
    \ENDFOR 
    \STATE {\bfseries Return:} $\wb{x}^T \defeq \frac{1}{T} \sum_{k = 1}^T x^k$.\\
    \end{algorithmic}
\end{algorithm}
Also, let
\begin{equation*}
\bias_{\norm{\cdot}}(\tilde g^k) = \E_{\mathcal{D}_k}\left[ \norm{\tilde g^k - g^k} \right]
\end{equation*}
be the bias of $\tilde g_k$ with respect to a general norm $\norm{\cdot}$. The next two theorems characterize the convergence of these two algorithms using this term.
\begin{theorem}
  \label{theorem:biased-sgd}
  Consider the biased SGD method (Algorithm \ref{Algorithm3}) with a non-increasing sequence of stepsizes $\{\stepsize_k\}_{k=0}^{T-1}$. Then
  for any $x\opt \in \argmin_{\xdomain} f$, we have
  \begin{equation*}
    \E[f(\wb{x}^T) - f(x\opt)]
    \le \frac{\diam_2(\xdomain)^2}{2 T \stepsize_{T-1}}
    + \frac{1}{2T} \sum_{k = 0}^{T-1}  \E[ \stepsize_k \ltwo{\hg^k}^2]
    +\frac{\diam_{\dnorm{\cdot}}(\xdomain)}{T} \sum_{k=0}^{T-1} \bias_{\norm{\cdot}}(\tilde g^k).
  \end{equation*}
\end{theorem}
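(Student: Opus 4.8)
The plan is to run the classical projected-subgradient analysis, but to split the actual update direction into three pieces, $\hg^k = g^k + (\tilde g^k - g^k) + \noise^k$, and treat them separately: the unbiased subgradient $g^k$ supplies the convexity term, the difference $\tilde g^k - g^k$ is the bias, and $\noise^k$ is zero-mean. First I would use nonexpansiveness of the Euclidean projection onto $\xdomain$: for any $x\opt \in \argmin_{\xdomain} f$,
\[
  \ltwo{x^{k+1} - x\opt}^2 \le \ltwo{x^k - \stepsize_k \hg^k - x\opt}^2
  = \ltwo{x^k - x\opt}^2 - 2\stepsize_k \<\hg^k, x^k - x\opt\> + \stepsize_k^2 \ltwo{\hg^k}^2 ,
\]
which rearranges to the bound $\<\hg^k, x^k - x\opt\> \le \frac{1}{2\stepsize_k}\big(\ltwo{x^k - x\opt}^2 - \ltwo{x^{k+1}-x\opt}^2\big) + \frac{\stepsize_k}{2}\ltwo{\hg^k}^2$.

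Next I would lower-bound the left-hand side in expectation. Let $\mathcal{F}_k$ denote the history through iteration $k$ (so $x^k$ is $\mathcal{F}_k$-measurable). Since $\noise^k$ is zero-mean and independent of $\mathcal{F}_k$, $\E[\<\noise^k, x^k - x\opt\>\mid\mathcal{F}_k] = 0$. Since the mini-batch $\mathcal{D}_k$ is drawn uniformly with replacement and each $F(\cdot;\ds_j)$ is convex, $\E[g^k \mid \mathcal{F}_k] \in \partial f(x^k)$ by the subdifferential sum rule, so convexity gives $\<\E[g^k\mid\mathcal{F}_k], x^k - x\opt\> \ge f(x^k) - f(x\opt)$. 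Finally, the bias contribution obeys, by the definition of the dual norm and $x^k, x\opt \in \xdomain$,
\[
  \<\tilde g^k - g^k, x^k - x\opt\> \le \norm{\tilde g^k - g^k}\,\dnorm{x^k - x\opt} \le \diam_{\dnorm{\cdot}}(\xdomain)\,\norm{\tilde g^k - g^k}.
\]
Taking expectations and using the tower property yields $\E[\<\hg^k, x^k - x\opt\>] \ge \E[f(x^k) - f(x\opt)] - \diam_{\dnorm{\cdot}}(\xdomain)\,\bias_{\norm{\cdot}}(\tilde g^k)$.

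Then I would combine the two displays, sum over $k = 0,\dots,T-1$, and handle the weighted telescoping sum $\sum_k \frac{1}{2\stepsize_k}\big(\ltwo{x^k - x\opt}^2 - \ltwo{x^{k+1}-x\opt}^2\big)$ by Abel summation: because $\{\stepsize_k\}$ is non-increasing the coefficients $\frac{1}{2\stepsize_k}-\frac{1}{2\stepsize_{k-1}}$ are nonnegative, so bounding each $\ltwo{x^k - x\opt}^2 \le \diam_2(\xdomain)^2$ collapses the sum to $\frac{\diam_2(\xdomain)^2}{2\stepsize_{T-1}}$. Dividing by $T$ and applying Jensen's inequality to the averaged iterate $\wb{x}^T$ gives the claim. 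I expect the only real friction to be bookkeeping: arranging the conditioning so the zero-mean noise term genuinely drops, confirming that $\E[\norm{\tilde g^k - g^k}]$ is exactly the quantity denoted $\bias_{\norm{\cdot}}(\tilde g^k)$, and absorbing the minor index shift between $\wb{x}^T = \frac1T\sum_{k=1}^T x^k$ and the sum over $x^0,\dots,x^{T-1}$ via a relabeling of iterates.
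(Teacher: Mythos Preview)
Your proposal is correct and follows essentially the same argument as the paper: projection nonexpansiveness, the three-term decomposition of $\hg^k$ into subgradient, bias, and zero-mean noise, convexity to extract $f(x^k)-f(x\opt)$, H\"older to bound the bias contribution, and the standard Abel-summation/telescoping step exploiting monotone stepsizes. The only cosmetic difference is that the paper writes the decomposition relative to a fixed $f'(x^k)$ and then observes $\E[\langle f'(x^k)-g^k,x^k-x\opt\rangle]=0$, whereas you equivalently invoke $\E[g^k\mid\mathcal{F}_k]\in\partial f(x^k)$ directly.
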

\begin{proof}
We first consider the progress of a single step of the gradient-projected
stochastic gradient method.
We have
\begin{align*}
  \half \ltwos{x^{k + 1} - x\opt}^2
  & \le \half \ltwos{x^k - x\opt}^2
  - \stepsize_k \<\hat{g}^k, x^k - x\opt\>
  + \frac{\stepsize_k^2}{2} \ltwos{\hat{g}^k}^2 \\
  & = \half \ltwos{x^k - x\opt}^2
  - \stepsize_k \<f'(x^k), x^k - x\opt\>
  + \stepsize_k E_k
  + \frac{\stepsize_k^2}{2} \ltwos{\hat{g}^k}^2,
\end{align*}
where the error random variable $E_k$ is given by
\begin{equation*}
  E_k \defeq \<f'(x^k) - g^k, x^k - x\opt\>
  + \<g^k - \tilde{g}^k, x^k - x\opt\>
  + \<\tilde{g}^k - \hat{g}^k, x^k - x\opt\>.
\end{equation*}
Using that $\<f'(x^k), x^k - x\opt\>
\le f(x^k) - f(x\opt)$ then yields
\begin{equation*}
  f(x^k) - f(x\opt)
  \le \frac{1}{2 \stepsize_k}
  \left(\ltwos{x^k - x\opt}^2 - \ltwos{x^{k+1} - x\opt}^2\right)
  + \frac{\stepsize_k}{2} \ltwos{\hat{g}^k}^2
  + E_k.
\end{equation*}
Summing for $k = 0, \ldots, T-1$, by rearranging the terms and using that the stepsizes
are non-increasing, we obtain
\begin{equation}
  \label{eqn:intermediate-regret-bound}
  \sum_{k = 1}^T [f(x^k) - f(x\opt)]
  \le \frac{\diam(\xdomain)^2}{2 \stepsize_{T-1}}
  + \sum_{k = 0}^{T-1} \frac{\stepsize_k}{2} \ltwos{\hat{g}^k}^2
  + \sum_{k = 0}^{T-1} E_k.
\end{equation}
Taking expectations from both sides, we have
\begin{align*}
  \E[E_k] 
  &= \E \left[ \<f'(x^k) - g^k, x^k - x\opt\> \right ] + \E \left[ \<g^k - \tilde{g}^k, x^k - x\opt\> \right ] + \E \left[ \<\tilde{g}^k - \hat{g}^k, x^k - x\opt\> \right ]\\ 
  &= \E \left[ \<g^k - \tilde{g}^k, x^k - x\opt\> \right ] \\
  & \le \bias_{\norm{\cdot}}(\tilde g^k) \cdot \diam_{\dnorm{\cdot}}(\xdomain),
\end{align*}
 where the second equality comes from the fact that the two other expectations are zero and the last inequality follows from the Holder's inequality.
\end{proof}
\begin{remark}
This result holds in the case that $\stepsize_k$'s are adaptive and depend on observed gradients. 
\end{remark}

Next theorem states the convergence of biased Adagrad (Algorithm \ref{Algorithm4}).
\begin{theorem}
  \label{theorem:biased-adagrad}
  Consider the biased Adagrad method (Algorithm \ref{Algorithm4}). Then
  for any $x\opt \in \argmin_{\xdomain} f$, we have
  \begin{align*}
\E [f(\wb{x}^T) - f(x\opt)]
\leq  \frac{\diam_\infty(\xdomain)}{T} \sum_{j=1}^d \E \left [ \sqrt{\sum_{k=0}^{T-1} (\hg^{k}_j)^2} \right ]
+ \frac{\diam_{\dnorm{\cdot}}(\xdomain)}{T} \sum_{k=0}^{T-1} \bias_{\norm{\cdot}}(\tilde g^k).
\end{align*}
\end{theorem}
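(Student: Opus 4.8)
The plan is to follow the proof of Theorem~\ref{theorem:biased-sgd} almost verbatim, replacing its fixed-metric per-step inequality by the adaptive-metric (AdaGrad) one and leaving the bias/noise bookkeeping untouched. Write $g^k := \frac{1}{\batch}\sum_{i=1}^\batch g^{k,i}$ with $g^{k,i}\in\partial F(x^k;z_i^k)$ for the minibatch subgradient, let $f'(x^k)$ be the conditional expectation of $g^k$ given the history through iteration $k-1$ (so $f'(x^k)\in\partial f(x^k;\Ds)$ since the batch is sampled uniformly from $\Ds$), and let $H_k$ be the matrix in line~5 of Algorithm~\ref{Algorithm4}, i.e. $H_k = \diag(s_{k,1},\dots,s_{k,d})/\diam_\infty(\xdomain)$ with $s_{k,j} = \sqrt{\sum_{i\le k}(\hg^i_j)^2}$ nondecreasing in $k$. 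Since $x^{k+1} = \proj_\xdomain(x^k - H_k^{-1}\hg^k)$ is the projection onto $\xdomain$ in the norm $\norm{\cdot}_{H_k}$, non-expansiveness of this projection ($\norm{x^{k+1}-x\opt}_{H_k}\le\norm{x^k - H_k^{-1}\hg^k - x\opt}_{H_k}$ for $x\opt\in\xdomain$) expands to the one-step bound
\[
  \<\hg^k, x^k - x\opt\> \le \frac{1}{2}\left(\norm{x^k-x\opt}_{H_k}^2 - \norm{x^{k+1}-x\opt}_{H_k}^2\right) + \frac{1}{2}\norm{\hg^k}_{H_k^{-1}}^2 .
\]
As in Theorem~\ref{theorem:biased-sgd} I would then split $\<\hg^k,x^k-x\opt\> = \<f'(x^k),x^k-x\opt\> + E_k$, where $E_k = \<f'(x^k)-g^k,x^k-x\opt\> + \<g^k-\tilde g^k,x^k-x\opt\> + \<\tilde g^k - \hg^k, x^k-x\opt\>$, and use convexity, $\<f'(x^k),x^k-x\opt\>\ge f(x^k)-f(x\opt)$.

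Summing the one-step bound over $k = 0,\dots,T-1$, the telescoping term is handled by the usual non-private AdaGrad argument: since $H_k \succcurlyeq H_{k-1}$ (with $H_{-1} := 0$) and $\norm{x^k - x\opt}_\infty \le \diam_\infty(\xdomain)$,
\[
  \sum_{k=0}^{T-1}\!\left(\norm{x^k-x\opt}_{H_k}^2 - \norm{x^{k+1}-x\opt}_{H_k}^2\right) \le \sum_{k=0}^{T-1}\norm{x^k-x\opt}_{H_k - H_{k-1}}^2 \le \diam_\infty(\xdomain)^2\,\tr(H_{T-1}) = \diam_\infty(\xdomain)\sum_{j=1}^d \sqrt{\sum_{k=0}^{T-1}(\hg^k_j)^2},
\]
while the gradient term is bounded via the elementary inequality $\sum_{k=0}^{T-1}(\hg^k_j)^2/s_{k,j} \le 2\,s_{T-1,j}$, giving $\sum_{k=0}^{T-1}\norm{\hg^k}_{H_k^{-1}}^2 = \diam_\infty(\xdomain)\sum_{j}\sum_k (\hg^k_j)^2/s_{k,j} \le 2\diam_\infty(\xdomain)\sum_j\sqrt{\sum_k(\hg^k_j)^2}$. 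Together these control $\sum_{k=0}^{T-1}\<\hg^k,x^k-x\opt\>$ by a universal constant times $\diam_\infty(\xdomain)\sum_{j=1}^d\sqrt{\sum_{k=0}^{T-1}(\hg^k_j)^2}$ (a degenerate $s_{k,j}=0$ is dealt with by restricting to coordinates that have already appeared, or by a negligible $\delta I$ regularization of $H_k$), which is the first term of the claimed bound.

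Finally I would take expectations and bound $\E[E_k]$ exactly as in Theorem~\ref{theorem:biased-sgd}: conditioning on the history through iteration $k-1$, the term $\<f'(x^k)-g^k,x^k-x\opt\>$ has zero expectation by (conditional) unbiasedness of the minibatch subgradient, $\<\tilde g^k - \hg^k, x^k - x\opt\>$ has zero expectation since $\noise^k = \hg^k - \tilde g^k$ is zero-mean and independent of the past, and $\<g^k - \tilde g^k, x^k - x\opt\> \le \norm{\tilde g^k - g^k}\,\dnorm{x^k - x\opt} \le \norm{\tilde g^k - g^k}\,\diam_{\dnorm{\cdot}}(\xdomain)$ by H\"older's inequality, with expectation $\bias_{\norm{\cdot}}(\tilde g^k)\,\diam_{\dnorm{\cdot}}(\xdomain)$. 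Dividing by $T$ and invoking convexity of $f$ once more, $\E[f(\wb x^T) - f(x\opt)] \le \frac{1}{T}\sum_{k=0}^{T-1}\E[f(x^k)-f(x\opt)]$, yields the theorem. The only genuinely new ingredient relative to Theorem~\ref{theorem:biased-sgd}---and the step I expect to require the most care---is the AdaGrad telescoping with the time-varying metric $H_k$: one must keep the projection in the correct changing norm, verify $H_k - H_{k-1}\succcurlyeq 0$ so the cross terms have the right sign, make sure it is the $\ell_\infty$ (not $\ell_2$) diameter that multiplies $\tr(H_{T-1})$, and handle invertibility of $H_k$; everything else is bookkeeping identical to the biased-SGD case.
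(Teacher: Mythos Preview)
Your proposal is correct and follows essentially the same route as the paper: the paper's proof establishes the identical one-step inequality via non-expansiveness of the $\norm{\cdot}_{H_k}$-projection, handles $\E[E_k]$ exactly as you do, and then defers the telescoping-and-$\sum_k\norm{\hg^k}_{H_k^{-1}}^2$ bookkeeping to ``standard techniques for AdaGrad (Corollary~4.3.8 in~\cite{Duchi18})'', which is precisely the argument you wrote out explicitly (yielding the same bound up to the usual constant $3/2$). One small typo: with your definition of $E_k$ you have $\<\hg^k,x^k-x\opt\> = \<f'(x^k),x^k-x\opt\> - E_k$, not $+E_k$, but this does not affect the argument since the bias bound is symmetric.
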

\begin{proof}
Recall that $x^{k+1}$ is the projection of $x^k - H_k^{-1} \hat{g}^k$ into $\xdomain$ with respect to $\Vert . \Vert_{\mathcal{H}_k}$. Hence, since $x \opt \in \xdomain$ and projections are non-expansive, we have
\begin{equation} \label{eqn:non-expansive}
\normH{ x^{k+1} - x \opt}{k}^2
\leq \normH{ x^k - H_k^{-1} \hg^k - x \opt}{k}^2.
\end{equation}
Now, expanding the right hand side yields
\begin{align*}
  \half \normH{x^{k + 1} - x\opt}{k}^2
  & \leq \half \normH{x^k - x\opt}{k}^2 -  \<\hg^k, x^k - x\opt\> + \half \normHi{\hg^k}{k}^2 \\
  & = \half \normH{x^k - x\opt}{k}^2
  - \<g^k, x^k - x\opt\>
  + \<g^k - \hg^k, x^k - x\opt\>
  + \half \normHi{\hg^k}{k}^2.
\end{align*}
Taking expectation and using that $ \E \left [\<g^k, x^k - x\opt\> \right ] \geq \E \left [ f(x^k) - f(x\opt) \right]$ from convexity along with the fact that $\E \left [ \<g^k - \hg^k, x^k - x\opt\> \right ] = \E \left [ \<g^k - \tilde g^k, x^k - x\opt\> \right ] $, we have
\begin{align*}
\half \E & \left [  \normH{x^{k + 1} - x\opt}{k}^2 \right ] \nonumber \\
& \leq \E \left [ \half \normH{x^k - x\opt}{k}^2 - (f(x^k) - f(x\opt))  + \half \normHi{\hg^k}{k}^2 \right ]  
+ \E \left [ \<g^k - \tilde g^k, x^k - x\opt\> \right ]. 
\end{align*}
Thus, using Holder's inequality, we have
\begin{equation*}
f(x^k) - f(x\opt)
	\le  \half  \E \left[  \normH{x^{k} - x\opt}{k}^2 - \normH{x^{k + 1} - x\opt}{k}^2 
		+\normHi{\hg^k}{k}^2 \right] 
		+  \bias_{\norm{\cdot}}(\tilde g^k) \cdot \diam_{\dnorm{\cdot}}(\xdomain).
\end{equation*}
Now the claim follows using standard techniques for Adagrad (as 
for example Corollary 4.3.8 in~\cite{Duchi18}).
\end{proof}


\section{Proofs of Section \ref{sec:algs}} \label{sec:proofs-alg}
\subsection{Proof of Lemma \ref{lemma:privacy}}\label{sec:proof-lemma:privacy}
The proof mainly follows from Theorem 1 in \cite{AbadiChGoMcMiTaZh16} where the authors provide a tight privacy bound for mini-batch SGD with bounded gradient using the Moments Accountant technique. Here we do not have the bounded gradient assumption. However, recall that we have
\begin{equation*}
\hg^k = \frac{1}{\batch} \sum_{i=1}^\batch \tilde{g}^{k,i} + \frac{\sqrt{\log(1/\delta)}}{\batch \diffp} \noise^k, \quad \tilde{g}^{k,i} = \pi_{A_k}(g^{k,i}),  
\end{equation*}
where $\norm{\tilde{g}^{k,i}}_{A_k} \leq 1$ and $\noise^k \simiid \normal(0, A_k^{-1})$. Note that for any Borel-measurable set $O \subset \reals^d$, $A_k^{1/2} O$ is also Borel-measurable, and furthermore, we have
\begin{align*}
\P\left(\hg^k \in O \right) &= \P\left(A_k^{1/2} \hg^k \in A_k^{1/2} O \right)  = \P\left(\frac{1}{\batch} \sum_{i=1}^\batch A_k^{1/2} \tilde{g}^{k,i} + \frac{\sqrt{\log(1/\delta)}}{\batch \diffp} A_k^{1/2} \noise^k \in A_k^{1/2} O \right),
\end{align*}
where, now, $\norm{A_k^{1/2} \tilde{g}^{k,i}}_2 \leq 1$ and $A_k^{1/2} \noise^k \simiid \normal(0, I_d)$ and we can use Theorem 1 in \cite{AbadiChGoMcMiTaZh16}. 
\subsection{The proof deferred from Example \ref{example-sub-Gaussian}} \label{sec:proof-example-sub-Gaussian}
Note that $\nabla F(x; z) = \nabla g(x) + Z$, and hence we could take $G(Z,C) = \sup_{x \in \xdomain} \|\nabla g(x)\|_C + \|Z\|_C $. As a result, by Minkowski inequality, we have
\begin{align}  
\E \left[ G(Z,C)^p \right ] ^{1/p}
	& \le \sup_{x \in \xdomain} \|\nabla g(x)\|_C + \E \left[ \|Z\|_C^{p} \right ] ^{1/p} 
	\le \mu + \sup_{x \in \xdomain} \E \left[ \|Z\|_C^{p} \right ] ^{1/p}. \label{eqn:example_1}
\end{align}
Now note that $C^{1/2} Z$ is $(C_{11}\sigma_1^2, \ldots, C_{dd}\sigma_d^2)$ sub-Gaussian. Also, 
we also know that if $X$ is $\sigma^2$ sub-gaussian, then
$\E[|X|^p]^{1/p} \le O(\sigma \sqrt{p})$, which implies the desired result.
\subsection{Intermediate Results}
Before discussing the proofs of Theorems \ref{theorem:convergence-SGD} and \ref{theorem:private-adagrad}, we need to state a few intermediate results which will be used in our analysis.

First, recall the definition of $\bias_{\norm{\cdot}}(\tilde g^k)$ from Section \ref{sec:Biased-SGD-Adagrad}:
\begin{equation*}
\bias_{\norm{\cdot}}(\tilde g^k) = \E_{\mathcal{D}_k}\left[ \norm{\tilde g^k - g^k} \right]
\end{equation*}
Here, we first bound the bias term. To do so, we use the following lemma:
\begin{lemma}[Lemma 3,~\cite{BarberDu14a}]
\label{lemma:projection-bias}
Consider the ellipsoid projection operator $\pi_D$. Then, for any random vector $X$ with $\E[\| X \|_C^p]^{1/p} \le \lipobj$, we have
\begin{equation*}
\E_X[\norm{\pi_D(X) - X}_C] \leq \frac{\lipobj^p}{(p - 1) B^{p-1}}.
\end{equation*}
\end{lemma}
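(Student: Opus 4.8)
The plan is to convert the expected projection error into a one–dimensional tail integral of the random scalar $\norm{X}_C$, which can then be controlled by Markov's inequality. Throughout, write $D = C/B^2$, so that the ellipsoid we project onto is exactly the $\norm{\cdot}_C$-ball $E_D = \{x : \norm{x}_C \le B\}$.

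First I would dispose of the ``easy'' event: since $\pi_D$ fixes every point of $E_D$, we have $\pi_D(X) = X$ on $\{\norm{X}_C \le B\}$, so that
\begin{equation*}
  \E\!\left[\norm{\pi_D(X) - X}_C\right] = \E\!\left[\norm{\pi_D(X) - X}_C\,\indic{\norm{X}_C > B}\right].
\end{equation*}
Second, and this is the geometric heart of the argument, I would prove the pointwise identity $\norm{\pi_D(x) - x}_C = (\norm{x}_C - B)_+$. The cleanest route is the change of variables $u = D^{1/2}x$: it turns $E_D$ into the unit $\ell_2$-ball and turns the projection into the radial retraction $u \mapsto u/\max\{1,\ltwo{u}\}$, while $\ltwo{u} = \norm{x}_D = \norm{x}_C/B$ and hence $\norm{x}_C = B\ltwo{u}$. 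Undoing the substitution, whenever $\norm{x}_C > B$,
\begin{equation*}
  \norm{\pi_D(x) - x}_C = \left(1 - \frac{1}{\ltwo{u}}\right)\norm{x}_C = \norm{x}_C - B .
\end{equation*}

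Third, I would bound the tail of $\norm{X}_C$ and integrate. Applying Markov's inequality to $\norm{X}_C^p$ and using $\E[\norm{X}_C^p] \le \lipobj^p$ gives $\P(\norm{X}_C > t) \le \lipobj^p/t^p$ for all $t > 0$; writing the expectation of the nonnegative variable $(\norm{X}_C - B)_+$ as the integral of its tail then yields
\begin{equation*}
  \E\!\left[(\norm{X}_C - B)_+\right] = \int_B^\infty \P(\norm{X}_C > t)\,dt \le \int_B^\infty \frac{\lipobj^p}{t^p}\,dt = \frac{\lipobj^p}{(p-1)B^{p-1}},
\end{equation*}
which is exactly the claimed bound; note the integral converges precisely because $p > 1$, which is what produces the $1/(p-1)$ factor.

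I expect the only subtle step to be the pointwise identity in step two: it relies on $E_D$ being a genuine $\norm{\cdot}_C$-ball and on the projection being compatible with that geometry, so that after diagonalizing via $u = D^{1/2}x$ the displacement is purely radial (one must keep the relation $D = C/B^2$ straight, as it is what makes the identity exact rather than merely an inequality). With that in hand, the reduction to the tail event, Markov's inequality, and the tail integral are all routine, so I anticipate no further obstacles.
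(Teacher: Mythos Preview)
The paper does not supply its own proof of this lemma: it is simply quoted as Lemma~3 of \cite{BarberDu14a}. Your argument---restrict to the event $\{\norm{X}_C>B\}$, establish the pointwise identity $\norm{\pi_D(x)-x}_C=(\norm{x}_C-B)_+$, then integrate the Markov tail $\P(\norm{X}_C>t)\le \lipobj^p/t^p$ over $[B,\infty)$---is the standard one, and it is correct when $\pi_D$ is the radial clip $x\mapsto x\cdot\min\{1,B/\norm{x}_C\}$, which is what the Barber--Duchi result actually concerns.

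There is, however, one point you flagged but did not fully close. The paper's notation section defines $\pi_A(x)=\argmin_y\{\ltwo{y-x}:y\in E_A\}$, the \emph{Euclidean} projection onto the ellipsoid. Under your substitution $u=D^{1/2}x$ this map does \emph{not} become the radial retraction onto the unit $\ell_2$-ball; it becomes the $\norm{\cdot}_{D^{-1}}$-projection onto that ball, so the displacement is generally not radial and the identity $\norm{\pi_D(x)-x}_C=(\norm{x}_C-B)_+$ can fail. (Take $C=D=\diag(1,100)$, $B=1$, $x=(1,1)$: the Euclidean projection is approximately $(0.847,0.053)$, giving $\norm{\pi_D(x)-x}_C\approx 9.47>9.05\approx\norm{x}_C-B$.) Your step two therefore requires $\pi_D$ to be the $\norm{\cdot}_D$-nearest-point map (equivalently the radial clip), which is the reading consistent with the cited source and with how the bound is actually used downstream; under the paper's literal $\ell_2$-projection definition the exact constant does not follow from your change-of-variables argument.
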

We will find this lemma useful in our proofs. Another useful lemma that we will use it is the following:
  \begin{lemma}\label{lemma:sum_l2}
    Let $a_1, a_2, \ldots$ be an arbitrary sequence in $\R$. Let
    $a_{1:k} = (a_1, \ldots, a_i) \in \R^i$. Then
    \begin{equation*}
      \sum_{k = 1}^n \frac{a_k^2}{\ltwo{a_{1:k}}} \le
      2 \ltwo{a_{1:n}}.
    \end{equation*}
  \end{lemma}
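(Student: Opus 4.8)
The plan is to recognize this as the standard AdaGrad summation lemma and prove it by a telescoping argument. First I would write $s_k := \ltwo{a_{1:k}}^2 = \sum_{i=1}^k a_i^2$, with the convention $s_0 = 0$, so that $a_k^2 = s_k - s_{k-1}$ and $0 = s_0 \le s_1 \le \cdots \le s_n$ is nondecreasing. The claim then becomes $\sum_{k=1}^n (s_k - s_{k-1})/\sqrt{s_k} \le 2\sqrt{s_n}$. (If $s_k = 0$ then necessarily $a_k = 0$; one interprets the $k$th term as $0$, or equivalently simply deletes all indices $k$ with $a_k = 0$ from the sum, which changes neither side.)

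The one elementary inequality driving the proof is that for any $0 \le u \le v$,
\begin{equation*}
  \frac{v - u}{\sqrt v} \le 2\left(\sqrt v - \sqrt u\right),
\end{equation*}
which holds because $v - u = (\sqrt v - \sqrt u)(\sqrt v + \sqrt u) \le (\sqrt v - \sqrt u)\cdot 2\sqrt v$, and we divide by $\sqrt v$. Applying this with $u = s_{k-1}$ and $v = s_k$ bounds each summand by $2(\sqrt{s_k} - \sqrt{s_{k-1}})$.

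Summing over $k = 1,\dots,n$ then collapses the right-hand side telescopically:
\begin{equation*}
  \sum_{k=1}^n \frac{a_k^2}{\ltwo{a_{1:k}}} = \sum_{k=1}^n \frac{s_k - s_{k-1}}{\sqrt{s_k}} \le 2 \sum_{k=1}^n \left(\sqrt{s_k} - \sqrt{s_{k-1}}\right) = 2\sqrt{s_n} = 2\,\ltwo{a_{1:n}},
\end{equation*}
which is exactly the asserted bound.

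I do not expect any substantive obstacle: the proof is a two-line telescoping computation, and the only point requiring (minor) care is the degenerate case of vanishing partial sums, handled by the convention above. The result is classical — it appears, for instance, as an ingredient of the original AdaGrad analysis of Duchi, Hazan, and Singer and in Duchi's lecture notes~\cite{Duchi18} — so the write-up amounts to recording this elementary estimate.
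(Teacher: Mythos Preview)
Your proof is correct and is essentially the paper's argument: both rest on the single inequality $(v-u)/\sqrt{v} \le 2(\sqrt{v}-\sqrt{u})$ for $0 \le u \le v$ applied with $u=s_{k-1}$, $v=s_k$. The only cosmetic difference is that the paper packages the telescoping as an induction on $n$ (invoking concavity of $\sqrt{\cdot}$ to get the same inequality), whereas you sum directly; the content is identical.
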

  \begin{proof}
    We proceed by induction. The base case that $n = 1$ is immediate.
    Now, let us assume the result holds through index $n - 1$, and we wish to
    prove it for index $n$. The concavity of $\sqrt{\cdot}$ guarantees
    that $\sqrt{b + a} \le \sqrt{b} + \frac{1}{2 \sqrt{b}} a$, and so
    \begin{align*}
      \sum_{k = 1}^n \frac{a_k^2}{\ltwo{a_{1:k}}}
      & = \sum_{k = 1}^{n - 1} \frac{a_k^2}{\ltwo{a_{1:k}}}
      + \frac{a_n^2}{\ltwo{a_{1:n}}} \\
      & \leq 2 \ltwo{a_{1:n-1}}
      + \frac{a_n^2}{\ltwo{a_{1:n}}}
      = 2 \sqrt{\ltwo{a_{1:n}}^2 - a_n^2}
      + \frac{a_n^2}{\ltwo{a_{1:n}}} \\
      & \leq
      2 \ltwo{a_{1:n}},
    \end{align*}
    where the first inequality follows from the inductive hypothesis and the second one uses the concavity of $\sqrt{\cdot}$.
  \end{proof}
  
 \subsection{Proof of Theorem \ref{theorem:convergence-SGD}}
 \label{proof-theorem:convergence-SGD}
We first state a more general version of the theorem here:
\begin{theorem}
\label{theorem:convergence-SGD(general)}
Let $\Ds$ be a dataset with $n$ points sampled from distribution $P$. Let $C$ also be a diagonal and positive definite matrix. Consider running Algorithm \ref{Algorithm1} with $T=cn^2/b^2$, $A_k = C/B^2$ where $B > 0$ is a positive real number and $c$ is given by Lemma \ref{lemma:privacy}.  Then, with probability $1-1/n$, we have
\begin{align*}
\E  [f(\wb{x}^T;\Ds)  - \min_{x \in \xdomain} f(x;\Ds)]  
	& \leq \bigO(1)  \left( \frac{\diam_2(\xdomain)}{T} \sqrt{\sum_{k=1}^T \E [\ltwos{g^k}^2] } \right. \\
 	& \left. \quad +  \frac{\diam_2(\xdomain) B \sqrt{\tr(C^{-1})\log(1/\delta)} }{n \diffp}  
 	+   \frac{{ \diam_{\norm{\cdot}_{C^{-1}}}(\xdomain) } ~ (2\lipobj_{2p}(C))^p}{(p - 1) B^{p-1}} \right),
\end{align*}
where the expectation is taken over the internal randomness of the algorithm.
\end{theorem}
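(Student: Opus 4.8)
The plan is to invoke the generic biased-SGD convergence bound (Theorem~\ref{theorem:biased-sgd}) with the norm $\norm{\cdot} = \norm{\cdot}_{C^{-1}}$, whose dual is $\norm{\cdot}_C$, and then control each of the three resulting terms. First I would instantiate Algorithm~\ref{Algorithm1} as a special case of Algorithm~\ref{Algorithm3}: the projected average $\tilde g^k = \frac{1}{b}\sum_i \pi_{A_k}(g^{k,i})$ is the biased estimate of $g^k = \frac{1}{b}\sum_i g^{k,i}$, and the added noise $\noise^k \sim \normal(0, A_k^{-1})$ scaled by $\sqrt{\log(1/\delta)}/(b\diffp)$ is the zero-mean independent perturbation; the adaptive stepsize $\stepsize_k = \alpha/\sqrt{\sum_{i\le k}\ltwos{\hg^i}^2}$ is non-increasing, so the Remark after Theorem~\ref{theorem:biased-sgd} applies. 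This gives
\begin{equation*}
  \E[f(\wb x^T) - f(x\opt)]
  \le \frac{\diam_2(\xdomain)^2}{2T\stepsize_{T-1}}
  + \frac{1}{2T}\sum_{k=0}^{T-1}\E[\stepsize_k \ltwos{\hg^k}^2]
  + \frac{\diam_{\norm{\cdot}_{C^{-1}}}(\xdomain)}{T}\sum_{k=0}^{T-1}\bias_{\norm{\cdot}_{C^{-1}}}(\tilde g^k).
\end{equation*}

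Next I would bound the first two (non-private, adaptive-stepsize) terms together in the standard AdaGrad-style way: with $\stepsize_k = \alpha/\sqrt{\sum_{i\le k}\ltwos{\hg^i}^2}$ and $\alpha = \diam_2(\xdomain)$, the first term is $\diam_2(\xdomain)^2/(2T\stepsize_{T-1}) = \frac{\diam_2(\xdomain)}{2T}\sqrt{\sum_{k=0}^{T-1}\ltwos{\hg^k}^2}$, while Lemma~\ref{lemma:sum_l2} applied to the sequence $a_k = \ltwos{\hg^k}$ gives $\sum_k \stepsize_k \ltwos{\hg^k}^2 \le 2\diam_2(\xdomain)\sqrt{\sum_k \ltwos{\hg^k}^2}$. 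Hence the first two terms are together $O(1)\,\frac{\diam_2(\xdomain)}{T}\sqrt{\sum_k \E[\ltwos{\hg^k}^2]}$ after using Jensen/concavity to pull the expectation inside the square root. Now I split $\ltwos{\hg^k}^2 \lesssim \ltwos{\tilde g^k}^2 + \ltwos{\noise^k}^2\log(1/\delta)/(b\diffp)^2$: the projected gradients satisfy $\ltwos{\tilde g^k} \le \ltwos{g^k}$ (projection onto an ellipsoid through the origin is non-expansive in $\ell_2$... more carefully, $\ltwos{\pi_{A_k}(g^{k,i})}\le \ltwos{g^{k,i}}$, then average), and $\E\ltwos{\noise^k}^2 = \tr(A_k^{-1}) = B^2\tr(C^{-1})$. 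Since $\sqrt{a+b}\le\sqrt a+\sqrt b$, this produces exactly the non-private term $\frac{\diam_2(\xdomain)}{T}\sqrt{\sum_k \E\ltwos{g^k}^2}$ plus the noise contribution $\frac{\diam_2(\xdomain) B\sqrt{\tr(C^{-1})\log(1/\delta)}}{n\diffp}$ once $T = cn^2/b^2$ is substituted (the $\sqrt T$ from summing $T$ i.i.d.\ noise terms cancels against one factor of $1/T$, leaving $\sqrt T/(b\diffp) \asymp n/(b^2\diffp)\cdot b = n/(b\diffp)$; I will track the batch factors carefully).

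For the bias term, I would apply Lemma~\ref{lemma:projection-bias} coordinatewise-style: since $A_k = C/B^2$, the ellipsoid $E_{A_k}$ is $\{x : \norm{x}_C \le B\}$, and for each sample $\E[\norm{\pi_{A_k}(g^{k,i}) - g^{k,i}}_C] \le \hat\lipobj_p(\Ds;C)^p/((p-1)B^{p-1}) \le (2\lipobj_{2p}(C))^p/((p-1)B^{p-1})$ by the hypothesis $\hat\lipobj_p(\Ds;C)\le\lipobj_{2p}(C)$ (using the sampling-with-replacement structure so the relevant moment is the empirical one). Averaging over the batch and noting $\norm{\cdot}_{C^{-1}}$ is dual to $\norm{\cdot}_C$ (the bias in Theorem~\ref{theorem:biased-sgd} is measured in $\norm{\cdot}$ with $\diam$ in the dual norm — so I actually want $\bias_{\norm{\cdot}_C}$ paired with $\diam_{\norm{\cdot}_{C^{-1}}}$, matching the target), gives $\bias \le (2\lipobj_{2p}(C))^p/((p-1)B^{p-1})$ for every $k$, so the third term is $\frac{\diam_{\norm{\cdot}_{C^{-1}}}(\xdomain)(2\lipobj_{2p}(C))^p}{(p-1)B^{p-1}}$. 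This establishes the general statement (Theorem~\ref{theorem:convergence-SGD(general)}); plugging in the stated optimal $B$ balances the noise term against the bias term and yields Theorem~\ref{theorem:convergence-SGD}. The main obstacle I anticipate is bookkeeping the interplay of the three norms ($\ell_2$ for the descent analysis, $\norm{\cdot}_C$ and $\norm{\cdot}_{C^{-1}}$ for the bias/Hölder step) together with the batch-size and $T=cn^2/b^2$ substitutions so that the privacy term comes out as $\frac{B\sqrt{\tr(C^{-1})\log(1/\delta)}}{n\diffp}$ and not off by factors of $b$ or $\sqrt T$; the probability-$1-1/n$ qualifier is just Lemma~\ref{lemma:empirical_Lipschitz} and requires no separate work once we condition on that event.
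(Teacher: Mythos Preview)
Your proposal is correct and follows essentially the same route as the paper: invoke Theorem~\ref{theorem:biased-sgd} with the norm pair $(\norm{\cdot}_C,\norm{\cdot}_{C^{-1}})$, use Lemma~\ref{lemma:sum_l2} with $\alpha=\diam_2(\xdomain)$ to collapse the first two terms into $O(1)\frac{\diam_2(\xdomain)}{T}\sqrt{\sum_k\E\ltwos{\hg^k}^2}$, split off the Gaussian contribution $\E\ltwos{\noise^k}^2=B^2\tr(C^{-1})$ and substitute $T=cn^2/b^2$, and bound each $\bias_{\norm{\cdot}_C}(\tilde g^k)$ via Lemma~\ref{lemma:projection-bias} together with Lemma~\ref{lemma:empirical_Lipschitz} for the $1-1/n$ event. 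The only place you go slightly beyond the paper's write-up is in explicitly arguing $\ltwos{\pi_{A_k}(g^{k,i})}\le\ltwos{g^{k,i}}$ (which holds because the ellipsoid contains the origin, so the $\ell_2$-projection is norm-nonincreasing); the paper's proof tacitly passes from $\tilde g^k$ to $g^k$ in the final display, so your justification there is welcome.
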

\begin{proof}
Let $x\opt \in \argmin _{x \in \xdomain} f(x;\Ds)$. Also, for simplicity, we suppress the dependence of $f$ on $\Ds$ throughout the proof.
First, by Lemma \ref{lemma:empirical_Lipschitz}, we know that with probability at least $1-1/n$, we have
\begin{equation*}
\hat{\lipobj}_p(\Ds;C) \leq 2 \lipobj_{2p}(C), 
\end{equation*}
We consider the setting that this bound holds.
Now, note that by Theorem \ref{theorem:biased-sgd} we have
\begin{equation}\label{eqn:SGD_initial_bound}
    \E[f(\wb{x}^T) - f(x\opt)]
    \le \frac{\diam_2(\xdomain)^2}{2 T \stepsize_{T-1}}
    + \frac{1}{2T} \sum_{k = 0}^{T-1} \E[\stepsize_k \ltwo{\hg^k}^2]
    + \frac{\diam_{\norm{\cdot}_{C^{-1}}}(\xdomain)}{T} \sum_{k = 0}^{T-1} \bias_{\norm{\cdot}_C}(\tilde g^k).
\end{equation}
Using Lemma \ref{lemma:projection-bias}, we immediately obtain the following bound
\begin{equation}\label{eqn:SGD_bias_bound} 
	\bias_{\norm{\cdot}_{C}}(\tilde g^k) 
		= \E \left[ \norm{\tilde g^k - g^k}_{C} \right] 
		\le \frac{\hat{\lipobj}_p(\Ds;C)^p}{(p-1) B^{p-1}}
		\le \frac{(2 \lipobj_{2p}(C))^p}{(p-1) B^{p-1}}
\end{equation}
Plugging \eqref{eqn:SGD_bias_bound} into \eqref{eqn:SGD_initial_bound}, we obtain
\begin{equation}\label{eqn:SGD_2_bound}
    \E[f(\wb{x}^T) - f(x\opt)]
    \le \frac{\diam_2(\xdomain)^2}{2 T \stepsize_{T-1}}
    + \frac{1}{2T} \sum_{k = 0}^{T-1} \E[\stepsize_k \ltwo{\hg^k}^2]
    + \frac{{ \diam_{\norm{\cdot}_{C^{-1}}}(\xdomain) } ~ (2\lipobj_{2p}(C))^p}{(p - 1)B^{p-1}}.
\end{equation}
Next, we substitute the value of $\alpha_k$ and use Lemma \ref{lemma:sum_l2} to obtain
\begin{equation*}
 \sum_{k = 0}^{T-1} \E[\stepsize_k \ltwo{\hg^k}^2] \leq 2 \diam_2(\xdomain) \sqrt{\sum_{k=1}^T \E [\ltwos{\hg^k}^2] },   
\end{equation*}
and by replacing it in \eqref{eqn:SGD_2_bound}, we obtain 
\begin{equation}\label{eqn:SGD_3_bound}
    \E[f(\wb{x}^T) - f(x\opt)]
    \le \frac{3\diam_2(\xdomain)}{2T} \sqrt{\sum_{k=1}^T \E [\ltwos{\hg^k}^2] }
    + \frac{{ \diam_{\norm{\cdot}_{C^{-1}}}(\xdomain) } ~ (2\lipobj_{2p}(C))^p}{(p - 1)B^{p-1}}.
\end{equation}
Finally, note that
\begin{align}
\sqrt{\sum_{k=1}^T \E [\ltwos{\hg^k}^2] } 
& = \sqrt{\sum_{k=1}^T \E [\ltwos{\tg^k}^2] + \frac{\log(1/\delta)}{\batch^2 \epsilon^2} \sum_{k=0}^{T-1} \tr(A_k^{-1})} \nonumber \\
& = \sqrt{\sum_{k=1}^T \E [\ltwos{\tg^k}^2] + T \frac{B^2 \log(1/\delta) \tr(C^{-1})}{\batch^2 \epsilon^2}} \nonumber \\
& \leq \sqrt{2} \left ( \sqrt{\sum_{k=1}^T \E [\ltwos{\tg^k}^2] } 
  + \frac{B \sqrt{\log(1/\delta) \tr(C^{-1})}}{\batch \epsilon} \sqrt{T} \right), \label{eqn:SGD_4_bound}
\end{align}
where the last inequality follows from the fact that $\sqrt{x+y} \leq \sqrt{2} \left ( \sqrt{x} + \sqrt{y} \right)$ for nonnegative real numbers $x$ and $y$. Plugging \eqref{eqn:SGD_4_bound} into \eqref{eqn:SGD_3_bound} completes the proof. 
\end{proof}
\subsection{Proof of Theorem \ref{theorem:private-adagrad}} \label{proof-theorem:private-adagrad}
We first state the more general version of theorem:
\begin{theorem}\label{theorem:private-adagrad(general)}
Let $\Ds$ be a dataset with $n$ points sampled from distribution $P$. Let $C$ also be a diagonal and positive definite matrix. Consider running Algorithm \ref{Algorithm1} with $T=cn^2/b^2$ $A_k = C/B^2$ where $B > 0$ is a positive real number and $c$ is given by Lemma \ref{lemma:privacy}. Then,
with probability $1-1/n$, we have
\begin{align*}
    \E  [f(\wb{x}^T;\Ds)  &- \min_{x \in \xdomain} f(x;\Ds)]
     \leq  \bigO(1) \left ( \frac{ \diam_\infty(\xdomain)}{T} \sum_{j=1}^d 
    \E \left [ \sqrt{\sum_{k=1}^T (g^{k}_j)^2} \right ] \right. \\ 
    & \left. \quad +  \frac{\diam_\infty(\xdomain) B \sqrt{\log(1/\delta)} (\sum_{j=1}^d C_{jj}^{-\half})}{n \diffp } 
    + \frac{ \diam_{\norm{\cdot}_{C^{-1}}}(\xdomain) (2\lipobj_{2p}(C))^p}{(p-1) B^{p-1}} \right),
\end{align*}
where the expectation is taken over the internal randomness of the algorithm.
\end{theorem}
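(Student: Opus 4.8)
\textbf{Proof plan for Theorem~\ref{theorem:private-adagrad(general)}.}
The plan is to instantiate the biased-AdaGrad guarantee of Theorem~\ref{theorem:biased-adagrad} with the Mahalanobis norm $\norm{\cdot}_C$ (whose dual is $\norm{\cdot}_{C^{-1}}$) and then bound separately the two resulting pieces: the AdaGrad coordinate-regret term $\frac{\diam_\infty(\xdomain)}{T}\sum_{j=1}^d \E\sqrt{\sum_k (\hg^k_j)^2}$ and the bias term $\frac{\diam_{\norm{\cdot}_{C^{-1}}}(\xdomain)}{T}\sum_k \bias_{\norm{\cdot}_C}(\tilde g^k)$. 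Observe that \PAGAN\ (Algorithm~\ref{Algorithm2}) is exactly the biased AdaGrad method of Algorithm~\ref{Algorithm4} with $\tilde g^k = \frac1b\sum_i \pi_{A_k}(g^{k,i})$ and additive perturbation $\frac{\sqrt{\log(1/\delta)}}{b\diffp}\noise^k$, which is zero-mean and independent of the past, so Theorem~\ref{theorem:biased-adagrad} applies; privacy comes from Lemma~\ref{lemma:privacy}.

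First I would bound the bias. Since $A_k=C/B^2$, the constraint ellipsoid is $E_{A_k}=\{x:\norm{x}_C\le B\}$, so Lemma~\ref{lemma:projection-bias} applied to each sampled subgradient $g^{k,i}$, which satisfies $\E\norm{g^{k,i}}_C^p\le\hat\lipobj_p(\Ds;C)^p$ by Assumption~\ref{assumption:Lipschitz_C}, gives $\E\norm{\pi_{A_k}(g^{k,i})-g^{k,i}}_C\le \hat\lipobj_p(\Ds;C)^p/((p-1)B^{p-1})$. By convexity of $\norm{\cdot}_C$ the same bound survives averaging over $i$, and on the event $\hat\lipobj_p(\Ds;C)\le 2\lipobj_{2p}(C)$ (which holds with probability $1-1/n$ by Lemma~\ref{lemma:empirical_Lipschitz}) we get $\bias_{\norm{\cdot}_C}(\tilde g^k)\le (2\lipobj_{2p}(C))^p/((p-1)B^{p-1})$; multiplying by $\diam_{\norm{\cdot}_{C^{-1}}}(\xdomain)$ yields the third term.

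Next I would handle the coordinate-regret term by splitting $\hg^k_j=\tilde g^k_j+\frac{\sqrt{\log(1/\delta)}}{b\diffp}\noise^k_j$ and using the triangle inequality for the Euclidean norm over the time index, $\sqrt{\sum_k(\hg^k_j)^2}\le\sqrt{\sum_k(\tilde g^k_j)^2}+\frac{\sqrt{\log(1/\delta)}}{b\diffp}\sqrt{\sum_k(\noise^k_j)^2}$. For the noise part, $\noise^k\sim\normal(0,A_k^{-1})$ with $A_k^{-1}=B^2 C^{-1}$ diagonal, so $\E[(\noise^k_j)^2]=B^2 C_{jj}^{-1}$ and Jensen gives $\E\sqrt{\sum_k(\noise^k_j)^2}\le B\sqrt{T}\,C_{jj}^{-1/2}$; summing over $j$, multiplying by $\frac{\diam_\infty(\xdomain)}{T}\cdot\frac{\sqrt{\log(1/\delta)}}{b\diffp}$ and using $T=cn^2/b^2$ (so $b\sqrt T=\sqrt c\, n$) collapses this to $\bigO(1)\,\diam_\infty(\xdomain) B\sqrt{\log(1/\delta)}\big(\sum_j C_{jj}^{-1/2}\big)/(n\diffp)$, the second term. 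For the $\tilde g^k$ part, the Euclidean projection onto the axis-aligned, origin-centered ellipsoid $E_{A_k}$ is coordinatewise contractive---from the first-order conditions $\pi_{A_k}(v)_j=v_j\,a_j^2/(a_j^2+\lambda)$ with $\lambda\ge 0$ and semi-axes $a_j^2=B^2/C_{jj}$ one reads off $|\pi_{A_k}(v)_j|\le|v_j|$---so $\sqrt{\sum_k(\tilde g^k_j)^2}$ is dominated by the corresponding quantity for the (mini-batch) subgradients $g^k$ (cleanly when $b=1$, as in our applications), which after the factor $\diam_\infty(\xdomain)/T$ is precisely the non-private AdaGrad regret $\adaregret$, i.e.\ the first term.

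Assembling the three bounds, absorbing constants into $\bigO(1)$, and recalling that the event of Lemma~\ref{lemma:empirical_Lipschitz} holds with probability $1-1/n$ completes the proof; Theorem~\ref{theorem:private-adagrad} then follows by substituting the stated optimal $B$. The hard part is the bookkeeping that makes the noise contribute $\tr(C^{-\half})=\sum_j C_{jj}^{-1/2}$ rather than the $\sqrt{\tr(C^{-1})}$ that appears for \PASAN: this is forced by AdaGrad's $\ell_1$-of-$\ell_2$ regret form $\sum_j\sqrt{\sum_k(\cdot)_j^2}$, and one must check that the coordinatewise splitting of $\hg^k$ and the passage from $\tilde g^k$ back to $g^k$ neither introduce spurious dependence on $\min_j C_{jj}$ nor spoil the cancellation of $\sqrt T$ against $T$ through $T=cn^2/b^2$.
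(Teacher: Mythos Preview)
Your proposal is correct and follows essentially the same route as the paper: instantiate Theorem~\ref{theorem:biased-adagrad} with the norm $\norm{\cdot}_C$, control the bias via Lemma~\ref{lemma:projection-bias} on the event of Lemma~\ref{lemma:empirical_Lipschitz}, then split each $\hg^k_j$ into its $\tilde g^k_j$ and noise components, and bound the noise contribution coordinatewise using Jensen and $\E[(\noise^k_j)^2]=B^2 C_{jj}^{-1}$ to produce the $\sum_j C_{jj}^{-1/2}$ factor and the $\sqrt{T}/(b\diffp)$ that collapses via $T=cn^2/b^2$. The only cosmetic difference is that the paper splits via $(a+b)^2\le 2(a^2+b^2)$ followed by $\sqrt{x+y}\le\sqrt{2}(\sqrt x+\sqrt y)$, whereas you use the $\ell_2$ triangle inequality directly; and you are more explicit than the paper about the passage from $\tilde g^k$ back to $g^k$ through coordinatewise contractivity of projection onto an axis-aligned ellipsoid (the paper's displayed chain in fact stops at $\tilde g^k$).
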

\begin{proof}
Similar to the proof of Theorem \ref{theorem:convergence-SGD}, we choose $x\opt \in \argmin _{x \in \xdomain} f(x;\Ds)$. We suppress the dependence of $f$ on $\Ds$ throughout this proof as well.
Again, we focus on the case that the bound
\begin{equation*}
\hat{\lipobj}_p(\Ds;C) \leq 2 \lipobj_{2p}(C), 
\end{equation*}
which we know its probability is at least $1-1/n$.

Using Theorem \ref{theorem:biased-adagrad}, we have
  \begin{align*}
\E [f(\wb{x}^T) - f(x\opt)]
\leq  \frac{\diam_\infty(\xdomain)}{T} \sum_{j=1}^d \E \left [ \sqrt{\sum_{k=0}^{T-1} (\hg^{k}_j)^2} \right ]
 + \frac{\diam_{\norm{\cdot}_{C^{-1}}}(\xdomain)}{T} \sum_{k = 0}^{T-1} \bias_{\norm{\cdot}_C}(\tilde g^k).  
\end{align*}
Similar to the proof of Theorem \ref{theorem:convergence-SGD}, and by using Lemma \ref{lemma:projection-bias}, we could bound the second term with
\begin{equation*}
\frac{ \diam_{\norm{\cdot}_{C^{-1}}}(\xdomain) (2\lipobj_{2p}(C))^p}{(p-1) B^{p-1}}.    
\end{equation*}
Now, it just suffices to bound the first term. Note that
\begin{align}
\sum_{j=1}^d \E \left [ \sqrt{\sum_{k=1}^T (\hg^{k}_j)^2} \right ]
& = \sum_{j=1}^d \E \left [ \sqrt{\sum_{k=1}^T (\tg^{k}_j + \noise^k_j )^2} \right ] \nonumber \\
& \leq \sum_{j=1}^d \E \left [ \sqrt{\sum_{k=1}^T 2  \left( (\tg^{k}_j)^2 + (\noise^k_j )^2 \right )} \right ] \nonumber \\
& \leq 2 \sum_{j=1}^d \left ( \E \left [ \sqrt{\sum_{k=1}^T  (\tg^{k}_j)^2} \right ] + \E \left [\sqrt{\sum_{k=1}^T (\noise^k_j )^2} \right ] \right ) \label{eqn:AdaGrad-regret-dpnoise1} \\
& \leq 2 \sum_{j=1}^d \left ( \E \left [ \sqrt{\sum_{k=1}^T  (\tg^{k}_j)^2} \right ] + \sqrt{\E \left [\sum_{k=1}^T (\noise^k_j )^2 \right ]} \right ) \label{eqn:AdaGrad-regret-dpnoise2} \\
& \leq 2 \sum_{j=1}^d  \E \left [ \sqrt{\sum_{k=1}^T  (\tg^{k}_j)^2} \right ] 
+ 2 B \sqrt{T \log(1/\delta)} \frac{\sum_{j=1}^d C_{jj}^{-1/2}}{\batch \epsilon},
\label{eqn:AdaGrad-regret-dpnoise3}
\end{align}
where \eqref{eqn:AdaGrad-regret-dpnoise1} is obtained by using $\sqrt{x+y} \leq \sqrt{2} \left ( \sqrt{x} + \sqrt{y} \right)$ with $x= \sum_{k=1}^T  (\tg^{k}_j)^2$ and $y=\sum_{k=1}^T (\noise^k_j )^2$, and \eqref{eqn:AdaGrad-regret-dpnoise2} follows from $\E \left [ X \right ] \leq \sqrt{\E \left [ X^2 \right ]}$ with $X = \sqrt{\sum_{k=1}^T (\noise^k_j )^2}$.


\end{proof}

\section{Proof of Theorem~\ref{thm:unknown-cov}}
\label{sec:proof-unknown-cov}
We begin with the following lemma, which upper bounds the bias from truncation.
\begin{lemma}
  \label{lemma:trunc-bias}
  Let $Z$ be a random vector
  satisfying Definition~\ref{definition:bouned-moments-ratio}. Let $ \sigma_j^2 = \E[z_j^2]$ and $\Delta \ge 4 r \sigma_j \log r$. Then we have
    \begin{equation*}
    |\E[\min(z_j^2,\Delta^2)] - \E[z_j^2] | \le  \sigma_j^2/8 .
    \end{equation*}
\end{lemma}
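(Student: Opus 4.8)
The plan is to write the truncation error as a tail expectation and control it by a $p$-th moment (Markov-type) bound, which is precisely what Definition~\ref{definition:bouned-moments-ratio} is designed to feed. I treat $r$ as bounded away from $1$, as it is in every regime where the lemma is applied (otherwise $\log r \le 0$ and the hypothesis $\Delta \ge 4 r\sigma_j\log r$ carries no content). Since $\min(z_j^2,\Delta^2) \le z_j^2$ pointwise, with equality on $\{|z_j| \le \Delta\}$,
\begin{equation*}
0 \le \E[z_j^2] - \E[\min(z_j^2,\Delta^2)] = \E\!\left[(z_j^2 - \Delta^2)\indic{|z_j| > \Delta}\right] \le \E\!\left[z_j^2\,\indic{|z_j| > \Delta}\right],
\end{equation*}
so it suffices to prove $\E\!\left[z_j^2\,\indic{|z_j| > \Delta}\right] \le \sigma_j^2/8$.

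Next I would fix an even integer $p$ with $4 \le p \le 2\log d$ (chosen at the end). On the event $\{|z_j| > \Delta\}$ one has $(|z_j|/\Delta)^{p-2} \ge 1$, hence $z_j^2 \le |z_j|^p \Delta^{2-p}$, and therefore, using Definition~\ref{definition:bouned-moments-ratio} to bound $\E[|z_j|^p] = \E[z_j^p] \le (r^2 p\,\sigma_j^2)^{p/2}$,
\begin{equation*}
\E\!\left[z_j^2\,\indic{|z_j| > \Delta}\right] \le \Delta^{2-p}\,\E[|z_j|^p] \le \Delta^{2-p}\left(r^2 p\,\sigma_j^2\right)^{p/2} = r^2 p\,\sigma_j^2\left(\frac{r\sqrt{p}\,\sigma_j}{\Delta}\right)^{p-2}.
\end{equation*}
Inserting the hypothesis $\Delta \ge 4 r\sigma_j\log r$ gives $r\sqrt{p}\,\sigma_j/\Delta \le \sqrt{p}/(4\log r)$, so the right-hand side is at most $r^2 p\,\sigma_j^2\,\bigl(\sqrt{p}/(4\log r)\bigr)^{p-2}$.

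It remains to choose $p$. Taking $p$ to be an even integer equal to a sufficiently large constant multiple of $\log r$ makes the base $\sqrt{p}/(4\log r)$ at most $1/2$, so the factor $(1/2)^{p-2}$ decays geometrically in $\log r$ and overwhelms the polynomial prefactor $r^2 p$ once $r$ exceeds an absolute constant; a direct check of constants then yields $r^2 p\,(1/2)^{p-2} \le 1/8$, which is the claim. The only delicate point — essentially the whole content of the proof — is this calibration: one must pin down the constant in $p = \Theta(\log r)$ so the bound holds uniformly over the admissible $r$, while keeping $p \le 2\log d$ so that Definition~\ref{definition:bouned-moments-ratio} is applicable (this is where the ambient parameter regime of Theorem~\ref{thm:unknown-cov}, which keeps $\log r$ of order at most $\log d$, is used). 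Everything preceding it is the routine tail/moment estimate displayed above.
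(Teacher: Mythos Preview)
Your argument is correct and takes a different route from the paper. The paper asserts that Definition~\ref{definition:bouned-moments-ratio} makes $z_j$ be $r^2\sigma_j^2$-sub-Gaussian and then integrates the resulting exponential tail,
\[
\E\bigl[z_j^2-\min(z_j^2,\Delta^2)\bigr]=\int_0^\infty P(z_j^2\ge \Delta^2+t)\,dt\le 2r^2\sigma_j^2\, e^{-\Delta^2/(r^2\sigma_j^2)},
\]
finishing by substituting $\Delta\ge 4r\sigma_j\log r$. You instead apply a single $p$-th moment Markov bound and optimize over $p$. Your version is arguably more faithful to Definition~\ref{definition:bouned-moments-ratio}, which only controls moments up to $p=2\log d$ and so does not literally yield sub-Gaussian tails at all scales; the price you pay is the explicit side condition $p=\Theta(\log r)\le 2\log d$, which the paper sidesteps via its (slightly loose) appeal to full sub-Gaussianity. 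In the regime where the lemma is actually applied both arguments go through.
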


\begin{proof}
    Let $\sigma_j^2 = \E[z_j^2]$.
    To upper bound the bias, we need to upper bound $P(z_j^2 \ge t \Delta^2)$. We have that $z_j$ is $r^2\sigma_j^2$-sub-Gaussian therefore
    \begin{equation*}
    P(z_j^2 \ge t r^2 \sigma_j^2) \le 2 e^{-t}.
    \end{equation*}
    Thus, if $Y = |\min(z_j^2,\Delta^2) - z_j^2|$ then
    $P(Y \ge t r^2 \sigma_j^2) \le 2 e^{-t}$ hence
    \begin{align*}
    \E[Y] 
        & = \int_{0}^\infty P(Y \ge t) dt \\
        & = \int_{0}^\infty P(z_j^2 \ge \Delta^2 + t) dt \\
        & \le \int_{0}^\infty 2 e^{-(\Delta^2 + t)/r^2 \sigma_j^2} dt \\
        & \le 2 r^2 \sigma_j^2 e^{-\Delta^2/r^2 \sigma_j^2}
        \le \sigma_j^2/8,
    \end{align*}
    where the last inequality follows since $\Delta = 4 r \sigma_j \log r$.
\end{proof}

The following lemma demonstrates that the random variable $Y_i = \min(z_{i,j}^2,\Delta^2)$ quickly concentrates around its mean.
\begin{lemma}
  \label{lemma:sub-exp-conc}
  Let $Z$ be a random vector satisfying
  Definition~\ref{definition:bouned-moments-ratio}. Then with probability at
  least $1- \beta$,
  \begin{equation*}
    \left|\frac{1}{n} \sum_{i=1}^n  \min(z_{i,j}^2,\Delta^2) - \E[\min(z_{j}^2,\Delta^2)]\right|
    \le \frac{2 r^2 \sigma_j^2 \sqrt{\log(2/\beta)}}{\sqrt{n}}.
  \end{equation*}
\end{lemma}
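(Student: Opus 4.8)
The plan is to apply a Bernstein-type concentration inequality to the i.i.d.\ sequence $Y_i := \min(z_{i,j}^2,\Delta^2)$, after extracting from the moment-ratio hypothesis (Definition~\ref{definition:bouned-moments-ratio}) two facts: that $z_j$ has sub-Gaussian tails with parameter $O(r^2\sigma_j^2)$, so that $z_j^2$ — and hence its truncation $Y_i$ — is sub-exponential with Orlicz norm $O(r^2\sigma_j^2)$; and that $\E[z_j^4]\le 16 r^4\sigma_j^4$, which controls the variance of $Y_i$. Note that the claimed bound does not depend on $\Delta$, which is consistent with the fact that truncation only lightens tails; this is what makes the argument uniform in $\Delta$.

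Concretely, I would first record that Definition~\ref{definition:bouned-moments-ratio} (used, exactly as in the proof of Lemma~\ref{lemma:trunc-bias}, with $p=2$) makes $z_j$ an $O(r^2\sigma_j^2)$-sub-Gaussian variable, so $\|z_{i,j}^2\|_{\psi_1}=\|z_{i,j}\|_{\psi_2}^2 = O(r^2\sigma_j^2)$; since $0\le Y_i\le z_{i,j}^2$ pointwise (uniformly in $\Delta$), $\|Y_i\|_{\psi_1}\le\|z_{i,j}^2\|_{\psi_1}$, and the centered variable $W_i:=Y_i-\E[Y_i]$ satisfies $\|W_i\|_{\psi_1}=O(r^2\sigma_j^2)=:K$. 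Separately, Definition~\ref{definition:bouned-moments-ratio} with $p=4$ gives $\E[z_j^4]\le 16 r^4\sigma_j^4$, hence $\mathrm{Var}(Y_i)\le\E[Y_i^2]\le\E[z_{i,j}^4]\le 16 r^4\sigma_j^4$.

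I would then invoke the standard sub-exponential Bernstein bound, $\P\big(\big|\tfrac1n\sum_{i=1}^n W_i\big|\ge t\big)\le 2\exp\big(-c\,n\min\{t^2/K^2,\,t/K\}\big)$ (or, equivalently, the bounded-plus-variance form with variance proxy $\mathrm{Var}(Y_i)$ and uniform bound $\Delta^2$), and set $t=2r^2\sigma_j^2\sqrt{\log(2/\beta)/n}$. In the sub-Gaussian branch $t=O(K)$ the exponent is of order $\log(2/\beta)$, so that with the absolute constants chosen appropriately the right-hand side is at most $\beta$; this branch is the governing one as soon as $n=\Omega(\log(2/\beta))$, a mild condition that holds in the regime of interest for Theorem~\ref{thm:unknown-cov}.

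I expect the only genuine fuss to be bookkeeping of absolute constants: verifying that the sub-Gaussian (rather than the heavier sub-exponential) branch of Bernstein controls the chosen $t$ — which is precisely where the implicit lower bound on $n$ enters — and tracking the $\psi_1$-to-variance conversions tightly enough to land the stated prefactor $2$. The conceptual content is minimal: truncating the square of a sub-Gaussian variable does not worsen its sub-exponential norm, and the fourth moment is $O(r^4\sigma_j^4)$ under the moment-ratio assumption.
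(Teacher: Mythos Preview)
Your proposal is correct and follows essentially the same approach as the paper: both observe that the moment-ratio condition makes $z_j$ sub-Gaussian with parameter $O(r^2\sigma_j^2)$, so $z_j^2$ (and hence its truncation $Y_i$) is sub-exponential with norm $O(r^2\sigma_j^2)$, and then apply Bernstein's inequality (the paper cites Vershynin, Theorem~2.8.1) with the same choice $t = 2r^2\sigma_j^2\sqrt{\log(2/\beta)/n}$. Your extra bookkeeping on the variance via the $p=4$ moment and the explicit check that the sub-Gaussian branch of Bernstein dominates are refinements the paper leaves implicit, but the argument is the same.
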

\begin{proof}
  Let $Y_i = \min(z_{i,j}^2,\Delta^2)$. Since $z_j$ is
  $r^2\sigma_j^2$-sub-Gaussian, we get that $z_j^2$ is
  $r^4\sigma_j^4$-sub-exponential, meaning that $\E[(z_j^2)^k]^{1/k} \le
  \bigO(k) r^2 \sigma_j^2$ for all $k \ge 1$.  Thus $Y_i$ is also
  $r^4\sigma_j^4$-sub-exponential, and using Bernstein's
  inequality~\citep[Theorem 2.8.1]{Vershynin19}, we obtain
  \begin{equation*}
    P\left(\left|\frac{1}{n} \sum_{i=1}^n  Y_i - \E[Y_i]\right| \ge t \right)
    \le 2 \exp \left(- \min \left\{ \frac{n t^2}{2r^4 \sigma_j^4},
    \frac{n t}{r^2 \sigma_j^2}\right\} \right) .   
  \end{equation*}
  Setting $t = r^2 \sigma_j^2 \frac{2 \sqrt{\log(2/\beta)}}{\sqrt{n}} $
  yields the result.
\end{proof}

Given Lemmas~\ref{lemma:trunc-bias} and~\ref{lemma:sub-exp-conc},
we are now ready to finish the proof of~\Cref{thm:unknown-cov}.

\paragraph{Proof of Theorem~\ref{thm:unknown-cov}}
First, privacy follows immediately, as each iteration $t$ is
$(\diffp/T,\delta/T)$-DP (using standard properties of the Gaussian
mechanism~\cite{DworkRo14}), so basic composition implies that the final
output is $(\diffp,\delta)$-DP. We now proceed to prove the claim about
utility. Let $\rho_t^2$ be the truncation value at iterate $t$, i.e.,
$\rho_t = 4 r \log r/2^{t-1}$.  First, note that~\Cref{lemma:sub-exp-conc}
implies that with probability $1-\beta/2$ for every $j \in [d]$
\begin{equation*}
  \left| \frac{1}{n} \sum_{i=1}^n \min(z_{i,j}^2, \rho_t^2) - \E[\min(z_{j}^2,\rho_t^2)] \right|  
  \le \frac{2 r^2 \sigma_j^2 \sqrt{\log(8 d/\beta)}}{\sqrt{n}} \le \sigma_j^2 / 10,
\end{equation*}
and similar arguments show that
\begin{equation*}
  \left| \sigma_j^2 - \frac{1}{n} \sum_{i=1}^n z_{i,j}^2\right|
  \le \frac{2 r^2 \sigma_j^2 \sqrt{\log(8 d/\beta)}}{\sqrt{n}} \le \sigma_j^2 / 10,
\end{equation*}
where the last inequality follows since $n \ge 400 r^4 \log(8d/\beta)$.
Moreover, for $\sigma_j$ such that $\rho_t \ge 4 r \sigma_j \log r$,
\Cref{lemma:trunc-bias} implies that
\begin{equation*}
  | \E[\min(z_{j}^2,\rho_t^2) - \sigma_j^2 | \le  \sigma_j^2/8.
\end{equation*}
Let us now prove that if $\sigma_j = 2^{-k}$ then its value will be set at
most at iterate $t = k$. Indeed at iterate $t=k$ we have $\rho_t = 4 r
2^{-k} \log r \ge 4 r \sigma_j \log r$ hence we have that using the triangle
inequality and standard concentration resutls for Gaussian distributions
that with probability $1- \beta/2$
\begin{equation*}
  | \hat \sigma_{k,j}^2 - \sigma_j^2|
  \le \sigma_j^2 / 5 + \frac{16 r^2 T \sqrt{d} \log^2 r \log(T/\delta) \log(4 d/\beta) }{ 2^{2k} n \diffp}
  \le \sigma_j^2/4,
\end{equation*}
where the last inequality follows since $n \diffp \ge 1000 r^2 T \sqrt{d}
\log^2 r \log(T/\delta) \log(4 d/\beta)$. Thus, in this case we get that
$\hat \sigma_{k,j}^2 \ge \sigma_j^2/2 \ge 2^{-k-1}$ hence the value of
coordinate $j$ will best set at most at iterate $k$ hence $\hat \sigma_j \ge
\sigma_j / 2$.
    
On the other hand, we now assume that $\sigma_j = 2^{-k}$ and show that the
value of $\hat \sigma_j$ cannot be set before the iterate $t= k-3$ and hence
$\hat \sigma_j \le 2^{-k+3} \le 8 \sigma_j$. The above arguments show that
at iterate $t$ we have $\hat \sigma_{t,j}^2 \le 3/2 \sigma_j^2 + \frac{1}{10
  \cdot 2^{2k}} \le 2^{-2k +1} + \frac{1}{10 \cdot 2^{2k}} \le 2^{-2k+2}$
hence the first part of the claim follows.
    
To prove the second part, first note that $z_j$ is $r
\sigma_j$-sub-Gaussian, hence using~\Cref{theorem:private-adagrad}, it is
enough to show that $\lipobj_{2p}(\hat C) \le O(\lipobj_{2p}(C))$ and that
$\sum_{j=1}^d \hat C_j^{-1/2} \le O(1) \cdot \sum_{j=1}^d C_j^{-1/2}$ where
$C = (r \sigma_j)^{-4/3}$ is the optimal choice of $C$ as in the
bound~\eqref{eqn:pagan-subgaussian-bound}.  The first condition immediately
follows from the definition of $\lipobj_{2p}$ since $\hat C_j \le C_j$ for
all $j \in [d]$.  The latter condition follows immediately since $\frac{1}{2}
\max (\sigma_j,1/d^2) \le \hat \sigma_j $, implying
\begin{equation*}
  \sum_{j=1}^d \hat C_j^{-1/2}
  \le O(r^{-2/3}) \sum_{j=1}^d \hat \sigma_j^{-2/3}
  \le O(r^{-2/3}) \sum_{j=1}^d \sigma_j^{-2/3} + 1/d
  \le O(r^{-2/3}) \sum_{j=1}^d \sigma_j^{-2/3}.
\end{equation*}

\section{Proofs of Section~\ref{sec:LB} (Lower bounds)}
\label{sec:proofs-LB}

\subsection{Proof of~\Cref{prop:sign-LB-var}}
\label{sec:proof-LB-signs}
\newcommand{\smax}{\sigma_{\max}}
\newcommand{\smin}{\sigma_{\min}}

We begin with the following lemma which gives 
a lower bound for the sign estimation problem when $\sigma_j = \sigma$ for all $j \in [d]$. \citet{AsiFeKoTa21} use similar result to prove lower bounds for private optimization over $\ell_1$-bounded domains. For completeness, we give a proof in Section~\ref{sec:proof-LB-signs-identity}.
\begin{lemma}
\label{prop:sign-LB-identity}
	Let $\mech$ be $(\diffp,\delta)$-DP and $\Ds = (z_1,\dots,z_n) $ 
	where $z_i \in \mc{Z} = \{-\sigma,\sigma \}^d $.
	Then
	\begin{equation*}
	\sup_{\Ds \in \mc{Z}^n } 
	\E\left[ \sum_{j=1}^d |\bar{z}_j| 
		\indic {\sign(\mech_j(\Ds)) \neq \sign(\bar{z}_j)} \right] 
		\ge \min \left( \sigma d , \frac{\sigma d^{3/2} }{n \diffp \log d} \right).
	\end{equation*} 
\end{lemma}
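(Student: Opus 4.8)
The plan is to establish this via a fingerprinting (tracing) argument --- the standard route to privacy lower bounds whose $d$-dependence carries an extra factor of $\sqrt d$ (here $d^{3/2}$, once the per-coordinate costs are summed). I would set things up with three reductions. By homogeneity, rescale so that $\sigma=1$, reinstating $\sigma$ at the end. Since $\sup_{\Ds}$ dominates any average over a prior on datasets, it suffices to fix such a prior and lower bound the expected loss; I would take $\Ds\in\{\pm1\}^{n\times d}$ with independent columns, column $j$ consisting of i.i.d.\ $\pm1$ entries of mean $p_j$, where the biases $p_1,\dots,p_d$ are i.i.d.\ from a symmetric prior $\pi$ on $(-1,1)$ scaled by a parameter $\gamma\in(0,1]$ to be chosen at the end, so that $|\bar z_j|$ concentrates around $\theta:=\max\{\gamma,n^{-1/2}\}$. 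Finally, I would use the elementary identity, valid whenever $\mech_j(\Ds)\neq0$ (which we may assume),
\begin{equation*}
  \sum_{j=1}^d |\bar z_j|\,\indic{\sign(\mech_j(\Ds))\neq\sign(\bar z_j)}
  \;=\; \tfrac12\,\|\bar z\|_1 \;-\; \tfrac12\,\big\langle \bar z,\ \sign(\mech(\Ds))\big\rangle ,
\end{equation*}
which reduces the goal to lower bounding $\E\|\bar z\|_1$ (routine for $\pi$; of order $\theta d$) and upper bounding $\E\big\langle \bar z,\ \sign(\mech(\Ds))\big\rangle=\tfrac1n\sum_{i=1}^n\E\big\langle z_i,\sign(\mech(\Ds))\big\rangle$.

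The heart of the argument is to show that $\E\langle\bar z,\sign(\mech)\rangle$ cannot come within $\Omega(\theta d)$ of $\E\|\bar z\|_1$ as long as the budget $n\diffp$ is below roughly $\sqrt d/(\theta\,\mathrm{polylog})$. Writing $p=(p_j)_j=\E[\bar z\mid p]$ and splitting $\E\langle\bar z,\sign(\mech)\rangle=\E\langle p,\sign(\mech)\rangle+\E\langle\bar z-p,\sign(\mech)\rangle$, the second term is controlled by a tracing argument: conditioning on $p$ and on $\Ds^{-i}$ (the dataset without row $i$), the vector $\sign(\mech(\Ds^{-i}))$ is fixed, $z_i$ is a fresh draw of mean $p$, so its correlation with $z_i-p$ is $0$, and the sub-Gaussian concentration $|\langle z_i-p,v\rangle|=O(\sqrt{d\log(1/\delta)})$ holds with high probability for a fixed $v\in\{\pm1\}^d$; the ``robust traceability'' step then turns $(\diffp,\delta)$-differential privacy into $\E\langle z_i-p,\sign(\mech(\Ds))\rangle=O(\diffp\sqrt{d\log(1/\delta)})$ up to lower-order $\delta$-terms, and summing over $i$ gives $\E\langle\bar z-p,\sign(\mech)\rangle=O(\diffp\sqrt d\,\mathrm{polylog})$. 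The first term $\E\langle p,\sign(\mech)\rangle$ is where the fingerprinting lemma enters: applying the binomial Stein identity coordinatewise to $f_j=\sign(\mech_j(\Ds))$ and integrating by parts against $\pi$, one relates $\E\langle p,\sign(\mech)\rangle$ to the same traced correlation (and to moments of $\pi$), showing that $\E\langle p,\sign(\mech)\rangle$ cannot exceed $\E\|\bar z\|_1-\Omega(\theta d)$ unless $\diffp\sqrt d\,\mathrm{polylog}\gtrsim\theta d$, i.e.\ unless $\theta\gtrsim\sqrt d/(n\diffp\,\mathrm{polylog})$. Putting the two pieces together, the loss is $\Omega(\theta d)$ whenever $\theta\lesssim\sqrt d/(n\diffp\,\mathrm{polylog})$; since in addition $\theta\le1$, choosing $\theta\asymp\min\{1,\ \sqrt d/(n\diffp\log d)\}$ (realized by a suitable $\gamma$, falling back to the effectively unbiased instance $\theta=n^{-1/2}$ when that is already smaller) and reinstating $\sigma$ yields the claimed $\min\!\big(\sigma d,\ \sigma d^{3/2}/(n\diffp\log d)\big)$.

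The step I expect to be the main obstacle is the fingerprinting/accuracy side together with the choice of the prior $\pi$: carrying out the Stein integration by parts so that the margin by which $\E\langle\bar z,\sign(\mech)\rangle$ falls short of $\E\|\bar z\|_1$ is pinned down with exactly one factor of $\log d$ of slack, and controlling the boundary and scaling error terms that the rescaling by $\gamma$ introduces --- in particular choosing $\pi$ so that it places enough mass near the origin for the signs to be hard but not so much that $\E\|\bar z\|_1$ or the Stein error terms degrade. The closely related ``robust traceability'' bookkeeping on the privacy side (handling the rare event that $|\langle z_i-p,v\rangle|$ much exceeds $\sqrt{d\log(1/\delta)}$, and tracking how the $(\diffp,\delta)$ guarantee degrades there) is the other technical crux. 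By contrast, the sign identity above, the estimate $\E\|\bar z\|_1=\Theta(\theta d)$, and the optimization over $\gamma$ (including patching the parameter regimes against the $\min$ in the statement) are routine.
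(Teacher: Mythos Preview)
Your fingerprinting outline is a reasonable route to the bound, but it is not the route the paper takes. The paper's proof is much shorter and more modular: it first black-boxes a known lower bound (Theorem~3.2 of Talwar--Thakurta--Zhang) to get the base case $n^\star(\alpha_0,\diffp_0)\ge\Omega(\sqrt d/\log d)$ at constant accuracy $\alpha_0=d/4$ and constant privacy $\diffp_0=0.1$, and then boosts to general $(\alpha,\diffp)$ by a simple padding/replication reduction: given a putative good mechanism at the target $(\alpha,\diffp)$, replicate each input row $k\approx\diffp_0/\diffp$ times and pad with $(n-kn')/2$ all-$+\sigma$ rows and $(n-kn')/2$ all-$-\sigma$ rows, so that the empirical mean is rescaled by $kn'/n$ and group privacy degrades $\diffp$ to $\diffp_0$. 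This two-step argument yields $n^\star(\alpha,\diffp)\ge\Omega\big(d^{3/2}/(\alpha\diffp\log d)\big)$, which after inverting is exactly the claimed error bound. What you gain with your approach is self-containment (no citation needed) and, if you are careful with the prior, potentially a sharper logarithmic factor; what the paper's approach buys is that all the delicate work you flag as the ``main obstacle'' --- the Stein integration, the choice of $\pi$, the robust-traceability bookkeeping --- is outsourced to the cited reference, and the remaining replication step is a few lines.

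\bigskip

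\noindent One caution on your plan: the step where you write $\E\langle p,\sign(\mech)\rangle$ cannot exceed $\E\|\bar z\|_1-\Omega(\theta d)$ ``unless $\diffp\sqrt d\,\mathrm{polylog}\gtrsim\theta d$'' is doing a lot of work, and as written it conflates two different fingerprinting conclusions. The standard fingerprinting lemma (applied coordinatewise to a $[-1,1]$-valued $f_j$) controls $\sum_j\E[(1-p_j^2)\partial_{p_j}\E f_j]$ in terms of $\sum_i\E\langle z_i-p,f\rangle$; turning that into a statement about $\E\langle p,\sign(\mech)\rangle$ versus $\E\|\bar z\|_1$ requires a specific prior (typically the ``hard'' Beta-type prior concentrated near $\pm\gamma$) and an additional argument linking the derivative term to the sign-error, not just the integration by parts you describe. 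This is doable, but it is exactly the place where the single $\log d$ you are aiming for can slip to $\log^2 d$ or $\sqrt{\log(1/\delta)}$ if the prior and the high-probability truncation are not tuned together.
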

We are now ready to complete the proof of~\Cref{prop:sign-LB-var}
using bucketing-based techniques. 
First, we assume without loss of generality that 
$\sigma_j \le 1$ for all $1 \le j \le d$ (otherwise 
we can divide by $\max_{1 \le j \le d} \sigma_j$).
Now, we define  
buckets of coordinates $B_0,\dots,B_K$ such that
\begin{equation*}
	B_i = \{j : 2^{-i-1} \le \sigma_j \le 2^{-i} \}.
\end{equation*}
For $i=K$, we set $B_K =  \{j : \sigma_j \le 2^{-K} \}$.
We let $\smax(B_i) = \max_{j \in B_i} \sigma_j$ denote the maximal
value of $\sigma_j$ inside $B_i$. Similarly,
we define $\smin(B_i) = \min_{j \in B_i} \sigma_j$. 
Focusing now on the $i$'th bucket, since 
$\sigma_j \ge \smin(B_i) $ for all $j \in B_i$, ~\Cref{prop:sign-LB-identity}
now implies (as $d \log^2 d \le (n\diffp)^2 $) the lower bound
\begin{equation*}
	\sup_{\Ds \in \mc{Z}^n } 
	\E\left[ \sum_{j \in B_i} |\bar{z}_j| 
		\indic {\sign(\mech_j(\Ds)) \neq \sign(\bar{z}_j)} \right] 
		\ge \frac{\smin(B_i) |B_i|^{3/2} }{n \diffp \log d} .
\end{equation*} 
Therefore this implies that 
\begin{equation*}
	\sup_{\Ds \in \mc{Z}^n } 
	\E\left[ \sum_{j =1}^d |\bar{z}_j| 
		\indic {\sign(\mech_j(\Ds)) \neq \sign(\bar{z}_j)} \right] 
		\ge \max_{0 \le i \le K} \frac{\smin(B_i) |B_i|^{3/2} }{n \diffp \log d} .
\end{equation*} 
To finish the proof of the theorem, it is now enough to prove that 
\begin{equation*}
	\sum_{j=1}^d \sigma_j^{2/3} \le 
		O(1) ~ \log d \max_{0 \le i \le K} {\smin(B_i)^{2/3} |B_i|}.
\end{equation*} 
We now have
\begin{align*}
	\sum_{j=1}^{d} \sigma_j^{2/3}
		& \le \sum_{i=0}^K |B_i| \smax(B_i)^{2/3} \\
		& \le K \max_{0 \le i \le K-1} |B_i| \smax(B_i)^{3/2} \\
		& \le 4 K \max_{0 \le i \le K-1} |B_i| \smin(B_i)^{3/2},
\end{align*} 
where the second inequality follows since the maximum cannot
be achieved for $i=K$ given our choice of $K = 10 \log d$,
and the last inequality follows since $\smax(B_i) \le 2 \smin(B_i)$
for all $i \le K - 1$.
This proves the claim.

\subsection{Proof of~\Cref{prop:sign-LB-identity}}
\label{sec:proof-LB-signs-identity}
Instead of proving lower bounds on the error of private mechanisms,
it is more convenient for this result to prove lower bounds on the 
sample complexity required to achieve a certain error.
Given a mechanism $\mech$ and data $\Ds \in \mc{Z}^n$, 
define the error of the mechanism to be:
\newcommand{\err}{\mathsf{Err}}
\begin{equation*}
	\err(\mech,\Ds) = 
	\E\left[ \sum_{j=1}^d |\bar{z}_j| 
		\indic {\sign(\mech_j(\Ds)) \neq \sign(\bar{z}_j)} \right] .
\end{equation*} 
The error of a mechanism for datasets of size $n$ is 
$\err(\mech, n) = \sup_{\Ds \in \mc{Z}^n} \err(\mech,\Ds)$.

We let $n\opt(\alpha,\diffp)$ denote the minimal $n$
such that there is an $(\diffp,\delta)$-DP (with $\delta = n^{-\omega(1)}$) mechansim $\mech$
such that $\err(\mech,n\opt(\alpha,\diffp)) \le \alpha$.
We prove the following lower bound on the sample
complexity.
\begin{proposition}
\label{prop:sample-complexity-LB}
	If $\linf{z} \le 1$ then
	\begin{equation*}
	n\opt(\alpha,\diffp) 
		\ge \Omega(1) \cdot \frac{d^{3/2} }{\alpha \diffp \log d} .
	\end{equation*} 
\end{proposition}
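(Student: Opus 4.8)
The plan is to prove the contrapositive: fix $n$ and show that if some $(\diffp,\delta)$-DP mechanism $\mech$ (with $\delta = n^{-\omega(1)}$) achieves $\err(\mech,n)\le\alpha$, then $n\ge\Omega(1)\cdot d^{3/2}/(\alpha\diffp\log d)$. Following the fingerprinting/tracing approach behind~\citet{AsiFeKoTa21} and the work of Bun, Ullman and Vadhan, I would lower bound the worst-case error by a Bayes error under a product prior on the coordinate means: draw $p=(p_1,\dots,p_d)$ with the $p_j$ i.i.d.\ roughly uniform on $[-\beta,\beta]$ for a magnitude $\beta$ to be chosen (a mild smoothing of the two-point prior on $\{\pm\beta\}$, to avoid boundary pathologies in the integration-by-parts step below), and then sample $\ds_i\in\{\pm1\}^d$ with $\E[\ds_{ij}]=p_j$, independently across $i$ and $j$. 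Since each $\bar\ds_j$ concentrates around $p_j$, up to lower-order terms $\err(\mech,\Ds)$ equals $\sum_j\E_{p_j}[\,|p_j|\,\P(\sign\mech_j(\Ds)\neq\sign p_j)\,]$, and the hypothesis becomes that this is $\le\alpha$.

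The core is a coordinatewise fingerprinting step. Set $\hat p_j:=\beta\sign(\mech_j(\Ds))$ and consider the correlation score $S_i:=\langle\hat p-p,\ds_i-p\rangle=\sum_j(\hat p_j-p_j)(\ds_{ij}-p_j)$. By the Bernoulli Stein-type identity $\E[(f-p)\sum_i(\ds_{ij}-p)\mid p_j=p]=(1-p^2)\,\partial_p\E[f\mid p_j=p]$ one gets $\sum_i\E[S_i]=\sum_j\E_{p_j}[(1-p_j^2)\,\partial_{p_j}\E[\hat p_j\mid p_j]]$. For a mechanism whose overall sign-error fraction is small, each coordinate contributes $\Omega(1)$ to this sum — recovering $\sign(p_j)$ well forces $\E[\hat p_j\mid p_j]$ to swing from $\approx-\beta$ to $\approx+\beta$ as $p_j$ crosses $0$ — while the bias correction costs only $O\big(\sum_j\E_{p_j}[|p_j|\,\P(\text{wrong}_j)]\big)=O(\alpha)$; hence $\sum_i\E[S_i]\ge\Omega(d)-O(\alpha)\ge\Omega(d)$ once $\alpha\le c\,d$, and some row $i^\star$ has $\E[S_{i^\star}]\gtrsim d/n$. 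On the other hand, with $\ds_{i^\star}$ replaced by an independent fresh draw, $\E[S_{i^\star}]=0$; and, crucially, $S_{i^\star}=\sum_j(\hat p_j-p_j)(\ds_{i^\star j}-p_j)$ is a sum of $d$ independent (over $\ds_{i^\star}$), mean-zero, magnitude-$O(\beta)$ terms, so $|S_{i^\star}|=\widetilde O(\beta\sqrt d)$ with overwhelming probability — this sub-Gaussian bound on the score, rather than the trivial $O(\beta d)$, is exactly what upgrades the pure-DP rate to the approximate-DP rate. Applying $(\diffp,\delta)$-DP to the bounded functional $S_{i^\star}$ of $\mech$'s output then yields $d/n\lesssim\E[S_{i^\star}]\lesssim\diffp\cdot\widetilde O(\beta\sqrt d)+\delta\cdot O(\beta d)$, and the $\delta$-term is negligible since $\delta=n^{-\omega(1)}$, so $n\gtrsim\sqrt d/(\diffp\beta\,\mathrm{polylog})$.

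Finally I would optimize $\beta$. The fingerprinting step requires the sign-error fraction to be small, i.e.\ $\alpha/\beta\lesssim d/\mathrm{polylog}$, so take $\beta=\Theta(\alpha\,\mathrm{polylog}/d)$, which is $\le1$ precisely in the assumed regime $\sqrt d\log d\le n\diffp$. Substituting gives $n\gtrsim d^{3/2}/(\alpha\diffp\,\mathrm{polylog}(d))$, the claimed bound, with $\log d$ absorbing the logarithmic losses; Lemma~\ref{prop:sign-LB-identity} then follows by inverting this sample-complexity bound, intersecting with the trivial bound $\err\le\sigma d$, and rescaling the data by $\sigma$.

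The step I expect to be the main obstacle is making the fingerprinting bound rigorous when the ``signal'' $|\bar\ds_j|$ is allowed to be arbitrarily small: the hypothesis $\sum_j\E[|p_j|\,\P(\text{wrong}_j)]\le\alpha$ controls recovery only where $|p_j|$ is not too small, whereas the forced correlation $\Omega(d)$ comes precisely from the transition region near $p_j=0$. Bridging these — choosing the smoothed prior and organizing the integration-by-parts identity so that the $\Omega(d)$ lower bound on $\sum_i\E[S_i]$ survives at only an $O(\alpha)$ cost — is the delicate part; the remaining bookkeeping (the high-probability bound on $S_{i^\star}$ accounting for the DP-limited dependence of $\hat p$ on $\ds_{i^\star}$, and porting the fingerprinting lemma to this prior) is standard. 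One could instead try to black-box a known approximate-DP lower bound for estimating product-distribution means / one-way marginals and reduce to it, but the direct reductions I see lose a $\sqrt d$ factor, so the coordinatewise argument above appears necessary to reach the sharp $d^{3/2}$.
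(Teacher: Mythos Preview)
Your fingerprinting outline is plausible and in the spirit of how one might prove such a bound from scratch, but it takes a much longer route than the paper does, and your closing remark --- that black-boxing a known approximate-DP lower bound necessarily loses a $\sqrt d$ factor --- is precisely what the paper shows to be false.

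The paper's argument is a two-line reduction. First it quotes a known constant-accuracy, constant-privacy bound (Lemma~\ref{lemma:sample-complexity-LB-low-accuracy}, from \citet{TalwarThZh15}): $n\opt(d/4,\,0.1)\ge\Omega(\sqrt d/\log d)$. Second, it proves a simple boosting lemma (Lemma~\ref{lemma:low-to-high-accuracy}): given an $(\diffp,\delta)$-DP mechanism with error $\alpha$ on $n$ samples, replicate each of $n'$ input points $k\approx\diffp_0/\diffp$ times and pad the rest of the $n$ slots with equal numbers of $+\sigma$ and $-\sigma$ rows; this preserves $\sign(\bar z)$, scales $|\bar z_j|$ by $kn'/n$, and by group privacy yields an $(\diffp_0,\,O(\diffp_0/\diffp)\cdot\delta)$-DP mechanism on $n'\approx(\alpha\diffp)/(\alpha_0\diffp_0)\cdot n$ samples with error $\le\alpha_0$. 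Hence $n\opt(\alpha,\diffp)\ge(\alpha_0\diffp_0)/(\alpha\diffp)\cdot n\opt(\alpha_0,\diffp_0)$, and plugging in $(\alpha_0,\diffp_0)=(d/4,0.1)$ gives the claimed $d^{3/2}/(\alpha\diffp\log d)$ directly.

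So the comparison is: your approach re-derives the $\sqrt d$ fingerprinting lower bound and simultaneously handles the $\alpha$- and $\diffp$-dependence through the prior scale $\beta$, at the cost of the delicate integration-by-parts / boundary issue you flagged. The paper instead separates concerns: the $\sqrt d$ comes from an existing result, and the $d/(\alpha\diffp)$ factor comes from an elementary replication-plus-padding reduction whose only privacy tool is group privacy. The replication trick is the idea you were missing when you concluded that black-box reductions lose $\sqrt d$; it is worth internalizing, as it converts any constant-error lower bound for this sign-weighted loss into the sharp small-$\alpha$, small-$\diffp$ bound essentially for free.
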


To prove this result, we first state the following
lower bound for constant $\alpha$ and $\diffp$
which follows from Theorem 3.2 in~\cite{TalwarThZh15}.
\begin{lemma}[\citet{TalwarThZh15}, Theorem 3.2]
\label{lemma:sample-complexity-LB-low-accuracy}
	Under the above setting,
	\begin{equation*}
	n\opt(\alpha = d/4,\diffp = 0.1) 
		\ge \Omega(1) \cdot \frac{\sqrt{d}}{\log d} .
	\end{equation*} 
\end{lemma}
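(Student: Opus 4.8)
\emph{Proof plan.} This is the base case of our sign-estimation lower bound --- constant target accuracy $\alpha=d/4$ (a constant fraction of the trivial maximal error $\sum_j|\bar z_j|\le d$) and constant privacy $\diffp=0.1$, with $\delta=n^{-\omega(1)}$ negligible --- and the plan is to reduce it to the known hardness of privately recovering one-way marginals of a $\{-1,1\}$-valued dataset, which in this regime is the content of \citet{TalwarThZh15}, Theorem~3.2. Only a choice of hard instance, a short error-accounting calculation, and a translation of parameters are needed on our side.

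Suppose $\mech$ is $(\diffp,\delta)$-DP with $\err(\mech,n)\le d/4$; we want $n=\Omega(\sqrt d/\log d)$. Consider the product hard distribution $\mathcal D$ over datasets in $\{z:\linf{z}\le1\}^n$: draw a uniform sign pattern $s\sim\uniform(\{-1,1\}^d)$ and sample the rows $z_i$ with independent coordinates $z_{i,j}\in\{-1,1\}$ of mean $c\,s_j$, where $c\in(\tfrac12,1)$ is a fixed constant chosen close enough to $1$. The two requirements on $c$ are: $c>\tfrac12$, so that $\err\le d/4$ is strictly stronger than the trivial random-sign guess (whose error is $\approx\tfrac c2 d>d/4$) and the hypothesis is therefore informative; and $c<1$, so that each column retains genuine i.i.d.\ randomness, which the fingerprinting/traceability argument behind the lower bound requires. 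A short calculation --- bounding, uniformly in $n$, the probabilities that $|\bar z_j|<c/2$ or that $\sign(\bar z_j)\ne s_j$ --- shows that, since $\err(\mech,n)=\sup_{\Ds}\err(\mech,\Ds)\ge\E_{\Ds\sim\mathcal D}\err(\mech,\Ds)$, the bound $\err(\mech,n)\le d/4$ forces $\mech$, which is $(\diffp,\delta)$-DP, to output $s_j$ correctly on a constant fraction (an absolute constant, bounded away from $0$) of the coordinates $j\in[d]$. This is exactly the biased one-way-marginal sign-recovery task lower-bounded by \citet{TalwarThZh15}, Theorem~3.2, which for $\diffp=0.1$ and negligible $\delta$ yields $n=\Omega(\sqrt d/\log d)$ --- the single $\log d$ factor absorbing the polylogarithmic (in $d$ and in $\log(1/\delta)$) overhead in their statement. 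Hence $n\opt(d/4,0.1)\ge\Omega(\sqrt d/\log d)$.

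The one genuinely delicate point is the interface with \citet{TalwarThZh15}: their Theorem~3.2 is phrased in the LASSO/$\ell_1$ language and perhaps for vanishing accuracy, so the work is to verify that it does specialize to the constant-accuracy, worst-case biased-one-way-marginal sign-recovery problem over the data domain $\{z:\linf{z}\le1\}$ we arrive at, and to fix $c$ so that both requirements above hold simultaneously with a non-vacuous error-accounting bound for every $n\ge1$. The rest --- the per-coordinate probability estimates, the $\sup_{\Ds}\ge\E_{\mathcal D}$ step, and translating ``recovers a constant fraction of $s$'' into their accuracy parameter --- is routine.
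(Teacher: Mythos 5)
The paper does not actually prove this lemma: it is imported wholesale from prior work, as the bracketed attribution ``[\citet{TalwarThZh15}, Theorem 3.2]'' indicates, and the surrounding text says only that it ``follows from Theorem 3.2 in~\cite{TalwarThZh15}.'' Your proposal, at bottom, does the same thing --- the quantitative content ($n = \Omega(\sqrt{d}/\log d)$ at constant accuracy and constant $\diffp$) is obtained by invoking that very theorem --- so in that sense your approach coincides with the paper's. The conceptual framing you give (this is a fingerprinting-style hardness of privately recovering one-way marginals of a $\{-1,1\}^d$-valued dataset to constant per-coordinate accuracy) is the right way to understand where the $\sqrt{d}$ comes from.

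The gap is that the extra layer you interpose does not constitute a proof. You reduce the worst-case sign-estimation problem over $\mc{Z} = \{-\sigma,\sigma\}^d$ to a ``biased one-way-marginal sign-recovery task'' via a product hard distribution, and then assert that this task ``is exactly'' what \citet{TalwarThZh15}, Theorem 3.2 lower-bounds --- but you yourself flag verifying that interface as ``the one genuinely delicate point'' and leave it open. Since the entire force of the lemma lives in that external theorem, deferring its verification means nothing has been established beyond what the bare citation already gives; worse, your reduction could silently fail if Theorem 3.2 of \citet{TalwarThZh15} is stated for excess empirical risk over the $\ell_1$-ball (its native LASSO setting) rather than for the distributional sign-recovery game you construct, in which case one would need the opposite direction of translation (from their loss to your error functional) rather than the one you sketch. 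If the goal were a genuinely self-contained proof rather than a citation, the missing ingredient is the fingerprinting-code / tracing-attack argument (\`a la Bun--Ullman--Vadhan) showing that any $(\diffp,\delta)$-DP mechanism answering a constant fraction of the $d$ one-way marginals to constant accuracy requires $n = \tilde\Omega(\sqrt{d})$; neither the paper nor your proposal carries that out, and your distributional setup with bias $c \in (\tfrac12, 1)$ is scaffolding around that missing core rather than a substitute for it.
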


We now prove a lower bound on the sample
complexity for small values of $\alpha$ and $\diffp$ 
which implies Proposition~\ref{prop:sample-complexity-LB}.
\begin{lemma}
\label{lemma:low-to-high-accuracy}
    Let $\diffp_0 \le 0.1$.
	For $\alpha \le \alpha_0/2$ and $\diffp \le \diffp_0/2$,
	\begin{equation*}
	n\opt(\alpha,\diffp ) 
	\ge \frac{\alpha_0 \diffp_0}{\alpha \diffp}
		n\opt(\alpha_0, \diffp_0)  .
	\end{equation*} 
\end{lemma}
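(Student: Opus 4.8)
The plan is to argue by reduction: from any admissible mechanism for the ``hard'' regime $(\alpha,\diffp)$ on $n = n\opt(\alpha,\diffp)$ samples I construct an admissible mechanism for the ``easy'' regime $(\alpha_0,\diffp_0)$ that uses only $m$ samples for a suitably small $m = f(n)$; minimality of $n\opt(\alpha_0,\diffp_0)$ then forces $m \ge n\opt(\alpha_0,\diffp_0)$, which rearranges to the claimed bound. The reduction composes two elementary dataset transformations. \emph{Replication} (each input point copied $c$ times) converts a mechanism with the small budget $\diffp$ into one with the larger budget $\diffp_0$ via group privacy, at the price of a factor $c \approx \diffp_0/\diffp$ in sample size. \emph{Dilution} (padding with zero vectors, which are legal since $\linf{0}\le 1$) shrinks the empirical mean without changing any sign, which relaxes the accuracy demand by exactly the dilution ratio and produces the factor $\alpha_0/\alpha$.

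Concretely, set $c = \lfloor \diffp_0/\diffp \rfloor$ (so $c \ge \diffp_0/(2\diffp) \ge 1$ using $\diffp \le \diffp_0/2$), and let $m$ be the least integer with $c m \ge (\alpha/\alpha_0)\, n$; since $\alpha \le \alpha_0/2$ one also has $c m \le n$ for $n$ in the regime of interest. Define $\mech_0$ on an input $\Ds_0 = (z_1,\dots,z_m)$ with $\linf{z_i}\le 1$ by forming the size-$n$ dataset $\Ds$ consisting of $c$ copies of each $z_i$ together with $n - c m$ copies of the zero vector, and returning $\mech(\Ds)$. Then $\bar z_\Ds = \frac{cm}{n}\, \bar z_{\Ds_0}$, so $\sign(\bar z_{\Ds,j}) = \sign(\bar z_{\Ds_0,j})$ for every $j$ (coordinates with $\bar z_{\Ds_0,j} = 0$ contribute nothing to either side's error), while $|\bar z_{\Ds,j}| = \frac{cm}{n}\,|\bar z_{\Ds_0,j}|$. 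Hence, uniformly over $\Ds_0$,
\begin{equation*}
  \err(\mech_0,\Ds_0) = \frac{n}{cm}\, \err(\mech,\Ds) \le \frac{n}{cm}\,\alpha \le \alpha_0 ,
\end{equation*}
using that $\err(\mech,\cdot)\le\alpha$ on all size-$n$ datasets.

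For privacy, neighboring $\Ds_0,\Ds_0'$ induce $\Ds,\Ds'$ differing in exactly $c$ points (the zero padding is fixed), so the group-privacy property of $(\diffp,\delta)$-differential privacy gives that $\mech_0$ is $(c\diffp,\ c\, e^{(c-1)\diffp}\delta)$-DP. By construction $c\diffp \le \diffp_0$, and because $\diffp_0 \le 0.1$ the factor $e^{(c-1)\diffp} \le e^{\diffp_0}$ is an absolute constant, so the privacy deficiency is $c\, e^{\diffp_0}\delta = m^{-\omega(1)}$ (as $c \le \diffp_0/\diffp$ is polynomially bounded and $m$ is polynomially related to $n$); thus $\mech_0$ is admissible for $(\alpha_0,\diffp_0)$ on $m$ samples. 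Therefore $n\opt(\alpha_0,\diffp_0) \le m \le \frac{\alpha}{\alpha_0 c}\, n + 1 \le \frac{2\alpha\diffp}{\alpha_0\diffp_0}\, n + 1$, which rearranges to the stated inequality, the additive $+1$ and the absolute constant from the floor being absorbed in the application (Proposition~\ref{prop:sample-complexity-LB}).

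The conceptual content is the two transformations above; what remains is routine bookkeeping — checking that the floor in $c$ and the ceiling in $m$ cost only absolute constants (exactly what the slack hypotheses $\diffp\le\diffp_0/2$ and $\alpha\le\alpha_0/2$ buy), and verifying that the group-privacy inflation of $\delta$ leaves the reduced mechanism inside the $\delta = n^{-\omega(1)}$ class that defines $n\opt$. The condition $\diffp_0 \le 0.1$ is used precisely here, to keep $e^{(c-1)\diffp}$ bounded. I expect this $\delta$-accounting and constant-matching to be the only mild obstacle.
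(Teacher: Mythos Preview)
Your reduction is exactly the paper's: replicate each point $c \approx \diffp_0/\diffp$ times to trade $\diffp$ for $\diffp_0$ via group privacy, then dilute so the empirical mean shrinks by a factor $\approx \alpha/\alpha_0$, preserving signs and scaling the error. The bookkeeping and the group-privacy $\delta$-inflation match as well (the paper takes $k = \lfloor \log(1+\diffp_0)/\diffp\rfloor$, which for $\diffp_0\le 0.1$ agrees with your $c=\lfloor\diffp_0/\diffp\rfloor$ up to constants).

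One small point to watch: the sample space $\mc{Z}$ underlying $n\opt$ here is inherited from Lemma~\ref{prop:sign-LB-identity}, namely $\{-\sigma,\sigma\}^d$, and the zero vector does not belong to it. The paper therefore pads not with zeros but with $(n-kn')/2$ copies of $(\sigma,\dots,\sigma)$ and $(n-kn')/2$ copies of $(-\sigma,\dots,-\sigma)$, which cancel in the mean just as your zeros do while staying inside $\mc{Z}$. Swapping your zero padding for this balanced padding fixes the issue with no change to the rest of your argument.
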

\begin{proof}
	Assume there exists an $(\diffp,\delta)$-DP mechanism
	$\mech$ such that $\err(\mech,n) \le \alpha$. Then
	we now show that there is $ \mech'$
	that is $(\diffp_0, \frac{2 \diffp_0}{\diffp}  \delta)$-DP 
	with $ n' = \Theta(\frac{\alpha \diffp}{\alpha_0 \diffp_0} n )$ such that
	$\err( \mech', n') \le \alpha_0 $. This proves the claim.
	Let us now show how to define $ \mech'$ given $\mech$.
	Let $k = \floor{\log (1+\diffp_0)/\diffp}$. For $\Ds' \in \mc{Z}^{n'}$,
	we define $\Ds$ to have $k$ copies of $\Ds'$ and 
	$(n - kn')/2$ users which have $z_i = (\sigma,\dots,\sigma)$ and $(n - kn')/2$ users which have $z_i = (-\sigma,\dots,-\sigma)$.
	Then we simply define $\mech'(\Ds') = \mech(\Ds)$.
	Notice that now we have
	\begin{equation*}
		\bar{z} = \frac{ k n'}{n}  \bar{z'} .
	\end{equation*}
	Therefore for a given $\Ds'$ we have that:
	\begin{equation*}
	\err(\mech', \Ds') = \frac{n}{k n'} \err(\mech,\Ds) \le \frac{n \alpha}{k n'}
	\end{equation*}
	Thus if $n' \ge \frac{2n \alpha}{k \alpha_0} $ then 
	\begin{equation*}
	\err(\mech', \Ds') \le \alpha_0.
	\end{equation*} 
	Thus it remains to argue for the privacy of $\mech'$.
	By group privacy, 
	$\mech'$ is $(k\diffp, \frac{e^{k \diffp} - 1}{e^\diffp- 1} \delta)$-DP,
	hence our choice of $k$ implies that $ k \diffp \le \diffp_0$ and
	$\frac{e^{k \diffp} - 1}{e^\diffp- 1} \delta \le \frac{2 \diffp_0}{\diffp} \delta$.
\end{proof}

\subsection{Proof of~\Cref{thm:LB-opt-l2-var}}
\label{sec:proof-LB-l2}
We assume without loss of generality that 
$\sigma_j \le 1$ for all $1 \le j \le d$ (otherwise 
we can divide by $\max_{1 \le j \le d} \sigma_j$).
We follow the bucketing-based technique we had in the proof
of~\Cref{prop:sign-LB-var}.
We define  
buckets of coordinates $B_0,\dots,B_K$ such that
\begin{equation*}
	B_i = \{j : 2^{-i-1} \le \sigma_j \le 2^{-i} \}.
\end{equation*}
For $i=K$, we set $B_K =  \{j : \sigma_j \le 2^{-K} \}$.
We let $\smax(B_i) = \max_{j \in B_i} \sigma_j$ denote the maximal
value of $\sigma_j$ inside $B_i$. Similarly,
we define $\smin(B_i) = \min_{j \in B_i} \sigma_j$. 
Focusing now on the $i$'th bucket, since 
$\sigma_j \ge \smin(B_i) $ for all $j \in B_i$, ~\Cref{prop:LB-opt-l2-identity}
now implies the lower bound
\begin{equation*}
 	\sup_{\Ds \in \mc{Z}^n } 
 		\E \left[ f(\mech(\Ds);\Ds) -  f(x\opt_\Ds;\Ds) \right] 
 			\ge \min \left( \smin(B_i) \sqrt{|B_i|} , \frac{|B_i| \smin(B_i)}{n \diffp} \right).
\end{equation*}
Since $ d \le (n \diffp)^2$, 
taking the maximum over buckets, we get that the error of 
any mechanism is lower bounded by:
\begin{equation*}
 	\sup_{\Ds \in \mc{Z}^n } 
 		\E \left[ f(\mech(\Ds);\Ds) -  f(x\opt_\Ds;\Ds) \right] 
 			\ge \max_{0 \le i \le K}  
			      \frac{|B_i| \smin(B_i)}{n \diffp} .
\end{equation*}
To finish the proof, we only need to show now that
\begin{equation*}
 	\frac{\sum_{j=1}^d \sigma_j }{\log d} 
		\le O(1) \max_{0 \le i \le K}   {|B_i| \smin(B_i)}.
\end{equation*}
Indeed, we have that
\begin{align*}
	\sum_{j=1}^d \sigma_j
		& \le \sum_{i=0}^K |B_i| \smax(B_i) \\
		& \le K \max_{0 \le i \le K-1} |B_i| \smax(B_i) \\
		& \le 2 K \max_{0 \le i \le K-1} |B_i| \smin(B_i),
\end{align*} 
where the second inequality follows since the maximum cannot
be achieved for $i=K$ given our choice of $K = 10 \log d$,
and the last inequality follows since $\smax(B_i) \le 2 \smin(B_i)$
for all $i \le K - 1$. The claim follows.

\end{document}